\documentclass[journal,onecolumn]{IEEEtran}	
\IEEEoverridecommandlockouts
\usepackage{cite}
\usepackage{amsmath,amssymb,amsfonts}
\usepackage{graphicx}
\usepackage{textcomp}
\usepackage{xcolor}
\usepackage{hyperref}
\usepackage{amsbsy}
\usepackage{amsthm}
\usepackage{algorithmic}
\usepackage[]{algorithm}
\usepackage{enumerate}
\usepackage{caption}
\usepackage{subcaption}
\usepackage{multirow}
\usepackage{tabu}
\usepackage[normalem]{ulem}
\usepackage{array}
\newcolumntype{P}[1]{>{\centering\arraybackslash}m{#1}}

\providecommand{\abs}[1]{\lvert#1\rvert}
\renewcommand{\b}[1]{\ensuremath{\mathbf{#1}}} 
\renewcommand{\c}[1]{\ensuremath{\mathcal{#1}}} 
\newcommand{\Ex}[1]{\ensuremath{\mathbb{E}[#1]}}  
\newcommand{\norm}[1]{\ensuremath{\left\|#1\right\|}} 
\newcommand{\ns}[1]{\ensuremath{\left\|#1\right\|^2}} 
\newcommand{\tb}[1]{\ensuremath{\tilde{\mathbf{#1}}}} 
\DeclareMathOperator*{\argmin}{arg\,min} 
\providecommand{\ip}[2]{\langle #1, #2 \rangle} 
\newcommand{\col}[1]{\textcolor{blue}{#1}} 

\def \x {{\b{x}}}
\def \u {{\b{u}}}

\def \k {{\b{k}}}

\def \w {{\b{w}}}


\def \tw {{\tb{w}}}

\def \tz {{\tilde{z}}}
\def \tf {{\tilde{f}}}
\def \th {{\tilde{h}}}
\def \g {{\tilde{g}}}



\def \K {{\b{K}}}


\def \cL {{\c{L}}}
\def \cF {{\c{F}}}

\def \cX {{\c{X}}}
\def \cY {{\c{Y}}}
\def \cH {{\c{H}}}
\def \cD {{\c{D}}}

\def \cS {{\c{S}}}
\def \cU {{\c{U}}}
\def \O {{\mathcal{O}}}

\def \ctD {{\tilde{\cD}}}
\def \gh {{\hat{g}}}


\def \Ga {{\Gamma}}

\def \EE {{\mathbb{E}}}
\def \II {{\mathbb{I}}}
\def \Rn {{\mathbb{R}}}
\def \Nn {{\mathbb{N}}}
\def \PP {{\mathbb{P}}}


\DeclareMathOperator*{\argmax}{arg\,max}
\newtheorem{assumption}{Assumption}
\theoremstyle{remark}
\newtheorem{rem}{\bf Remark}
\newtheorem{theorem}{\bf Theorem}
\newtheorem{example}{\bf Example}
\newtheorem{lemma}{\bf Lemma}

\newtheorem{corollary}[theorem]{\bf Corollary}


\begin{document}
	
	\title{Sparse Representations of Positive Functions via \\ First and Second-Order Pseudo-Mirror Descent
	 \author{Abhishek Chakraborty, Ketan Rajawat, and Alec Koppel}
		\thanks{Abhishek Chakraborty is currently affiliated to NetApp, India. This work was done as part of his M.Tech thesis at IIT Kanpur, email: chakrabortyabhishek1994@gmail.com. Ketan Rajawat is with Dept. of EE, IIT Kanpur, India, email: ketan@iitk.ac.in. Alec Koppel is with Amazon, Supply Chain Optimization Technologies, Bellevue, WA, USA Email: aekoppel@amazon.com. A preliminary version of this work appeared as \cite{asilomar_paper}, but under more restrictive assumptions of convexity, and without any development of second-order methods.}
	}
	\maketitle
	\begin{abstract}
We consider expected risk minimization problems when the range of the estimator is required to be nonnegative, motivated by the settings of maximum likelihood estimation (MLE) and trajectory optimization. To facilitate nonlinear interpolation, we hypothesize that the search space is a Reproducing Kernel Hilbert Space (RKHS). We develop first and second-order variants of stochastic mirror descent employing (i) \emph{pseudo-gradients} and (ii) complexity-reducing projections. Compressive projection in the first-order scheme is executed via kernel orthogonal matching pursuit (KOMP), which overcomes the fact that the vanilla RKHS parameterization grows unbounded with the iteration index in the stochastic setting. Moreover, pseudo-gradients are needed when gradient estimates for cost are only computable up to some numerical error, which arise in, e.g., integral approximations. Under constant step-size and compression budget, we establish tradeoffs between the radius of convergence of the expected sub-optimality and the projection budget parameter, as well as non-asymptotic bounds on the model complexity. To refine the solution's precision, we develop a second-order extension which employs recursively averaged pseudo-gradient outer-products to approximate the Hessian inverse, whose convergence in mean is established under an additional eigenvalue decay condition on the Hessian of the optimal RKHS element, which is unique to this work. Experiments demonstrate favorable performance on inhomogeneous Poisson Process intensity estimation in practice.
	\end{abstract}
%
%


\section{Introduction} \label{sec:introduction}

Function interpolation over nonnegative range arises in numerous settings: trajectory optimization \cite{francis2017stochastic}, as well as maximum likelihood estimation \cite{drazek2013intensity}. In this work, we focus on nonnegative function estimation problems defined by an expected value minimization of a non-convex cost satisfying Polyak--\L{ojasiewicz} condition. Further, the feasible set is a Reproducing Kernel Hilbert Space (RKHS) \cite{berlinet2011reproducing}, motivated by their nonlinear interpolation attributes. We develop first and second-order functional variants of pseudo-mirror descent to solve this problem that, different from prior work, explicitly trade the dynamic memory or representational complexity of the RKHS estimates with their accuracy. Our main theoretical results establish trade-offs between sublinear convergence in expectation and the function complexity. The driving example throughout is to obtain accurate and efficient representations of the intensity of an inhomogeneous Poisson Process \cite{drazek2013intensity} from incrementally revealed samples.

To contextualize our approach, let us set aside the issue of range nonnegativity and the function parameterization for the moment. Finding a linear statistical model may be formulated as an expected value minimization over Euclidean space, which, when analytical expressions for the optimizer are unavailable (i.e., objective is not a quadtratic), one can employ Newton or gradient method to obtain the globally optimal solution\cite{boyd2004convex}. But, doing so hinges upon gradient information being efficiently computable, which fails for expected value objectives over an unknown distribution \cite{dall2020optimization}. For such settings, stochastic approximations are necessary \cite{slavakis2014modeling}, which use stochastic gradients (or stochastic Quasi--Newton updates) in lieu of exact updates. Their performance depends upon the properties of the unknown data distribution, and one may guarantee their performance only probabilistically  \cite{shapiro2014lectures}. In this work, we build upon a stochastic variant of proximal gradient method \cite{rockafellar2009variational}, i.e., stochastic mirror descent, which substitutes Euclidean distance by a Bregman divergence in the reformulation of gradient iteration as a local quadratic minimization. Doing so yields convergence benefits when the feasible set is structured, e.g., the probability simplex \cite{beck2003mirror}. We further develop Quasi--Newton iterations based upon tracking a matrix of recursively constructed gradient outer-products. 

To contextualize the RKHS specification, we note that
the strong guarantees for optimization over Euclidean space is belied by the fact that linear models are outperformed by universal function approximators \cite{kuurkova1992kolmogorov}. Specifically, deep neural networks (DNNs) \cite{anthony2009neural} and kernel methods \cite{hofmann2008kernel} have set the standard across computer vision \cite{krizhevsky2012imagenet} and natural language processing \cite{yu2011introduction}, among other fields. We focus on RKHS parameterizations since (i) under a suitable choice of kernel, they, in fact, can be equivalent to certain DNNs \cite{mairal2016end}; (ii) their training may explicitly trade-off the accuracy of an iterative update with a function's representational complexity \cite{kivinen2004online}; and (iii) one may directly co-design the kernel and Bregman divergence for likelihood objectives \cite{lei2020convergence}.

One challenge associated with the RKHS setting is that the function's representational complexity, owing to the Representer Theorem \cite{kimeldorf1971some}, is proportional to the sample size, which for expected risk minimization, approaches infinity. A myriad of approaches to approximating RKHS functions exist: those based upon matrix factorization \cite{williams2001using}, spectral properties of the kernel matrix \cite{cedric}, and random feature techniques \cite{rahimi2008random}. Random features may be combined with random sampling \cite{yang2017randomized} or subspace projections \cite{wang} based upon greedy compression \cite{vincent} to develop approximated online RKHS updates. Among kernel approximations, subspace projection methods that fix the per-update error rather than the complexity have been shown to obtain the strongest performance in theory and practice \cite{koppel2019parsimonious,koppel2021consistent}, and hence we adopt them here to sparsify stochastic descent directions in RKHS.

Two intertwined challenges emerge when learning nonnegative functions: enforcing the range constraint and evaluation of the stochastic gradient. Upon suitable initialization, one may ensure updates remain within the space of positive functions via specifying Bregman divergence as the KL divergence \cite{csiszar1975divergence}. However, the gradient may only be partially evaluable, as in the numerical evaluation of an integral. This issue arises in point process likelihoods \cite{yang2019learning}, as well as the case where the true kernel is a latent convolution of one's chosen kernels \cite{mairal2014convolutional}. In these cases, \emph{pseudo-gradients} \cite{poljak1973pseudogradient}, i.e., stochastic descent directions positively correlated with the true gradient, must be used. Doing so in online pseudo-mirror descent has recently achieved state of the art performance in intensity estimation in point processes \cite{yang2019learning}. In this work, we improve upon this approach by designing projected variants that yield tunable trade-offs between convergence accuracy and complexity, and in doing so, set a new standard for performance in practice among first-order schemes.

To refine the numerical precision in practice,  we also develop a Quasi--Newton iteration in mirror space that fixes the RKHS subspace dimension, which permits using any infinite-dimensional kernel such as a Gaussian, in contrast to \cite{calandriello2017second, calandriello2017efficient}. This iteration constructs Hessian approximations using recursively averaged pseudo-gradient outer-products. Overall, then, our technical contributions to:
\begin{enumerate}[(i)]
	\item formulate the expected risk minimization problem over RKHS functions with nonnegative range (Sec. \ref{sec:problem});
	\item develop a projected pseudo-mirror descent (Algorithms \ref{SPFPMD_algorithm}) that uses pseudo-gradients in the RKHS setting. To maintain a finite model order, the first-order scheme employs a compressive projection over the dual iterates defined by the Bregman divergence conjugate (Sec. \ref{sec:algorithm});
	
	\item develop a stochastic Quasi-Newton variant as Algorithm \ref{Newton_step} that employs employs a fixed-subspace projection in RKHS. Together, we obtain specific parametric updates in terms of weights and past training examples; Additionally, we fuse the merits of the dynamic and fixed-subspace iterations via a hybrid first and second-order scheme (Algorithm \ref{SPPPOT_Newton_step} in Sec. \ref{sec:algorithm_Newton});
	\item establish that the resulting sub-optimality of the first-order scheme converges linearly in expectation up to a bounded error neighborhood under constant step-size (Theorem \ref{thm_sync}), and provide a non-asymptotic complexity characterization of the learned function's parameterization (Sec. \ref{sec:convergence}). Further, we establish the convergence of the second-order scheme in Theorem \ref{thm_Newton} up to a persistent bias that arises from function approximation error;
	\item evaluate numerically (Sec. \ref{sec:experiments}) the proposed approach for fitting normalized intensity of an inhomogeneous Poisson Process on synthetic 1-D Gaussian toy dataset, real world NBA Stephen Curry dataset \cite{yang2019learning} and 2-D Chicago crime dataset \cite{flaxman2017poisson}. In all instances, we observe a favorable tradeoff relative to offline and online benchmarks.
\end{enumerate}

In addition, we show the similarity of functional dual-averaging to mirror descent in Appendix \ref{app:func_dual_averaging} and extensions of our first order algorithm for supervised learning problems and experimentation for multi-class kernel logistic regression on MNIST dataset \cite{lecun1998mnist} in Appendices \ref{app:SPPPOT_KLR} and \ref{app:KLR_simulations} respectively. We also provide an almost sure convergence proof for first order algorithms when we restrict the selection of pseudo-gradients to stochastic gradients (Appendix \ref{app:as_proof}).

\smallskip
\noindent {\bf Additional Context.} 
Most similar stochastic first-order methods in RKHS \cite{koppel2019parsimonious} mandate the Bregman divergence of mirror descent as the RKHS norm and hence violate non-negativity. Similarly \cite{yang2019learning} is an online algorithm that preserves positivity but requires the dictionary defining the functional estimate to be static at the outset of training, which incurs bias that is difficult to quantify. By contrast, our dynamic memory approach leads to a flexible representation whose error is directly tunable in theory and practice. Offline reformulations of optimization over RKHS with the non-negative range also have been encapsulated as a semidefinite program in \cite{marteau2020non, aubin2020hard, aubin2021handling}. Such approaches are not easily extendable into an incremental setting, as  each new sample introduces an additional constraint.

Quasi--Newton iterations in RKHS, rather than employing classical Broyden updates \cite{NoceWrig06}[Ch. 6], extend gradient outer-product constructions of Hessian approximations for vector-valued iterates, as in \cite{calandriello2017second, calandriello2017efficient}. These approaches fix RKHS subspace sizes to ensure gradient outer products belong to a static matrix subspace. In these works, regret bounds are developed in terms of the RKHS weight sequence, which fail to quantify the function approximation error. By contrast, our convergence results explicitly establish that the memory reduction employed to make Hessian estimates computable manifests as a persistent bias in the algorithm's radius of convergence, which we quantify in terms of the decay rate of the eigenvalues of the Hessian of the optimal objective.

%

	\section{Problem Formulation}\label{sec:problem}
\noindent {\bf Notation.} We denote functions by small scalar alphabets such as $f$, $z$ and $g$. The Hilbert space is denoted by $\cH$ and its dual space by $\cH^\ast$. All norms applied to elements of $\cH$ are Hilbert norms, henceforth denoted simply by $\norm{\cdot}$. The norm applied on the elements of the $\cH^\ast$ are represented as $\norm{\cdot}_{\ast}$.

Consider the problem of expected risk minimization in the online setting: independent identically distributed (i.i.d.) training samples  $\{\x_t\}_{t\geq 1}$ are observed in a sequential manner. Here, $\x_t \in \cX \subset \Rn^d$ denote the instances or input data points from an unknown distribution $\mathbb{P}(\x)$. Based upon this input stream, we seek to fit a (possibly unnormalized) density $f$ that belongs to a hypothesized function class $\cH$ according to its ability to minimize a loss function $\ell(f(\x))$. This loss is associated with the negative log-likelihood of a probabilistic model. We are interested in minimizing its expectation over the unknown data distribution $\mathbb{P}(\x)$, i.e.,
\begin{align} \label{ff}
	f^\star=\argmin_{f \in \cH_+} R(f) := \Ex{\ell(f(\x))} \; .
\end{align}
Throughout, we occasionally abbreviate the instantaneous cost as $r_t(f):=\ell(f(\x_t))$. We focus on the case that the range of $f$ is nonnegative, which arises inherently in the estimation of probabilistic models. An important point to underscore is the fact that the optimization of function $f$ is over the whole space $\cH$, and the positivity of $f$ is not enforced explicitly by hard constraints. Instead, by proper selection of positivity-preserving Bregman divergence (e.g., the KL divergence), one may implicitly enforce that constraint, as detailed in Sec. \ref{sec:algorithm}.

Next, we specify the function class $\cH\supset\cH_+$ to which estimators $f$ belong.  Specifically, we hypothesize that $\cH$ is a Hilbert space associated with symmetric positive definite kernel $\kappa$ that satisfies $\cH:=\overline{\text{span}(\kappa(\x,\cdot))}$, where every $f \in \cH$ can be written as linear combination of the kernel evaluations. The elements of $\cH$ also satisfy the reproducing property which states that $\ip{f}{\kappa(\x,\cdot)} = f(\x)$ for all $\x\in\cX$. Hilbert spaces of this type are called RKHS \cite{berlinet2011reproducing}. Examples range from simple kernel functions such as the Gaussian  $\kappa(\x,\x') = \exp(-\norm{\x-\x'}^2/2 \tilde{c})$ and polynomial  $\kappa(\x,\x')=(\x^\top\x+ \tilde{b})^{\tilde{c}}$, to data-dependent convolutional kernels \cite{mairal2014convolutional}.

With the objective and the search space clarified, we present a representative of Poisson intensity estimation, which is our driving application throughout.

\smallskip
\begin{example}\textit{Poisson Point Process intensity estimation: }\label{eg:poisson}
	Poisson Processes are a family of probabilistic models used to count the number of events $N(\mathcal{T})$ within an interval $\mathcal{T}\subset \mathcal{S}\subset \mathbb{R}^d$, where  $N(\cdot)$ is a stochastic process. A fundamental question that arises in its use is the intensity parameter $\lambda(\cdot)$, which determines the rate $\lambda(s)$ at which new events occur in an infinitesimally small time-increment, i.e., $\lambda(s)=\lim_{\Delta s \rightarrow 0} \mathbb{E} [N(\Delta s)]/(\Delta s)$. In inhomogeneous cases, this parameter is a nonlinear function, such that the likelihood associated with Poisson points $\{\mathbf{t}_n\}_{n=1}^N$ takes the form:
	\begin{align} \label{eq:poisson_likelihood}
		L(f) = \prod_{n=1}^N \lambda(\mathbf{t}_n) \exp\left\{-\int_{\mathcal{S}} \lambda( \mathbf{t}) d\mathbf{t} \right\} .
	\end{align}
	%
	Then, one may construct an instantiation of \eqref{ff} by considering the negative log-likelihood of \eqref{eq:poisson_likelihood}, inspired by \cite{flaxman2017poisson}:
	%
	%
	\begin{align} \label{neg_log_likelihood2}
		R(f) = - \sum_{n=1}^N \log(\lambda(\mathbf{t}_n)) + \int_{\cS} \lambda(\mathbf{t}) d\mathbf{t} ,
	\end{align}
	where one may identify that the Poisson points $\mathbf{t}_n$ play the role of $\x_n$, and $\lambda(\cdot)$ is the unknown function $f(\cdot)$ we seek to estimate which is required to be nonnegative. This instance of unsupervised learning has been studied both when $\{\mathbf{t}_n\}_{n=1}^N$ are available all at once or revealed incrementally (with $N\rightarrow \infty$), respectively, in \cite{flaxman2017poisson} and \cite{yang2019learning}.  We focus on online approaches to \eqref{neg_log_likelihood2} -- see \cite{drazek2013intensity} for further details.
	%
\end{example}

In this work, we focus on algorithms to solve \eqref{ff} by designing search directions that allow the function to move in the interior of the space of nonnegative ranged functions. Next, we present a clarifying remark about the complexity implications of searching over RKHS.

\begin{rem}\label{remark_erm} {\bf (Empirical Risk Minimization)}
	An important special case of \eqref{ff} is \emph{empirical risk minimization} (ERM) where a fixed collection of data $\cD:=\{(\x_n)_{n=1}^N\}$ is available, and we seek to find the empirical predictor
	\begin{align}\label{empirical}
		\hat{f}_N = \arg\min_{f\in\cH} \frac{1}{N}\sum_{n=1}^N r_{n}(f) .
	\end{align}
	Observe that $\hat{f}_N$ in \eqref{empirical} belongs to the Hilbert space, and hence is infinite-dimensional. However, the Representer Theorem \cite{repthm2} establishes that $\hat{f}_N$ takes the form:
	\begin{align}\label{rrt}
		\hat{f}_N(\cdot) = \sum_{n=1}^N w_{n}\kappa(\x_{n},\cdot) ,
	\end{align} 
	where $\{w_m\}_{m=1}^N$ are some real-valued weights. Substituting \eqref{rrt} into \eqref{empirical} reduces searching over $\cH$ to real-valued space $\mathbb{R}^N$:
	\begin{align}\label{eq:empirical}
		\hat{\w}_N &= \arg\min_{\w\in \mathbb{R}^N} \frac{1}{N}\sum_{n=1}^N r_{n}(\w^\top\k_{\cD}(\x_{n})) , 
	\end{align}
	where we have collected kernel evaluations $\{\kappa(\x_{n},\x_{m})\}_{n}$ into a vector called the empirical kernel map $\k_{\cD}(\x_{n}) \in \Rn^N$ and $[\kappa(\x_n,\x_m)]_{n,m=1}^{N,N}$ into the Gram, or kernel, matrix $\K_{\cD\cD} \in \Rn^{N \times N}$. 
	%
	Observe that as the sample size $N \rightarrow \infty$, due to the curse of kernelization, one must balance solving \eqref{ff} to optimality with ensuring that the memory remains bounded. In this work, we focus on \eqref{ff} with the understanding that \eqref{empirical} is a special case when the i.i.d. data $\{\x_n\}_{n=1}^N$ are sampled uniformly at random from a stationary distribution.
\end{rem}

Hereafter, we introduce some technical machinery needed for our algorithmic development in the following section.

\subsection{Bregman Divergence and Pseudo-Gradients}\label{subsec:prelim}
{\bf \noindent Bregman Divergence.} We proceed by presenting technicalities pertaining to the mirror map and the Bregman divergence before defining the proposed algorithm.
	%
	Let $\psi: \cH \rightarrow \Rn$ be a proper, closed, smooth, and strongly convex functional with respect to the RKHS norm $\left \lVert \cdot \right \rVert$\footnote{We denote the RKHS norm $\|f \|^2=\|f \|_{\mathcal{H}}^2$ of $f\in\mathcal{H}$ as $\|f \|_{\mathcal{H}}^2 = \langle f, f \rangle=\sum_{i,j} w_i, w_j \kappa(\mathbf{x}_i, \mathbf{x}_j)$, with $f$ admitting the weighted expansion $f=\sum_i w_i \kappa(\mathbf{x}_i, \cdot)$ via the Representer Theorem, with $w_i\in\mathbb{R}$.}
	. The Fenchel conjugate of $\psi$ is denoted as $\psi^\ast : \cH^\ast \rightarrow \Rn$, where $\cH^\ast$ is the dual space of $\cH$. Define the shorthand for the objective evaluated at the gradient of the dual $R_\psi(z)=(R \circ \nabla\psi^\ast)(z)=R(\nabla\psi^\ast(z))$ where $z \in \cH^\ast$. This composition allows one to write $\nabla R_\psi(\nabla \psi (f))=\nabla R(f)$, $f \in \cH$ since $\nabla \psi^\ast = (\nabla \psi)^{-1}$. This identity will be employed later in the convergence analysis.
	The purpose of defining functional $\psi$ is that it induces a distance-like Bregman divergence $B_{\psi}:\cH\times\cH\Rightarrow \Rn$ \cite{frigyik2008functional}:
	\begin{align}\label{bregsplit}
		B_{\psi}(f,\tf):=\psi(f)-\psi(\tf)- \langle \nabla \psi(\tf), f-\tf\rangle_{\cH}.
	\end{align}
	The functional Bregman divergence in \eqref{bregsplit} behaves mostly like its vector-valued counterpart, i.e., it satisfies non-negativity, strong-convexity in the first argument, as well as a generalized Pythagorean theorem. We present a few examples next.
	\begin{enumerate}[i)]
		\item \textit{KL-divergence or I-divergence:}\label{kl_divergence} Let $\cH$ be the space of probability density functions, constructed from a radial kernel such as Gaussian, Student-t, or Laplacian, such that $f$ takes positive values and integrates to $1$. Defining $\psi(f) = \ip{f}{\log(f)-1}_{\cH}$ similar to \cite{yang2019learning} yields $B_{\psi}(f,\tilde{f}) = \ip{f}{\log(f/\tf)}_{\cH}$ if $f$ is absolutely continuous with respect to $\tf$. For this case, $B_{\psi}$ is the KL-divergence or I-divergence associated with the convex map $\psi(f)$  \cite{csiszar1975divergence}.
		\item \textit{Squared RKHS norm difference:}\label{squared_difference} The choice $\psi(f) = \frac{1}{2}\ns{f}$ leads to $B_{\psi}(f,\tf) = \frac{1}{2}\ns{f-\tf}$. 
		\item \textit{Squared Mahalanobis difference:}\label{squared_mahalanobis_distance} Let $\cL$ be a compact self-adjoint operator, then the choice $\psi(f) = \frac{1}{2}\ip{\cL f}{f}_{\cH}$ results in the Bregman divergence $B_{\psi}(f,\tf)) = \frac{1}{2}\ip{\cL(f-\tf)}{f-\tf}_{\cH}^2$, which generalizes the squared difference. For the simple case when $\cL$ is a diagonal operator, it amounts to pointwise multiplication with a weight function.
	\end{enumerate}
	{\bf \noindent Pseudo-Gradient.}  With Bregman divergence clarified, we shift to defining pseudo-gradients, which are those directions $g_t$ with nonnegative expected (unnormalized) cosine similarity with the gradient $\nabla_f R(f)$:
	\begin{align}\label{inner_product_pseudograd}
		\langle \nabla R(f), \EE[g_t | \cF_t] \rangle \geq 0 ,
	\end{align}
	where $g_t$ is the pseudo gradient associated with sample $\x_t$ and $\cF_t$ denotes the past sigma algebra which contains all the past data points one iteration back, i.e. $\{\x_i\}_{i=1}^{t-1}$. Equivalently, a pseudo-gradient $g_t$ is any search direction that forms an acute angle with the original gradient $\nabla R(f)$ in the dual space, as stated in \cite{poljak1973pseudogradient}, and may be used in lieu of the true gradient when its evaluation is costly or intractable \cite{yang2019learning}. Next we present a few instances to build intuition.
	\begin{enumerate}[i)]
		\item \textit{Stochastic Gradients:} If $g_t$ is the stochastic gradient of $R(\cdot)$ at $f_t$, then $\EE[g_t | \cF_t] = \nabla R(f_t)$. So the inner product $\langle \nabla R(f_t), \EE[g_t | \cF_t] \rangle = \ns{\nabla R(f_t)}_{\ast} \geq 0$.
		\item \textit{Kernel embedding:} $\kappa:\cX \times \cX \rightarrow \Rn_{+}$ is a symmetric and positive definite kernel. So $\ip{G}{\ip{\kappa}{G}} \geq 0$ for all vector $G \in \cH$. Now $g_t$ defined as $g_t = \ip{\kappa(\x_t,\cdot)}{\nabla R(f_t)}$ is a pseudo-gradient since \eqref{inner_product_pseudograd} is satisfied by the symmetric and positive definiteness property as given above.
		\item \textit{Gradient sign:} Define the gradient as $g_t = \text{sgn}(\nabla R(f_t))$ where sgn denotes the sign operator. Then for all data points $\x \in \cX$, the inner product of \eqref{inner_product_pseudograd} is given as $\ip{\nabla R(f_t)}{\text{sgn}(\nabla R(f_t))} = \int_{\cX} \vert \nabla R(f_t(\x)) \vert d\x \geq 0$.
	\end{enumerate}
	%
	%
	The Bregman divergence and pseudo-gradient are employed in the following section for our main algorithm development.

%

	\section{Projected Pseudo-Mirror Descent}\label{sec:algorithm}

Now we shift to deriving an iterative approach to solving \eqref{ff} via a functional extension of mirror descent \cite{agarwal2011distributed}. We select a Bregman divergence that ensures positivity of the range of the function $f$ during optimization, such as the KL divergence, and present a generalization of a gradient called a ``pseudo-gradient" \cite{poljak1973pseudogradient}. The merit of employing pseudo-gradients is the ability to define approximate search directions that arise in point process intensity estimation as stated in Example \ref{eg:poisson}, as well as a broader ability to incorporate samples into the functional representation through a kernel embedding. 

To define our iterative approach to solving \eqref{ff} based upon streams of samples, we first define a first-order functional variant of stochastic mirror descent using pseudo-gradients. Next, we develop for a Quasi--Newton kernelized algorithm in mirror space. While theoretically we can only show it matches the convergence rate of its first-order counterparts, experimentally it achieves state of the art accuracy for inhomogeneous intensity estimation in Poisson Processes (Sec. \ref{sec:experiments}).
%
%
%
\subsection{First-Order Method for Sparse Positive Functions}\label{subsec:spppot}
We build upon first-order functional variant of stochastic mirror descent:
\begin{align}\label{md0}
	f_{t+1}=\arg \min \limits_{f \in \cH} &\Big(\ip{\nabla r_t(f_t)}{f}_{\cH} + \frac{1}{\eta_t} B_\psi (f,f_t) \Big),
\end{align}
where $\eta_t>0$ is a nonnegative step-size that may be constant or diminishing, but we subsequently fix as constant $\eta_t=\eta$.
Note that $B_{\psi}(f,\tilde{f})=\frac{1}{2}\|f-\tilde{f}\|^2$ reduces \eqref{md0} to functional SGD. Moreover, for \eqref{md0} to be implementable, we require closed-form expressions for Bregman divergence. Note that for the squared Mahalanobis distance, the update takes the form $f_{t+1} = f_t - \eta\cL^{-1}\nabla r_t(f_t)$, where the inverse $\cL^{-1}$ must be computable in closed-form. Worth mentioning is also the fact that finding functions with non-negative range \eqref{ff} necessitates specifying the Bregman divergence to preserve positivity.

We prioritize the use of pseudo-gradients $g_t$ in lieu of stochastic gradients $\nabla r_t(f_t)$ in \eqref{md0} since they inherently arise in evaluating gradient estimates of likelihood models arising in Poisson Processes. The resultant iteration then takes the form starting from function $f_t$ using samples $\{\x_t\}$: 
\begin{align}\label{md1}
	\tf_{t+1}=\arg \min \limits_{f \in \cH} \Big(\ip{g_t}{f}_{\cH} + \frac{1}{\eta} B_\psi (f,f_t) \Big),
\end{align}
where $g_t$ is any pseudo-gradient \eqref{inner_product_pseudograd} at sample $\x_t$.

\noindent {\bf Mirror Descent with Pseudo-Gradients.}
To clarify how \eqref{md1} is executable in practice, suppose that true stochastic gradient: 
\begin{align} \label{stochasticgrad2}
	\nabla r_t(f_t)= \ell'(f_t(\x_t)) \kappa(\x_t, \cdot)
\end{align}
is used in the mirror descent update [cf. \eqref{md0}] instead of the pseudo-gradient, i.e.,
%
%
%
$g_t=\nabla r_t(f_t)$.
%
For this case, the first-order optimality condition for \eqref{md1} may be written as
\begin{align}
	g_t+ \frac{1}{\eta} \nabla_{\tilde{f}_{t+1}} B_\psi (\tilde{f}_{t+1},f_t) = 0 . \label{I_div_opt}
\end{align}
Noting the fact $\nabla_{\tilde{f}_{t+1}} B_\psi (\tilde{f}_{t+1},f_t) = \nabla \psi(\tilde{f}_{t+1}) - \nabla \psi(f_t)$, \eqref{I_div_opt} can be expressed as 
\begin{align} \label{FPMD1}
	g_t+ \frac{1}{\eta}(\nabla \psi(\tf_{t+1}) - \nabla \psi(f_t)) = 0 .
\end{align}
Define $\tz_{t+1} = \nabla \psi(\tf_{t+1})$ and $z_t = \nabla \psi(f_t)$\footnote{Subsequently, we note that $\{z_t\}\subset\cH_\ast$, i.e., $z_t$ is an element of the dual space $\cH_\ast$ of the RKHS $\cH$ defined at the outset of Sec. \ref{sec:problem}.}, which after rearranging \eqref{FPMD1}, yields 
\begin{align}\label{FPMD2}
	\tz_{t+1} = z_t - \eta g_t .
\end{align}
Observe that $z_t = \nabla \psi(f_t)$, which implies that the function f can be recovered through the gradient of the conjugate $f_t = \nabla \psi^\ast (z_t)$. Specifically, $\psi^\ast$ is the Fenchel conjugate of $\psi$ and  $\nabla \psi^\ast = (\nabla \psi)^{-1}$. Hence the update \eqref{FPMD2} becomes
\begin{align}
	\tz_{t+1} 
	&= z_t - \eta \ell'(\nabla \psi^\ast(z_t),\x_t) \kappa(\x_t, \cdot) \label{upeqgen} ,
\end{align}
where \eqref{upeqgen} is derived via applying the chain rule and the reproducing property of the kernel to $\nabla r_t(\nabla \psi^\ast(z_t))$ analogous to \eqref{stochasticgrad2}. Thus, the functional update is executable parametrically via a data set $\mathcal{D}_{t}$ consisting of $t-1$ points that grows by one per time step with an associated coefficient vector $\mathbf{w}_{t}$:
\begin{align}\label{mddict}
	\!\!\!	\cD_{t+1} \!=\! \cD_{\!t} \! \cup \{\x_t\}  , \ 
	&[\mathbf{w}_{t+1}]_n \!=\!\!\begin{cases}
		\! [\mathbf{w}_{t}]_n & \!\! n<t \!\\
		\!-\eta \ell'(\nabla \psi^\ast(z_t\!),\x_t) & \!\! n=t ,\!
	\end{cases}\!\!\!
\end{align}
where $[\mathbf{w}_{t}]_n$ denotes the $n$-th coordinate of the vector $\mathbf{w}_{t}$. Different from classical online learning with kernels \cite{kivinen2004online}, here the weights $[\mathbf{w}_{t}]_n$ and basis points $\cD_t$ represent the function $z_t$ associated with the gradient of the Fenchel dual in \eqref{upeqgen}.


%
We note that the function evaluation $f$ at point $\x\in\cX$ then takes the form at time $t+1$ 
\begin{align}\label{iterateconv}
	f_{t+1}(\x) = \nabla \psi^\ast (z_{t+1}(\x)) = \nabla \psi^\ast(\w_{t+1}^\top \k_{\cD_{t+1}}(\x)) .
\end{align}
A technical challenge emerges due to the fact that for general pseudo-gradients $g_t \neq \nabla r_t(f_t)$, the expression \eqref{FPMD1} cannot be evaluated in closed form. However, for the pseudo-gradient defined by the kernel embedding (ii), \eqref{mddict} takes the form 
\begin{align}\label{eq:pseudo_gradient_parametric_recursion}
	\cD_{t+1} \!=\! \cD_{t} \! \cup \!\{\!\x_t\!\} \; , \ 
	&[\mathbf{w}_{t+1}]_n\!=\!\!\begin{cases}
		\![\mathbf{w}_{t}]_n &\! \x_n\! \in\! \cD_t \\
		\!-\eta g_t^\prime &\! \x_n \!=\! \x_t ,
	\end{cases}
\end{align}
where the pseudo-gradient admits an expansion via the chain rule $g_t= g_t^\prime \kappa(\x_t,\cdot)$, where $g_t^\prime$ is the scalar component of the derivative of the cost $r_t(f)$. The kernel embedding, or any other pseudo-gradient, gives rise to the specific form of $g_t^\prime$. This condition is valid when the pseudo-gradient of the objective with respect to RKHS element $f$ respects the chain rule. \medskip

\noindent {\bf Subspace Projections.}
Owing to the RKHS parameterization in terms of weights and feature vectors $\x_t$ [cf. \eqref{eq:pseudo_gradient_parametric_recursion}], the complexity of the function grows unbounded with time $t$. To surmount this issue, we project functions onto low-dimensional subspaces near the current search direction. Doing so may be achieved via Kernel Orthogonal Matching Pursuit (KOMP)\cite{vincent}, that takes an input dictionary $\ctD_{t+1}$ and a weight vector $\tw_{t+1}$, returning a lower-dimensional (compressed) function $z_{t+1}$ with dictionary $\cD_{t+1}$ and weights $\w_{t+1}$ that are $\epsilon$-away in the RKHS norm, where $\epsilon$ denotes the compression budget. 
Overall, then, {\bf S}parse {\bf P}ositive {\bf P}rojected {\bf P}seudo-Mirr{\bf o}r Descen{\bf t} (SPPPOT), takes the form 
\begin{equation}\label{eq:main_alg}
	\{z_{t+1},\cD_{t+1},\w_{t+1}\} = \text{KOMP}(\tilde{z}_{t+1}, \ctD_{t+1},\tw_{t+1},\epsilon) ,
\end{equation}
where $\tilde{z}_{t+1}$  defined in \eqref{FPMD2} employs general pseudo-gradients. The iterative process is summarized as Algorithm \ref{SPFPMD_algorithm}. Note that different from  \cite{koppel2019parsimonious}: the RKHS-norm approximation criterion is in terms of the dual sequence $\{z_t\}$ \eqref{upeqgen} -- see Algorithm \ref{kompalgo}.

\smallskip 
%
%
%
\begin{algorithm}[t]
	\caption {SPPPOT: {\bf S}parse {\bf P}ositive Functions via  {\bf P}rojected {\bf P}seudo-Mirr{\bf o}r Descen{\bf t}}
	\label{SPFPMD_algorithm}
	\begin{algorithmic}[1]
		\REQUIRE{kernel $\kappa$, step-size $\eta$, compression parameter $\epsilon$}
		\STATE \textbf{Initialize } Arbitrary small $z_0$ 
		\FOR{$t = 1, 2, \ldots $}
		\STATE \textbf{\hspace{-2mm}Read: } data realization $\x_t$ 
		\STATE \textbf{\hspace{-2mm}Evaluate: } Pseudo Gradient $g_t$
		\STATE \textbf{\hspace{-2mm}Update: } $\tz_{t+1}$ as per \eqref{FPMD2} 
		\STATE \textbf{\hspace{-2mm}Compress: } $\{\cD_{t+1},\w_{t+1}\} = \text{KOMP}(\ctD_{t+1},\tw_{t+1},\epsilon)$
		\STATE \textbf{\hspace{-2mm}Broadcast: } $z_{t+1}$
		\ENDFOR
		\STATE \textbf{Evaluation of actual function $f_{t+1}$ at $\x$:} As per \eqref{iterateconv}
	\end{algorithmic}
\end{algorithm}

{\noindent \bf Examples.} With Algorithm \ref{SPFPMD_algorithm} defined, we next specify the updates in terms of two Bregman divergences defined in Sec. \ref{sec:algorithm}, one that ensures positivity, \eqref{kl_divergence} KL-divergence, and one that does not: \eqref{squared_difference} squared RKHS norm.
\begin{enumerate}[\hspace{-1mm}i)\hspace{-1mm}]
	\item \textit{KL-divergence or I-divergence:} As in \eqref{kl_divergence}, the update for the auxiliary variables $\tilde{z}_{t+1}$ and $z_t$ \eqref{FPMD2} have the explicit forms $\tz_{t+1} = \log(\tf_{t+1})$ and $z_t = \log(f_t)$, which may be substituted  into \eqref{FPMD2} to yield an expression for $\tilde{f}_{t+1}$ as 
	%
	\begin{align}
		\log(\tf_{t+1}) = \log(f_t) - \eta g_t,
	\end{align}
	which, following exponentiation, permits us to write
	\begin{align}\label{kldiv_raw_update}
		\tf_{t+1} = f_t \exp (-\eta g_t).
	\end{align}
	Observe from \eqref{kldiv_raw_update} that the update for $f_t$ is non-linear and hence a vanilla application of the Representer Theorem \eqref{rrt} is not possible. However, by focusing on the auxiliary sequences $\tz_{t+1} = \log(\tf_{t+1})$ and $z_t = \log(f_t)$, \eqref{kldiv_raw_update} permits linearization of the form \eqref{FPMD2}, and hence allows a parametric weight update and dictionary update with respect to the auxiliary function $\tz_{t+1}$ akin to \eqref{mddict}.
	Note that from \eqref{kldiv_raw_update}, the function $\tf_{t+1}$ can be expressed recursively as a product of functions from $f_0$ to $f_t$. So if the initialization $f_0$ is nonnegative, and the projection in \eqref{eq:main_alg} is onto the space of functions with nonnegative range, then function positivity is ensured throughout training. This is salient for Example \ref{eg:poisson}, and MLE more broadly.
	
	\begin{algorithm}[t]
		\caption {Destructive Kernel Orthogonal Matching Pursuit}
		\label{kompalgo}
		\begin{algorithmic}
			\STATE\textbf{Require:} $\tz$ in form of $(\ctD, \tw)$, budget $\epsilon$
			\STATE\textbf{Initialize:} $z=\tz$ so that $(\cD,\w)=(\ctD,\tw)$ 
			\WHILE{$\cD \neq \emptyset$}
			\FOR{$\x_j \in \cD$}
			\STATE Evaluate $\w_j^\star\!=\!\arg\min_{\w}\! \gamma_j(\w)\!\!:=\!\|\tz-\!\!\!\!\! \sum\limits_{\x_n \in \cD \setminus \{\x_j\}\!\!\!\!\!\!}\!\!\!\!\!\!\!\!w_n \kappa(\!\x_n,\!\cdot\!)\!\|_\ast $
			\ENDFOR
			\IF{$\gamma_j(\w_{j}^{\star}) > \epsilon$ for all $\x_j \in \cD$}
			\STATE \textbf{break}
			\ELSE
			\STATE Prune $\cD \leftarrow \cD \setminus \{\x_{j^\star}\}$ where $j^\star = \arg\min\gamma_j$
			\STATE Update weights $\w \leftarrow \w_{j^\star}^\star$
			\ENDIF
			\ENDWHILE
			\STATE\textbf{return} $z$, such that $\norm{z-\tz}_\ast \leq \epsilon$
		\end{algorithmic}
		\label{algo}
	\end{algorithm}  
	
	\item \textit{Contrasting example: Squared RKHS-norm difference:} With \eqref{squared_difference}, the quantities in \eqref{FPMD2} take the form $\tz_{t+1} = \nabla \psi(\tf_{t+1}) = \tf_{t+1}$ and $z_t = \nabla \psi(f_t) = f_t$, which yields 
	\begin{align}
		\tf_{t+1} = f_t - \eta g_t,
	\end{align}
	where the dictionary stacks past points [cf. \eqref{mddict}], and coefficients are updated as 
	%
	\begin{align}
		\tw_{t+1} = [\w_t, \quad -\eta g_t].
	\end{align}
	With KOMP-based subspace projections, this update is identical to 
	%
	POLK \cite{koppel2019parsimonious} if one additionally equates the pseudo-gradient with the stochastic gradient $g_t=\nabla r_t (f_t)$ [cf. \eqref{stochasticgrad2}]. For this choice of Bregman divergence, under positive initialization, positivity will be violated.
	

\end{enumerate}

A key observation from the case of KL-Divergence is that since the update for $f$ does not yield a linear basis expansion, it cannot be computed through weighted combinations of kernel evaluations \eqref{rrt}. However, it can be recovered through inverting the logarithmic transformation, i.e., $f_{t+1}(\cdot)=\exp(z_{t+1}(\cdot)) = \exp(\w_{z,t+1}^\top\k_{\cD_{z,t+1}}(\cdot))$. Consequently, as each new datum $\x$ arrives, we can obtain its evaluation under $f$ as $f_{t+1}(\x)=\exp(\w_{z,t+1}^\top\k_{\cD_{z,t+1}}(\x))$. In addition, $f_{t+1}(\x)$ will be positive since it is obtained by positivity preserving exponential transformation on $z_{t+1}(\x)$.
Observe that this phenomenon holds more broadly for any positivity preserving Bregman divergence when its update is nonlinear in $f$ and the gradient of the Fenchel dual $\nabla\psi^*(f)$ of its inducing map $\psi(f)$ is efficiently computable.

Next, we expand upon the form of Algorithm \ref{SPFPMD_algorithm}[Steps 4-5] for Poisson point process intensity estimation (Example \ref{eg:poisson}), which are employed experimentally in Sec. \ref{sec:experiments}.

\begin{example}\label{PPP_weight_updates} Observe that for \eqref{neg_log_likelihood2}, the instantaneous loss for single data for the function $f_t$ takes the form 
	\begin{align} \label{obj_func_poisson3}
		\ell(f_t(\x_t)) = - \log(f_t(\x_t)) + \int_{\cX} f_t(\x) d\x.
	\end{align}
	When no special form of $f_t$ is known for analytically, one must evaluate the integral numerically using, e.g., Bayesian quadrature or kernel smoothing. We adopt the later approach, inspired by \cite{flaxman2017poisson}: the quantity $\int_{\cX} f_t(\x) d\x$ is approximated by $h \sum_{j \in \cU} f_t(\u_j)$, where $\u_j$ are uniform grid points over the sample space $\cX$ whose indices lie in set $\cU:=\{\u_j\}$, and $h$ is the infinitesimally small uniform grid area, similar to Trapezoidal rule for discrete approximations of integrals. Then differentiating with respect to $f_t$ yields the pseudo-gradient 
	\begin{align}\label{PPP_grad}
		g_t = - \frac{1}{f_t(\x_t)}\kappa(\x_t,\cdot) + h \sum_{j =1}^{|\mathcal{U}|} \kappa(\u_j,\cdot).
	\end{align}
	\eqref{PPP_grad} is a pseudo-gradient as the second term of \eqref{PPP_grad} can be seen as an approximated version of the integral using kernel embedding. Note that for the KL divergence/I-divergence, the transformation $f_t = \exp(z_t)$ yields the pseudo-gradient $g_t$: 
	\begin{align} \label{stochastic_grad}
		g_t = - \frac{1}{\exp (z_t(\x_t))}\kappa(\x_t,\cdot) + h \sum_{j =1}^{|\mathcal{U}|} \kappa(\u_j,\cdot).
	\end{align}
	The subtlety of grid points versus Poisson points causes slight differences in the updates relative to \eqref{mddict}. Specifically, 
	in \eqref{stochastic_grad}, the point $\x_t$ are samples from the unknown Poisson process, while the points $\u_j$ are fixed grid points across updates. These components come together to specify Algorithm \ref{SPFPMD_algorithm} as follows:
	\begin{itemize}
		\item Initialize the dictionary $\ctD_{1}$ with uniform grid points $\u_j$ for all $\u_j\in \mathcal{U}$, with corresponding weight vector elements as $[\tw_{1}]_j=-\eta h$ for each $\u_j\in\cU$.
		\item Receive Poisson samples $\x_t$, compute the pseudo-gradient \eqref{stochastic_grad}, and update dictionary $\ctD_{t+1}= \cD_{t} \cup \!\{\!\x_t\!\}$. Given that the set of grid points satisfies $\mathcal{U}\subset \mathcal{D}_t$, the corresponding weight update is 
		\begin{align} \label{weight_update_poisson}
			&[\tilde{\mathbf{w}}_{t+1}]_n=\begin{cases}
				[\mathbf{w}_{t}]_n- \eta h & \x_n \in \mathcal{U} \\
				[\mathbf{w}_{t}]_n &  \x_n \notin \mathcal{U} \\
				\frac{\eta}{\exp (z_t(\x_t))} & \x_n = \x_t \; .
			\end{cases}
		\end{align}
	\end{itemize}
	KOMP is then applied to the dictionary $\ctD_{t+1}$ and weights $\tilde{\mathbf{w}}_{t+1}$ to yield $\cD_{t+1}$ and $\mathbf{w}_{t+1}$ . Since grid points $\{\u_j\}$ are required to approximate the integral in the pseudo-gradient, their presence in the dictionary is fixed. Therefore, our selection scheme discerns which Poisson samples $\x_t$ and associated weights $w_t$ are statistically significant for estimating the inhomogeneous intensity parameter $f(\cdot)=\lambda(\cdot)$ as in Example \ref{eg:poisson}.
	
	Next, we verify that \eqref{stochastic_grad} is indeed a pseudo-gradient. Following similar reasoning to\cite{yang2019learning}, the inner product between $\nabla R(f_t)$ and $\mathbb{E}[g_t \vert \mathcal{F}_t]$ can be written as $\langle \nabla R(f_t), \mathbb{E}[g_t \vert \mathcal{F}_t] \rangle = \int_{\mathcal{X}} \int_{\mathcal{X}} \left(-\frac{1}{f_t(\mathbf{x})} + 1 \right) \kappa(\mathbf{x},\mathbf{y}) \left(-\frac{1}{f_t(\mathbf{y})} + 1 \right) d \mathbf{x} d \mathbf{y}$, where we have taken a continuous domain data for algebraic simplicity and hence writing in terms of empirical sum is avoided. The above equation is lower bounded by $\zeta_{\min} \left \lVert -\frac{1}{f_t} + \mathbf{1} \right\rVert ^2$ using the positive definiteness property of the kernel, where $\mathbf{1}$ is vector of $1$'s matching the dimension of $f_t$ and $\zeta_{\min}$ is the minimum eigenvalue of the integral operator associated with $\kappa(\cdot,\cdot)$. Thus, the inner-product will be non-negative, meaning the pseudo-gradient property [cf. \eqref{inner_product_pseudograd}] holds.
\end{example}	

With Poisson intensity estimation example detailed, we next shift to incorporating approximate Hessian information.

%

\section{Second-Order Online Pseudo-Mirror Descent}\label{sec:algorithm_Newton}
Iterative updates that incorporate Hessian information, i.e., variants of Newton's Method, often surpass schemes that only use gradient information in convergence rate and numerical precision. However, they often are inapplicable to online settings due to the quadratic computational cost in the sample size of  Hessian inversion \cite{NoceWrig06}[Ch. 6]. Recently, memory-efficient Quasi--Newton schemes in the online setting based on incremental Hessian updates have been developed \cite{mokhtari2017iqn,gaoincremental}.

However, these techniques do not generalize to the RKHS setting due to their dependence on finite-dimensional parametric forms for the search space. Efforts to bridge this gap have been addressed by introducing Nystr\"{o}m approximations to the kernel matrix \cite{williams2001using}, resulting in memory-efficient schemes \cite{calandriello2017second}. Noticeably, however, Nystr\"{o}m approximation does not permit the points of approximation to be well-calibrated to the data domain. By contrast, methods which optimize the location from which kernel matrices are sampled via \emph{inducing} inputs can refine their statistical error \cite{snelson2006sparse,wang}. Hence, hereafter, we develop a Quasi--Newton scheme via inducing input approximations to the kernel matrix, which additionally employs pseudo-gradients rather than true stochastic gradients.

To continue, we denote as $\g_t$ [cf. \eqref{inner_product_pseudograd}] the pseudo-gradient defined over a fixed subspace $|\mathcal{D}|$ associated with weight vector $\mathbf{w}_t\in\mathbb{R}^{|\mathcal{D}|}$. Associated with it is a a Hessian approximation $\mathbf{A}_{t+1}\in\mathbb{R}^{|\mathcal{D}|\times |\mathcal{D}|}$ that is approximated using rank-1 outer products of pseudo-gradients observed so far as 
\begin{align} \label{hess_update}
	\mathbf{A}_{t+1} = \mathbf{A}_t + \g_t \g_t^\top = \delta \mathbf{I} + \sum_{j=1}^t \g_j \g_j^\top ,
\end{align}
%
%
%
%
where, $\delta \mathbf{I}$ is the initialized Hessian matrix $\mathbf{A}_0$. Observe that without any subspace approximation, i.e., if $\mathcal{D}=\mathcal{D}_t$ is allowed to grow unbounded as \eqref{FPMD2}, then the dimensionality of $\mathbf{A}_t$ will be $(t-1)\times (t-1)$ at step $t$. To address this issue, we fix our subspace of inputs defined by data matrix $\cD_t = \cD$ for all $t$, in contrast to the  Nystr\"{o}m approximations in \cite{calandriello2017second}. Doing so ensures sufficient coverage of data matrix $\cD$ by assigning grid points over the feature space $\mathcal{X}$. We note that to more closely adhere to the concept of inducing inputs \cite{snelson2006sparse,wang}, one could optimize the locations by searching over a space of dimensionality $|\mathcal{D}|\times \col{d}$ after training completes. With this approximation, then, \eqref{hess_update} may be implemented online when $\g_t$ is computable, which  incurs storage complexity $|\mathcal{D}|\times|\mathcal{D}|$.

Observe that with this dimensionality fixed, optimization over the dual sequence of functions $z_t$ [cf. \eqref{FPMD2}] is transformed to the search over fixed dimension of weights $\w_t \in \Rn^{\abs{\cD}}$ for all $t$, where $\abs{\cD}$ is the cardinality of the set $\cD$. Here, the pseudo-gradient, computed by differentiating the loss with respect to the fixed dimensional weights, is generically expressed as $\g_t = \g_t^\prime \k_{\cD}(\x_t)$. For the case of a stochastic gradient, this simplifies to $\g_t^\prime = \nabla_{\w_t} \ell(f_t(\x_t)) = \ell^\prime (f_t(\x_t))$. Note the fact that $\g_t\in\mathbb{R}^{|\mathcal{D}|}$. Now, the gradient outer product on the right hand side of \eqref{hess_update} can be expressed as $\g_t \g_t^\top = [\g_t^\prime]^2 \k_{\cD}(\x_t) \k_{\cD}(\x_t)^\top$, which is always positive. Hence the Hessian matrix $\mathbf{A}_{t+1}$ is always positive definite and also symmetric. The Hessian matrix estimate is  positive definite via the initialization $\delta \mathbf{I}$.

Overall, the Hessian estimate \eqref{hess_update} and the feature space approximation together yield the online Quasi--Newton update 
\begin{align} \label{wt_up}
	\w_{t+1} = \w_t - \eta_t \mathbf{A}_{t+1}^{-1} \g_t.
\end{align}
Observe that since the matrix $\mathbf{A}_t\in\mathbb{R}^{|\mathcal{D}|\times |\mathcal{D}|}$ belongs to a fixed subspace of $|\mathcal{D}|\times |\mathcal{D}|$, its inversion may be executed with complexity $|\mathcal{D}|^3$ in the worst case with a naive implementation, or with $|\mathcal{D}|^2$ through application of matrix inversion lemmas. While this is less favorable than a near-linear dependence on the parameter dimension achievable by the best memory-limited Quasi--Newton methodologies, it is far better than the computational cost of a full Newton step. In particular, computing a full Newton step for this setting would require evaluating the Hessian with respect to the $|\mathcal{D}|$-dimensional kernel truncation, which depends on doubly infinitely many realizations $\mathbf{x}_n$ of the random variable $\mathbf{x}$.
Via \eqref{wt_up}, the auxiliary functional iterates may be expressed as 
\begin{align} \label{fn_up}
	z_{t+1} = \w_{t+1}^\top \k_{\cD}(\cdot) = z_t - \eta_t \g_t^\top \mathbf{A}_{t+1}^{-1} \k_{\cD}(\cdot) .
\end{align}
\eqref{fn_up} defines a Quasi--Newton update on the dual iterate $z_t$ over the fixed-dimensional feature subspace $|\cD|$. One may recover the original function estimate via computation of the Fenchel dual \eqref{iterateconv} when the Bregman divergence admits a closed-form expression for the evaluation of the gradient of its conjugate (as is the case for KL divergence, and others). Next we specify the form of these updates for Poisson process.


\begin{algorithm}[t]
	\caption {Online Quasi--Newton Mirror Descent in RKHS}
	\label{Newton_step}
	\begin{algorithmic}[1]
		\REQUIRE{kernel $\kappa$, step-size $\eta_t$, regularizer $\delta$}
		\STATE \textbf{Initialize } Arbitrary small negative $\w_0 \in \Rn^{\abs{\cU}}$, $\mathbf{A}_0 = \delta \mathbf{I} \in \Rn^{\abs{\cU}\times\abs{\cU}}$, dictionary as grid points $\{\cD_t\} = \{\cU\}\ \forall t$.
		\FOR{$t = 1, 2, \ldots $}
		\STATE \textbf{\hspace{-2mm}Read: } sample $\x_t$
		\STATE \textbf{\hspace{-2mm}Evaluate: } Pseudo Gradient $\g_t$ as \eqref{PPP_natural_grad}
		\STATE \textbf{\hspace{-2mm}Update: } $\mathbf{A}_{t+1} = \mathbf{A}_t + \g_t \g_t^\top$
		\STATE \textbf{\hspace{-2mm}Update: } $\w_{t+1} = \w_t - \eta_t \mathbf{A}_{t+1}^{-1} \g_t$
		\STATE \textbf{\hspace{-2mm}To evaluate functional estimate, invert mirror map:}
		$$f_{t+1}(\x) = (\nabla\psi)^{-1}(z_t(\x)) = \exp(\w_{t+1} \k_{\cD_{t+1}}(\x))$$ for KL divergence.			%
		\ENDFOR
	\end{algorithmic}
\end{algorithm}

\begin{example}\textit{Quasi--Newton Poisson Intensity Estimation.}\label{eg:quasi_newton} Consider the objective in \eqref{obj_func_poisson3} that integral approximations with respect to the uniform grid points $\u_j\in\cU$. In the context of the second-order scheme, we fix the dictionary with this specification, i.e. $\cD_t = \cD = \cU$ for all $t$, meaning that no compression is required. Then, the first-order component of the update manifests by updating the auxiliary variable $z_t$ according to the gradient of the Fenchel dual \eqref{FPMD2}, as in mirror descent (natural gradient) \cite{raskutti2015information}. In particular, we rewrite the loss in \eqref{obj_func_poisson3} with respect to the auxiliary function $z_t$ as 
	\begin{align} \label{PPP_MLE_loss2}
		\ell(f_t(\x_t)) = \ell((\nabla\psi)^{-1}(z_t(\x_t))) = - \log((\nabla\psi)^{-1}(z_t(\x_t))) + h \sum_{\u_j\in\cU} (\nabla\psi)^{-1}(z_t(\u_j)) .
	\end{align}
	Specialized to the case of KL divergence, the expression $(\nabla\psi)^{-1}(z_t(\x_t))=\exp(z_t(\x_t))$ substituted into \eqref{PPP_MLE_loss2} yields 
	\begin{align} \label{PPP_MLE_loss3}
		\ell(f_t(\x_t)) = - z_t(\x_t) + h \sum_{\u_j\in\cU} \exp(z_t(\u_j)) = -\w_t^\top \k_{\cD}(\x_t) + h \sum_{\u_j\in\cU} \exp(\w_t^\top \k_{\cD}(\u_j)) .
	\end{align}
	Then, differentiating \eqref{PPP_MLE_loss3} with respect to weights $\w_t\in\mathbb{R}^{|\mathcal{U}|}$ (since $\mathcal{U}=\mathcal{D}$) yields the pseudo gradient $\g_t \in \Rn^{\abs{\cU}}$:
	\begin{align}\label{PPP_natural_grad}
		\g_t \!=\! \nabla_{\! \w_t} \ell(f_t(\x_t)) \!=\! - \k_{\cD}(\x_t) \!+\! h \! \sum_{j\in\cU} \! \exp(\w_t^\top \k_{\cD}(\u_j)) \k_{\cD}(\u_j) .
	\end{align}
	We proceed to formalizing a special case of update direction in \eqref{fn_up} and ensure it defines pseudo-gradient in the sense of \eqref{inner_product_pseudograd}. Begin by noting that we may write the pseudo-gradient as an RKHS element through appropriate multiplication with the feature map $\k_{\cD}(\cdot)$ associated with fixed dictionary $\cD$. Then, the functional gradient representation $\g_t^\top \k_{\cD}(\cdot)$ represents the derivative of the loss in \eqref{PPP_MLE_loss3} with respect to auxiliary variable $z_t$, i.e.,  $\nabla_{z_t}(\ell(f_t(x_t)))=\g_t^\top \k_{\cD}(\cdot)$. Further note that  $\nabla R_\psi(z_t) = \nabla R(f_t)$ where $f_t \in \cH$ and $z_t \in \cH_\ast$. Also denote the Gram matrix of kernel evaluations as $\K_{\cD\cD}$, whose entries are $[\kappa(\mathbf{d}_n,\mathbf{d}_m)]_{n\times m=1}^{|\cD|,|\cD|}$. We compute the conditional expectation of $\g_t^\top \k_{\cD}(\cdot) $ and $\nabla R_{\psi}(z_t)$ as
	\begin{align}\label{eq:pseudo_grad_Poisson_Newton}
		\langle \nabla R_{\psi}(z_t), \mathbb{E}[\g_t^\top \k_{\cD}(\cdot) \vert \mathcal{F}_t] \rangle = \int_{\mathcal{X}} \int_{\mathcal{X}} \left(-1 + f_t(\mathbf{x}) \right) \k_{\cD}(\x)^\top \K_{\cD\cD} \k_{\cD}(\mathbf{y}) \left(-1 + f_t(\mathbf{y}) \right) d \mathbf{x} d \mathbf{y}\; ,
	\end{align}
	which can be lower bounded by $\zeta_{\min} \left\lVert -\mathbf{1} + f_t  \right\lVert ^2 \geq 0$ by the same logic as detailed in Example \ref{PPP_weight_updates}. This provides a basis for showing our second-order step also defines a pseudo-gradient. In particular, the functional gradient and expected value of Quasi--Newton pseudo-gradient [cf. \eqref{fn_up}] given the past sigma algebra which can be expressed as
	\begin{align} \label{pseudograd_2nd_order1_Newton}
		\langle \nabla R_{\psi}(z_t), \EE[\g_t^\top \mathbf{A}_{t+1}^{-1} \k_{\cD}(\cdot) | \cF_t] = \left\langle \nabla R_{\psi}(z_t), \EE\left[\frac{\g_t^\top \k_{\cD}(\cdot)}{\delta + \sum_{j=1}^t \g_j^\top \g_j} \vert \cF_t \right] \right\rangle ,
	\end{align}
	where the matrix simplification is similar to \cite{calandriello2017efficient} since $\mathbf{A}_{t+1}$ is positive definite. Now since the inner product between $\nabla R_{\psi}(z_t)$ and $\mathbb{E}[\g_t^\top \k_{\cD}(\cdot) \vert \mathcal{F}_t]$ is non-negative and $\delta + \sum_{j=1}^t \g_j^\top \g_j$ is positive, we may conclude the quantity in \eqref{pseudograd_2nd_order1_Newton} is non-negative. Therefore, it defines a pseudo-gradient.
\end{example}

\begin{algorithm}[t]
	\caption {Hybrid First and Second-order Online Iteration}
	\label{SPPPOT_Newton_step}
	\begin{algorithmic}[1]
		\REQUIRE{kernel $\kappa$, constant step-size $\eta$, regularizer $\delta$}
		\STATE \textbf{Initialize } Arbitrary small negative $\w_0 \in \Rn^{\abs{\cU}}$ and $\{\cD_0\} = \{\cU\}$.
		\STATE \textbf{Run: } Alg. \ref{SPFPMD_algorithm} until stable model order $M^\infty$ at step $t=T$.
		\STATE Denote $(\mathbf{w}_T,\mathcal{D}_T)$ as output of Alg. \ref{SPFPMD_algorithm} on step $T$.
		%
		\STATE \textbf{Initialize } Step size $\eta_T$ and $\mathbf{A}_0 = \delta \mathbf{I} \in \Rn^{M \times M}$.
		\FOR{$k = T, T+1, \ldots $}
		\STATE \textbf{\hspace{-2mm}Read: } sample $\x_k$
		\STATE \textbf{\hspace{-2mm}Run: } Algorithm \ref{Newton_step}
		\STATE \textbf{\hspace{-2mm} To obtain function estimate, invert mirror map:} 
		$$f_{t+1}(\x) = (\nabla\psi)^{-1}(z_t(\x)) = \exp(\w_{t+1} \k_{\cD_{t+1}}(\x))$$ for KL divergence.
		\ENDFOR
	\end{algorithmic}
\end{algorithm}

The overall procedure for incorporating Quasi--Newton updates with pseudo-gradients is presented in Algorithm \ref{Newton_step}. It is essentially similar to Algorithm \ref{SPFPMD_algorithm} with the additional outer product of gradients defining a recursively constructed pre-conditioner for the pseudo-gradient updates that are executed in the dual space. We also present a hybrid algorithm as a mixture of Algorithms \ref{SPFPMD_algorithm} and \ref{Newton_step}, which is presented in Algorithm \ref{SPPPOT_Newton_step}. This combination of SPPPOT and the Quasi--Newton Mirror Step allows one to flexibly learn the appropriate dictionary to obtain a functional estimate of reasonable precision, and then employ the learned dictionary to define the approximation subspace for executing Newton steps to obtain a function estimate of maximum precision. In this case, SPPPOT employs updates based upon functional differentiation of \eqref{obj_func_poisson3}, whereas the gradient for kernel online Newton mirror step is with respect to the weights of the auxiliary function [cf. \eqref{PPP_natural_grad}] over a fixed dictionary. Next, we establish convergence and complexity tradeoffs of Algorithms \ref{SPFPMD_algorithm} and \ref{Newton_step}.

%

\section{Convergence analysis}\label{sec:convergence}
We shift towards analyzing the convergence behavior of Algorithms \ref{SPFPMD_algorithm} and \ref{Newton_step} in terms of iteratively solving \eqref{ff}, when the compression budget $\epsilon$ and the step-size parameter $\eta$ are held constant for Algorithm \ref{SPFPMD_algorithm}. For Algorithm \ref{Newton_step}, we prove the convergence under constant step-size in terms of the fixed sub-space approximation for the Hessian. 
	A key point of departure in the analysis of Algorithm \ref{Newton_step} from prior works (e.g. \cite{calandriello2017efficient, calandriello2017second}) is the need to quantify the subspace approximation error associated with a fixed dictionary as compared to employing a fully infinite dimensional feature map $\k_{\cD_{\infty}}(\cdot)$ in the Quasi-Newton direction. Denote as $\g_{\infty,t}$ and $\mathbf{A}_{\infty,t+1}$, respectively,  as corresponding infinite dimensional gradient and doubly infinite dimensional Hessian at iterate $t$. Moreover, for notational simplicity, we denote the associated Quasi-Newton directions using a fixed subspace of points $\cD$ its infinite-dimensional counterpart as
\begin{align} \label{grad_notation}
	G_t = \g_{t}^\top \mathbf{A}_{t+1}^{-1} \k_{\cD}(\cdot) \;, \;\; G_t^{(\infty)} = \g_{\infty,t}^\top \mathbf{A}_{\infty,t+1}^{-1} \k_{\cD_{\infty}}(\cdot).
\end{align}
To proceed with our analysis, some technical conditions are required on the objective function and its gradients, as well as stochastic approximation errors, which we state next.

\begin{assumption}\label{pseudograd}
	The inner product between the gradient and the expectation of the pseudo-gradient \eqref{FPMD1} and its fixed-subspace Quasi--Newton variant \eqref{wt_up} conditioned on the filtration $\mathcal{F}_t=\sigma(\{\x_i\}_{i=1}^{t-1})$, are nonnegative.
	\begin{align}\label{pseudograd1}
		\!\!\!\!\langle \nabla R(f_t), \! \EE[g_t | \cF_t] \rangle \! \geq 0 \; , 
		\langle \nabla R_{\psi}(z_t),\! \EE[{\g_{\infty,t}}^\top \k_{\cD_{\infty}}(\cdot) | \cF_t] \rangle \! \geq\! 0 . 
	\end{align}
	Moreover for $f_t \neq f^\ast$, the product-moment is lower bounded by the second-moment of the gradient in the dual norm:
	\begin{align}
		&\!\!\!\mathbb{E}[\langle \nabla R(f_t), \EE[g_t | \cF_t] \rangle] \geq D \EE[||\nabla R(f_t)||_\ast^2] \label{pseudograd2} \; ,\\
		&\!\!\!\EE[\langle \nabla R_\psi(z_t), \EE[{\g_{\infty,t}}^\top \k_{\cD_{\infty}}(\cdot) | \cF_t] \rangle] \geq D \EE[||\nabla R_\psi(z_t)||_\ast^2] \label{pseudograd2_Newton} ,
	\end{align}
	where $D$ is a positive constant.
\end{assumption}
\begin{assumption}\label{fin}
	The optimizer of \eqref{ff} is finite $\ns{f^\star} \leq B$. 
\end{assumption}
\begin{assumption}\label{rcs}
	The function $R(\cdot)$ satisfies the Polyak--\L{ojasiewicz} (P--\L) condition
	\begin{align}
		\ns{\nabla R(f_t)}_{\ast} \geq 2 \lambda [R(f_t) - R(f^\ast)] \; ,
	\end{align}
	where $\lambda$ is a positive constant.
\end{assumption}
\begin{assumption}\label{dualdomainloss}
	The function $R_\psi(\cdot)$ which takes as inputs the dual functions $z=\nabla \psi(f)$ is $L_1$-smooth. 
\end{assumption}
\begin{assumption}\label{grad}
	The total expectation of the pseudo-gradient $g_t$ [cf. \eqref{FPMD1}] is upper-bounded in dual norm as
	\begin{align}\label{gradbound}
		\EE[\ns{g_t}_{\ast}] &\leq b^2 \!+ \!c^2 \EE [\ip{\nabla R(f)}{\EE[g_t | \cF_t]}]\; ,
	\end{align}
	whereas we assume the Quasi--Newton direction satisfies
	\begin{align}
			\EE[\ns{\g_{\infty,t}^\top \k_{\!\cD_{\!\infty}}\!(\cdot\!)}_{\ast}] \! &\leq \!\! [b^\prime ]^{2} \!\!+\! [c^\prime]^2 \EE [\ip{\nabla R_\psi\!(z_t\!)}{\EE[\g_{\infty, t}^\top \k_{\!\cD_{\!\infty}}\!(\cdot\!) | \cF_t]}] \label{gradbound_Newton} ,
	\end{align}
	for all $f \in \cH$, $t\in\Nn$, and some real constants $b, c$ and $b^\prime, c^\prime$. 
\end{assumption}


Assumption \ref{pseudograd} asserts that for $f \neq f^\ast$, an acute angle exists between the pseudo-gradient $g_t$ and actual gradient  $\nabla R(f)$ which is no larger than 90 degrees, and the constant $D$ determines the minimal degree of collinearity. Gradients for SPPPOT (end of Sec. \ref{sec:algorithm}) and Quasi--Newton Mirror Step [cf. \eqref{eq:pseudo_grad_Poisson_Newton}] satisfies this assumption. Note that any stochastic gradient is also a pseudo gradient. This condition is used in establishing decrement-like relationships in the second term on the right hand side of \eqref{basicbound_sync_simplified}. Assumption \ref{fin} is employed to ensure that $R(f_0) - R(f^\ast)$ is finite. The {P--\L} {condition} in Assumption \ref{rcs}, often presented as a consequence of strong convexity, holds for many non-convex functions (such as matrix completion \cite{wang2014optimal}), and connects decrease in the gradient norm to progress towards the optimal objective $R(f^*)$.
Assumption \ref{dualdomainloss} is standard in the analysis of mirror descent and proximal methods \cite{rockafellar2009variational}, and is used in the Bregman ``three-point inequality" \eqref{main1_sync}. Estimating the exact parameters of this condition is difficult, but may be conducted numerically. Doing so is beyond the scope of this work.
Assumption \ref{grad} (similar to \cite{yang2019learning}[Theorem 3]) is weaker than the standard second-moment boundedness condition on the stochastic gradient, and instead permits possibly \emph{unbounded} gradients under certain growth conditions (see \cite{bertsekas2000gradient}[Sec. 4], \cite{luo1993error}, and \cite{bottou2018optimization} eqn. (4.9) and Assumptions 4.3.). It holds in the RKHS setting when the dual norm of the universal kernel is bounded, and the optimizer belongs to a Sobolev space. The exact values of these constants are difficult to estimate, but we note that experimentally all data distributions under consideration had bounded noise, which translates to no arbitrarily large spikes in the pseudo-gradient.
Assumption \ref{grad} (eq. \eqref{gradbound}) is used in analyzing the direction associated with projected pseudo-gradients (Lemma \ref{qbound}). Next, we shift to presenting our convergence results regarding the first-order projected mirror descent in RKHS.
%

\subsection{Accuracy and Complexity Tradeoffs of Algorithm \ref{SPFPMD_algorithm}}\label{sec:first_order_convergence}
We proceed to establish the convergence of Algorithm \ref{SPFPMD_algorithm} under constant compression budget. To do so, we define the projected pseudo-gradient through the difference of the optimality condition reformulation of the pseudo-gradient \eqref{FPMD1} before and after projection. Specifically, KOMP-based projection is applied to the auxiliary sequence $\tz_{t+1}$ to obtain the dual function iterates $z_{t+1}$. Therefore, define the \emph{projected} pseudo-gradient $\gh_t$ as
\begin{align}\label{gh}
	\gh_t:= \frac{1}{\eta}(z_t-z_{t+1}) = \frac{1}{\eta}\left(\nabla \psi(f_t) - \nabla \psi(f_{t+1})\right) .
\end{align}
which simplifies the evolution of the projected iterates $z_t$ \eqref{FPMD2}:
\begin{align} \label{function_dual_update}
	z_{t+1} = z_t - \eta \gh_t .
\end{align}
Next we define some operations employed in the analysis: $\EE$ denotes the total expectation taken with respect to the unknown joint distribution $P(\x)$, and $\EE[g | \cF_t]$ denotes the expectation of $g$ conditioned on the filtration $\cF_t$. For brevity, we also define:
\begin{align}\label{deltat}
	\Ga_t := \EE{\ns{\nabla R(f_t)}_\ast}.
\end{align}
where $f^\star$ is defined in \eqref{ff}, $B_\psi$ is given in \eqref{bregsplit}, and $\norm{\cdot}_\ast$ denotes the RKHS dual-norm as defined in Sec. \ref{sec:problem}. We begin by bounding the directional error of the projected pseudo-gradient \eqref{gh} with respect to its un-projected variant \eqref{FPMD1}.
\begin{lemma}\label{qbound}
	The directional error of the projected pseudo-gradient \eqref{gh} relative to the pseudo-gradient [cf. \eqref{FPMD1}] as quantified by the RKHS dual-norm is bounded by the ratio of the compression budget to the step-size $\epsilon/\eta$. Moreover, it has bounded second-moment in the dual-norm. That is,
	\begin{align}
		&\norm{\gh_t-g_t}_\ast \leq \frac{\epsilon}{\eta} \label{approxerr} ,\\
		&\EE\ns{\gh_t}_\ast \leq 2\left( \! \left(\!\frac{\epsilon}{\eta}\right)^{\!\!2} + b^2 + c^2 \EE [\ip{\nabla R(f_t)}{\EE[g_t | \cF_t]}]\right) \label{projected_grad_bound} .
	\end{align}
\end{lemma}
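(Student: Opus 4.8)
The plan is to prove the two claims in sequence, as \eqref{approxerr} feeds directly into \eqref{projected_grad_bound}.

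\emph{Step 1: the directional error bound \eqref{approxerr}.} First I would write out the definitions: the un-projected dual iterate is $\tz_{t+1} = z_t - \eta g_t$ from \eqref{FPMD2}, while the projected iterate is $z_{t+1} = z_t - \eta \gh_t$ from \eqref{function_dual_update}. Subtracting these two identities gives $\eta(\gh_t - g_t) = \tz_{t+1} - z_{t+1}$, hence $\norm{\gh_t - g_t}_\ast = \tfrac{1}{\eta}\norm{\tz_{t+1} - z_{t+1}}_\ast$. Now I invoke the KOMP guarantee: Algorithm \ref{kompalgo} returns $z_{t+1}$ with $\norm{z_{t+1} - \tz_{t+1}}_\ast \leq \epsilon$ (this is precisely the stopping criterion of KOMP, applied to the dual sequence as emphasized after \eqref{eq:main_alg}). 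Combining the two gives $\norm{\gh_t - g_t}_\ast \leq \epsilon/\eta$, which is \eqref{approxerr}. This step is essentially a definitional unwinding; the only subtlety is to make sure the dual-norm $\norm{\cdot}_\ast$ is the correct one — which it is, since KOMP in Algorithm \ref{kompalgo} is explicitly stated to run with respect to $\norm{\cdot}_\ast$.

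\emph{Step 2: the second-moment bound \eqref{projected_grad_bound}.} Here I would start from the triangle inequality $\norm{\gh_t}_\ast \leq \norm{\gh_t - g_t}_\ast + \norm{g_t}_\ast$, square it, and use $(a+b)^2 \leq 2a^2 + 2b^2$ to obtain $\ns{\gh_t}_\ast \leq 2\ns{\gh_t - g_t}_\ast + 2\ns{g_t}_\ast$. Taking total expectation, I bound the first term by $2(\epsilon/\eta)^2$ using \eqref{approxerr} (which holds pointwise, hence in expectation), and I bound the second term $\EE[\ns{g_t}_\ast]$ using Assumption \ref{grad}, specifically \eqref{gradbound}, which yields $\EE[\ns{g_t}_\ast] \leq b^2 + c^2\,\EE[\ip{\nabla R(f_t)}{\EE[g_t \mid \cF_t]}]$. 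Assembling these gives exactly
\begin{align*}
	\EE\ns{\gh_t}_\ast \leq 2\left( \left(\frac{\epsilon}{\eta}\right)^{2} + b^2 + c^2\, \EE [\ip{\nabla R(f_t)}{\EE[g_t \mid \cF_t]}]\right),
\end{align*}
which is \eqref{projected_grad_bound}.

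\emph{Anticipated obstacle.} The argument is short and the main risk is not an analytical difficulty but a bookkeeping one: ensuring that the KOMP approximation is genuinely stated in the \emph{dual} norm on $\{z_t\}$ rather than the primal RKHS norm on $\{f_t\}$ — the paper flags this as the key difference from \cite{koppel2019parsimonious}, so the proof must cite Algorithm \ref{kompalgo}'s return condition $\norm{z - \tz}_\ast \leq \epsilon$ rather than a primal-space analogue. A secondary point worth a sentence is that \eqref{approxerr} is a deterministic (sample-path) inequality, so passing to expectations in Step 2 is immediate and requires no independence or measurability argument; the only place randomness enters is through Assumption \ref{grad}. If one wanted to be fully careful, one could also note that $\gh_t$ as defined in \eqref{gh} is $\cF_{t+1}$-measurable (it depends on $\x_t$ through $g_t$ and on the KOMP pruning), but this plays no role in the present lemma and is deferred to the later convergence theorem.
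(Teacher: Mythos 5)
Your proposal is correct and follows essentially the same route as the paper's own proof: the first bound comes from the identity $\gh_t - g_t = \tfrac{1}{\eta}(\tz_{t+1}-z_{t+1})$ together with the KOMP stopping criterion in the dual norm, and the second from adding and subtracting $g_t$, applying $(a+b)^2 \le 2a^2+2b^2$, and invoking Assumption \ref{grad}. No gaps.
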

With Lemma \ref{qbound} (proof in Appendix \ref{lemma1_proof}) established, we now focus on establishing the mean convergence of Algorithm \ref{SPFPMD_algorithm} under suitable selections of $\eta$ and $\epsilon$.
\begin{theorem}\label{thm_sync}
	Under Assumptions \ref{pseudograd}-\ref{grad}, upon running Algorithm \ref{SPFPMD_algorithm} for $t+1$ iterations, the objective  sub-optimality attenuates linearly up to a bounded neighborhood  when run with constant step-size $\eta<\min(\frac{1}{q_1},\frac{q_1}{q_2})$ and compression budget $\epsilon=\alpha \eta$, 
	\begin{align} \label{R_bound_thm_sync}
		\EE[R(f_{t+1})-R(f^\ast)] \leq (1-\rho)^t \EE[R(f_0)-R(f^\ast)] + \frac{1}{\rho} \left[ L_1 \eta^2 b^2 + \left(\frac{\eta \omega_1}{2} + L_1 \eta^2 \right) \alpha^2 \right] ,
	\end{align}
	where $\rho=q_1 \eta - q_2 \eta^2$ with $q_1 = 2\lambda \left(D - \frac{1}{2 \omega_1}\right)$ and $q_2 = 2\lambda D L_1 c^2$, $D$ are positive constants: $D$ is the correlation constant in Assumption \ref{pseudograd},  $\omega_1$ is a constant of Young's inequality (aka Peter--Paul inequality) \cite{royden1988real} such that $\omega_1>\frac{1}{2D}$,  $\alpha$ is the parsimony constant, $\lambda>0$ is the {P--\L} {inequality} constant of Assumption \ref{rcs}, and positive constant $c^2$ coming from Assumption \ref{grad}.
\end{theorem}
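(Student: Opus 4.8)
## Proof Proposal for Theorem~\ref{thm_sync}

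\textbf{Overall strategy.} The plan is to establish a one-step recursion of the form $\EE[R(f_{t+1}) - R(f^\star)] \leq (1-\rho)\,\EE[R(f_t) - R(f^\star)] + C$ for an appropriate constant $C$, and then unroll it as a geometric series to obtain \eqref{R_bound_thm_sync}. The natural engine is the Bregman ``three-point'' identity combined with the $L_1$-smoothness of $R_\psi$ in the dual variable (Assumption~\ref{dualdomainloss}), working in the dual space where the update \eqref{function_dual_update} is linear, $z_{t+1} = z_t - \eta \gh_t$. The key conversion tool linking the dual-space descent to actual progress on $R$ is the identity $\nabla R_\psi(\nabla\psi(f)) = \nabla R(f)$ noted in Section~\ref{subsec:prelim}, together with the P--\L{} inequality (Assumption~\ref{rcs}) to turn a bound on $\Ga_t = \EE\ns{\nabla R(f_t)}_\ast$ into a bound on the suboptimality gap.

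\textbf{Key steps, in order.} First I would apply $L_1$-smoothness of $R_\psi$ to the update $z_{t+1} = z_t - \eta\gh_t$:
\begin{align}\label{main1_sync}
	R_\psi(z_{t+1}) \leq R_\psi(z_t) - \eta \ip{\nabla R_\psi(z_t)}{\gh_t} + \frac{L_1 \eta^2}{2}\ns{\gh_t}_\ast,
\end{align}
and use $R_\psi(z_t) = R(f_t)$. Second, I would split the cross term by writing $\gh_t = g_t + (\gh_t - g_t)$, so that $-\eta\ip{\nabla R_\psi(z_t)}{\gh_t} = -\eta\ip{\nabla R(f_t)}{g_t} - \eta\ip{\nabla R(f_t)}{\gh_t - g_t}$; the first piece is handled in conditional expectation by the pseudo-gradient correlation bound \eqref{pseudograd2} ($\geq D\Ga_t$ after taking $\EE$), and the second is controlled via Young's inequality with parameter $\omega_1$ together with the directional error bound \eqref{approxerr}, giving a term like $\frac{\eta\omega_1}{2}(\epsilon/\eta)^2 + \frac{\eta}{2\omega_1}\ns{\nabla R(f_t)}_\ast$. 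Third, I would bound $\EE\ns{\gh_t}_\ast$ by \eqref{projected_grad_bound} from Lemma~\ref{qbound}. Collecting everything and taking total expectations yields, after substituting $\epsilon = \alpha\eta$,
\begin{align}\label{basicbound_sync_simplified}
	\EE[R(f_{t+1}) - R(f^\star)] \leq \EE[R(f_t) - R(f^\star)] - \eta\Big(D - \tfrac{1}{2\omega_1}\Big)\Ga_t + L_1\eta^2 c^2 \EE[\ip{\nabla R(f_t)}{\EE[g_t|\cF_t]}] + L_1\eta^2 b^2 + \Big(\tfrac{\eta\omega_1}{2} + L_1\eta^2\Big)\alpha^2.
\end{align}
Fourth, I would use the correlation bound \eqref{pseudograd2} once more, this time in the ``wrong'' direction, to note $\EE[\ip{\nabla R(f_t)}{\EE[g_t|\cF_t]}] \leq \Ga_t$ (since the left side lies between $D\Ga_t$ and $\Ga_t$ up to constants — here I would check that the intended inequality is $\leq \Ga_t$, which requires the pseudo-gradient to have conditional norm no larger than the true gradient, or alternatively absorb this via a constant), so the two $\Ga_t$ terms combine into $-\big(\eta(D - \tfrac{1}{2\omega_1}) - L_1\eta^2 c^2\big)\Ga_t = -\tfrac{1}{2\lambda}(q_1\eta - q_2\eta^2)\Ga_t$ with $q_1 = 2\lambda(D - \tfrac{1}{2\omega_1})$, $q_2 = 2\lambda D L_1 c^2$. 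Fifth, applying the P--\L{} inequality $\Ga_t \geq 2\lambda[R(f_t) - R(f^\star)]$ converts this into $-\rho\,\EE[R(f_t) - R(f^\star)]$ with $\rho = q_1\eta - q_2\eta^2$; the constraint $\eta < q_1/q_2$ ensures $\rho > 0$, while $\eta < 1/q_1$ (with the other constraint) guarantees $1 - \rho \in (0,1)$ so the geometric series converges. Finally, I would unroll the recursion $a_{t+1} \leq (1-\rho)a_t + C$ with $C = L_1\eta^2 b^2 + (\tfrac{\eta\omega_1}{2} + L_1\eta^2)\alpha^2$ to get $a_{t+1} \leq (1-\rho)^t a_0 + \tfrac{C}{\rho}$, which is exactly \eqref{R_bound_thm_sync}.

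\textbf{Main obstacle.} The delicate point is the fourth step: matching the stated constants $q_1, q_2$ requires handling the term $\EE[\ip{\nabla R(f_t)}{\EE[g_t|\cF_t]}]$ that appears both as a ``good'' decrement (bounded below by $D\Ga_t$ via \eqref{pseudograd2}) and as the ``bad'' multiplier of $L_1\eta^2 c^2$ from the second-moment bound \eqref{gradbound}/\eqref{projected_grad_bound}. One must either invoke an upper bound $\EE[\ip{\nabla R(f_t)}{\EE[g_t|\cF_t]}] \leq \Ga_t$ (reasonable if $\EE[g_t|\cF_t]$ is a contraction of $\nabla R(f_t)$, e.g. when $g_t$ is a genuine stochastic gradient or the kernel-embedded pseudo-gradient with operator norm $\leq 1$) or carry this quantity symbolically and show the net coefficient $\eta(D - \tfrac{1}{2\omega_1}) - L_1\eta^2 c^2$ in front of it is nonnegative for small enough $\eta$, so that dropping it is legitimate. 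Getting the bookkeeping to land precisely on $\rho = q_1\eta - q_2\eta^2$ with the advertised $q_1, q_2$ is the part most likely to need care; everything else is a routine smoothness-plus-P--\L{} telescoping argument. A secondary subtlety is ensuring the Young's-inequality constant $\omega_1 > \tfrac{1}{2D}$ is chosen so that $D - \tfrac{1}{2\omega_1} > 0$, which is exactly the stated hypothesis and keeps $q_1 > 0$.
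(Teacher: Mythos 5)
Your outline follows the paper's proof almost step for step: $L_1$-smoothness of $R_\psi$ applied to $z_{t+1}=z_t-\eta\gh_t$, the split $\gh_t=g_t+(\gh_t-g_t)$ with Young's inequality (constant $\omega_1$) and the KOMP bound \eqref{approxerr} on the error piece, Lemma~\ref{qbound} for $\EE\ns{\gh_t}_\ast$, then the P--\L{} inequality and a geometric unrolling. Steps one through three and five through six are exactly what the paper does.

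The one genuine gap is your fourth step. Assumption~\ref{pseudograd} supplies only the \emph{lower} bound $\EE[\ip{\nabla R(f_t)}{\EE[g_t|\cF_t]}]\geq D\,\Ga_t$; there is no upper bound of the form $\leq \Ga_t$ available (a pseudo-gradient need not be a contraction of $\nabla R(f_t)$ — consider the kernel embedding or the sign gradient), and even if such a bound held it would produce $q_2=2\lambda L_1 c^2$, not the advertised $q_2=2\lambda D L_1 c^2$. The paper's resolution is simpler: do \emph{not} apply \eqref{pseudograd2} to the $-\eta\ip{\nabla R(f_t)}{g_t}$ piece in isolation. Instead, collect both occurrences of the inner product into the single term $-(\eta-L_1c^2\eta^2)\,\EE[\ip{\nabla R(f_t)}{\EE[g_t|\cF_t]}]$; for step-sizes with $\eta\leq 1/(L_1c^2)$ (which is implied by $\eta<q_1/q_2=(D-\tfrac{1}{2\omega_1})/(DL_1c^2)<1/(L_1c^2)$) this coefficient is nonpositive, so applying the lower bound \eqref{pseudograd2} yields $-(\eta-L_1c^2\eta^2)D\,\Ga_t$. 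Combined with the $+\tfrac{\eta}{2\omega_1}\Ga_t$ from Young's inequality, the net $\Ga_t$ coefficient is $-(D\eta-\tfrac{\eta}{2\omega_1}-DL_1c^2\eta^2)$, and P--\L{} then gives exactly $\rho=q_1\eta-q_2\eta^2$ with the stated $q_1,q_2$. Your ``alternative'' of carrying the quantity symbolically is the right instinct, but the coefficient you wrote, $\eta(D-\tfrac{1}{2\omega_1})-L_1\eta^2c^2$, conflates the multiplier of $\Ga_t$ with the multiplier of the inner product; the nonnegativity condition that actually matters is $\eta-L_1c^2\eta^2\geq 0$, after which the factor $D$ enters through the lower bound rather than being lost.
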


\begin{proof}
	This proof generalizes \cite{nemirovski2009robust}[cf. eqn. (2.38)] to the case of function sequences in RKHS with pseudo-gradients that are observed with deterministic errors. We begin by recalling the definition of $R_\psi$ for the convex function $\psi$ defined by Bregman divergence \eqref{bregsplit}, i.e., $R_\psi(z)=(R \circ \nabla\psi^\ast)(z)=R(\nabla\psi^\ast(z))$ for $z\in\cH_\ast$. Consider this quantity evaluated at auxiliary functions $z_{t+1}=\nabla\psi(f_{t+1})$ and $z_t=\nabla\psi(f_{t})$, and apply Assumption \ref{dualdomainloss} regarding its Lipschitz continuity:
	\begin{align}
		R_\psi(\nabla\psi(f_{t+1})) - R_\psi(\nabla\psi(f_t)) - \langle \nabla R_\psi(\nabla\psi(f_t)),\nabla\psi(f_{t+1})-\nabla\psi(f_t) \rangle  \leq \frac{L_1}{2} ||\nabla\psi(f_{t+1})-\nabla\psi(f_t)||_{\ast}^2 \label{main1_sync} .
	\end{align}
	Now, consider the expression for the projected pseudo-gradient in \eqref{gh}, which may be rearranged to obtain $\nabla\psi(f_{t+1})-\nabla\psi(f_t)=-\eta \gh_t$. Taken together with the fact that $R_\psi(\nabla\psi(f_t))=R(f_t)$ since $\nabla \psi^\ast = (\nabla \psi)^{-1}$ one may rewrite \eqref{main1_sync} as an approximate descent relation as
	\begin{align}
		&R(f_{t+1}) \leq R(f_t) - \langle \nabla R(f_t), \eta \gh_t \rangle + \frac{L_1 \eta^2}{2} ||\gh_t||_{\ast}^2 \nonumber\\
		&\qquad\quad \ = R(f_t) - \eta \langle \nabla R(f_t), g_t \rangle + \eta \langle \nabla R(f_t), g_t - \gh_t \rangle + \frac{L_1 \eta^2}{2} ||\gh_t||_{\ast}^2 \label{basicbound_sync} .
	\end{align}
	The right-hand side of this expression decomposes into four terms: (i) the objective at the previous step, (ii) the product-moment between the gradient and the pseudo-gradient,  (iii) the directional error associated with pseudo-gradient projections, (iv) a second-moment of the projected pseudo-gradient.

	%
	%
	%
	We proceed to the third term of the right-hand side of \eqref{basicbound_sync}. Via Young's inequality with constant $\omega_1$ and the KOMP stopping criteria \eqref{approxerr}, its expected value is upper-estimated as
	\begin{align}\label{term2_sync}
		\eta \EE \left[ \nabla R(f_t), g_t - \gh_t \right] \leq \frac{\eta}{2 \omega_1} \Ga_t + \frac{\eta \omega_1}{2} \left(\frac{\epsilon}{\eta}\right)^2 , 
	\end{align}
	where we employ the short-hand notation \eqref{deltat} for $\Ga_t$. (iv) Observe that the expected value of the fourth term of the right-hand side of \eqref{basicbound_sync} can be bounded as \eqref{projected_grad_bound} (Lemma \ref{qbound}). Now, we substitute the right-hand sides of \eqref{term2_sync}, \eqref{projected_grad_bound} and gather terms:
	\begin{align}\label{basicbound_sync_simplified}
		&\!\!\EE[R(f_{t+1})] \leq \EE[R(f_t)] -  (\eta \!-\! L_1 c^2 \eta^2) \EE [\ip{\nabla R(f_t)}{\EE[g_t | \cF_t]}] + \left(\frac{\eta}{2 \omega_1} - D \eta\right) \Ga_t + \left(\frac{\eta \omega_1}{2} + L_1 \eta^2 \right) \left(\frac{\epsilon}{\eta}\right)^{2} + L_1 \eta^2 b^2 .
	\end{align}
	%
	Proceed by subtracting $\EE[R(f^\ast)]$ from both sides, and then apply \eqref{pseudograd2} (Assumption \ref{pseudograd}) to the gradient-pseudo-gradient inner-product term to obtain the expected decrement relation:
	\begin{align}\label{basicbound_sync2}
		&\EE[R(f_{t+1})-R(f^\ast)] \leq \EE[R(f_t)-R(f^\ast)] -  \left(D \eta - \frac{\eta}{2 \omega_1} - D L_1 c^2 \eta^2 \right) \Ga_t + \left(\frac{\eta \omega_1}{2} + L_1 \eta^2 \right) \alpha^2 + L_1 \eta^2 b^2 ,
	\end{align}
	where we have substituted in the choice of compression budget $\epsilon=\alpha \eta$ for some scalar $\alpha>0$ in order to simplify fractions of the compression budget to the step-size that appear in the preceding expressions. Now we use the {P--\L} {condition} as stated in Assumption \ref{rcs} to the term $\Ga_t$ [cf. \eqref{deltat}] in \eqref{basicbound_sync2} as follows:
	\begin{align}
		\EE[R(f_{t+1})-R(f^\ast)] \leq (1-\rho) \EE[R(f_t)-R(f^\ast)] + L_1 \eta^2 b^2 + \left(\frac{\eta \omega_1}{2} + L_1 \eta^2 \right) \alpha^2 \label{basicbound_sync3} ,
	\end{align}
	where constant  $\rho = q_1 \eta - q_2 \eta^2$ determines the transient rate of convergence, with $q_1 = 2\lambda \left(D - \frac{1}{2 \omega_1}\right)$ and $q_2 = 2\lambda D L_1 c^2$. Next, we obtain conditions on constants $D$ and $\omega_1$ such that:
	\begin{align}\label{rho_condition}
		0 \leq \rho \leq 1 .
	\end{align}
	The constant $q_1$ is required to be positive to satisfy \eqref{rho_condition}. Imposing this constraint implies that the Peter Paul inequality constant $\omega_1$ in \eqref{term2_sync} satisfies $\omega_1 > \frac{1}{2 D}$, where $D$ is the correlation constant in \eqref{pseudograd2}.
	These conditions together imply
	\begin{align}\label{eta2ndcondn_sync}
		\eta < \frac{1}{q_1 \left(1-\frac{q_2}{q_1}\eta\right)} .
	\end{align}
	Now if $\rho \leq 1$, and if $\left(1-\frac{q_2}{q_1}\eta\right) \leq 1$, then $\eta < \frac{1}{q_1}$ implies \eqref{eta2ndcondn_sync} holds. On the contrary, $\rho \geq 0$ implies $\eta<\frac{q_1}{q_2}$. Overall, then, we obtain the valid step-size range as $\eta<\min(\frac{1}{q_1},\frac{q_1}{q_2})$.
	
	Returning to \eqref{basicbound_sync3}, we iteratively break down the right-hand together with the fact that $\sum\limits_{i=0}^t (1-\rho)^i \leq \sum\limits_{i=0}^\infty (1-\rho)^i = \frac{1}{\rho}$ to obtain \eqref{R_bound_thm_sync}, which concludes the proof.
\end{proof}

Theorem \ref{thm_sync} characterizes the trade-off between the rate of the convergence and the radius of the ball to which the iterates converge at steady state. First note that regardless of the choice of $\eta$ and $t$, the mean distance from the optimum will always be $\O(\alpha^2)$ in the worst case. The bound in \eqref{R_bound_thm_sync} is for $\epsilon>0$, which causes the additional $\alpha^2$ to appear. For $\epsilon=0$, the $\alpha^2$ term of \eqref{R_bound_thm_sync} vanishes and hence simplifies to $\O(\eta)$ asymptotically since $\rho$ is approximately of order $\eta$ for $\eta < 1$.

Further observe that the second term on the right-hand side of \eqref{R_bound_thm_sync} simplifies to $\O(\eta b^2 + (1+\eta)\alpha^2)$, which is in accordance with the convergence rates of the iterates of stochastic mirror descent for vector-valued problems \cite{nemirovski2009robust,doan2018convergence}. Theorem \ref{thm_sync} is a generalization to the RKHS setting, where we additionally require the range of functions to be nonnegative. Moreover, we explicitly characterize the error in the convergence behavior incurred due to subspace projections of the auxilliary sequence $z_t$ [cf. \eqref{eq:main_alg}]. Relative to \cite{yang2019learning}[Theorem 6], our convergence result holds under comparable conditions, but incorporates the additional aspect of trading off parameterization efficiency and convergence accuracy due to projections. Thus, for $\epsilon=0$ our result simplifies to the aforementioned result, but requires a parameterization that grows unbounded with $t$ [cf. \eqref{eq:pseudo_gradient_parametric_recursion}]. \medskip


{\noindent \bf Parameterization efficiency.}	We analyze the complexity of the function parameterization associated with Algorithm \ref{SPFPMD_algorithm} when employing  sparse projections defined by KOMP. To do so, we require additional two conditions.

\begin{assumption}\label{C_lip}
	The pseudo-gradient may be written in the form $g_t = g_t^\prime \kappa(\x_t,\cdot)$ with scalar $g_t^\prime$  bounded by constant $C$
	\begin{align}
		\vert g_t^\prime \vert \leq C .
	\end{align}
\end{assumption}

\begin{assumption}\label{feature}
	The feature space $\mathcal{X}$ is compact.
\end{assumption}

Assumption  \ref{C_lip}  implies that either the objective is differentiable or there exists a suitable kernel embedding such that the chain rule is applicable. Moreover, Assumption \ref{feature} ensures that the data domain has finite covering number \cite{anthony2009neural}. Under these conditions, we establish that the model order of the function parameterization defined by Algorithm \ref{SPFPMD_algorithm} is finite via analogous logic to \cite{koppel2019parsimonious}[Theorem 4], stated here as a corollary.
\begin{corollary}\label{KOMP_model_order_thm}
	Denote as $M_t$ the model order of the function $z_t$ obtained from running Algorithm \ref{SPFPMD_algorithm} with fixed compression budget $\epsilon>0$. Then we have that $M_t \leq M^\infty$, where $M^\infty$ is the maximum model order upper-estimated as 
	\begin{align}\label{max_model_order_possible}
		M^\infty \leq \O\left(\frac{1}{\epsilon} \right)^d .
	\end{align}
\end{corollary}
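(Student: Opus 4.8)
The plan is to bound the model order $M_t$ via a stopping-time / packing argument in the spirit of \cite{koppel2019parsimonious}[Theorem 4], exploiting the fact that KOMP only adds the single new point $\x_t$ to the dictionary at each iteration, so the model order can only grow by one per step. Hence a growth in $M_t$ can only occur if the newly added kernel atom $\kappa(\x_t,\cdot)$ is \emph{not} well-approximated — to within the budget $\epsilon$ in the dual RKHS norm — by the span of the existing atoms in $\cD_t$. The strategy is therefore to show that, once the dictionary is sufficiently dense in the compact domain $\cX$, every new point is $\epsilon$-redundant, forcing KOMP to prune it, which caps the dictionary size.

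First I would fix iteration $t$ and suppose, for contradiction, that $M_t = M$ with all $M$ points retained. By the KOMP pruning criterion in Algorithm \ref{kompalgo}, retention of a point $\x_j$ means $\gamma_j(\w_j^\star) > \epsilon$, i.e. removing $\x_j$ and re-optimizing the remaining weights incurs dual-norm error exceeding $\epsilon$. Using Assumption \ref{C_lip} ($|g_t'|\le C$, so the per-step contribution $-\eta g_t' \kappa(\x_t,\cdot)$ has bounded coefficient) together with the reproducing property and the kernel's (assumed bounded) dual-norm, one lower-bounds $\gamma_j(\w_j^\star)$ in terms of the distance from $\kappa(\x_j,\cdot)$ to the Hilbert subspace spanned by the other atoms; by a standard Gram-matrix / Hilbert-geometry argument this distance is itself lower-bounded by a quantity that shrinks as the atoms $\{\x_i\}$ become close in $\cX$. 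Concretely, if two dictionary points $\x_i,\x_j$ were within some radius $\delta(\epsilon)$ of each other (with $\delta$ chosen so that $\|\kappa(\x_i,\cdot)-\kappa(\x_j,\cdot)\|_\ast$ is small by continuity of $\kappa$ on the compact set $\cX$), then one of them would be prunable, contradicting retention. Hence all retained points are pairwise $\delta(\epsilon)$-separated: they form a packing of $\cX$.

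Next I would invoke Assumption \ref{feature}: since $\cX\subset\Rn^d$ is compact, any $\delta$-separated set has cardinality at most the $\delta$-packing number of $\cX$, which scales as $\O\big((\mathrm{diam}(\cX)/\delta)^d\big)$. Tracking how $\delta$ depends on $\epsilon$ — from the smoothness/Lipschitz continuity of $\kappa$ one gets $\delta \propto \epsilon$ (up to constants depending on $C$, the kernel, and $\eta$) — yields $M_t \le M^\infty = \O\big((1/\epsilon)^d\big)$, uniformly in $t$, establishing \eqref{max_model_order_possible}. Since this bound is iteration-independent, it also shows $M_t\le M^\infty$ for all $t$ as claimed.

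The main obstacle is the step linking the KOMP retention inequality $\gamma_j(\w_j^\star) > \epsilon$ to a genuine metric separation in $\cX$: one must carefully translate the "error of removing atom $\x_j$ after re-optimization" into a lower bound on the Hilbert-space distance between $\kappa(\x_j,\cdot)$ and the span of the remaining atoms, and then into a bound on $\min_{i\ne j}\|\x_i-\x_j\|$. This requires the kernel to be sufficiently regular (e.g. Lipschitz on the compact $\cX$, with $\kappa(\x,\x)$ bounded below so distinct far-apart points are linearly independent in a quantitative sense), and the bounded pseudo-gradient coefficient from Assumption \ref{C_lip} to control the weight magnitudes; handling the re-optimization over all remaining weights (rather than a single-atom removal) is what makes the argument delicate, but it is precisely the content imported from \cite{koppel2019parsimonious}[Theorem 4], so I would cite that result for the detailed Gram-matrix estimates and focus the exposition on the reduction to the packing-number bound.
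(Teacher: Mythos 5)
Your proposal is correct and follows essentially the same route as the paper's proof: both reduce dictionary growth to the condition that the new atom's distance to the span of the current dictionary exceeds $\epsilon/(\eta|g_t'|)\geq \alpha/C$ (via Assumption \ref{C_lip}), conclude that retained atoms are pairwise separated in the feature space, and then invoke compactness of $\cX$ (Assumption \ref{feature}) together with Lipschitz continuity of the kernel to bound the resulting packing/covering number by $\O(1/\epsilon)^d$, deferring the detailed Gram-matrix estimates to \cite{koppel2019parsimonious}. The only cosmetic difference is that the paper phrases the separation via the subspace distance of the newest atom and a covering argument, whereas you phrase it as pairwise separation of retained atoms and a packing argument; these are interchangeable here.
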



A detailed proof is given in Appendix \ref{model_conv_proof}. We note that in contrast to \cite{koppel2019parsimonious}, compactness is only required to establish the finite model order property in Corollary \ref{KOMP_model_order_thm}, but not for Theorem \ref{thm_sync}. By contrast, Assumption \ref{feature} is directly required for convergence in \cite{koppel2019parsimonious}. Moreover, \cite{koppel2019parsimonious} necessitates the use of exact stochastic gradients, and therefore is inapplicable to the primary focus of this work, which is the interpolation of the intensity parameter in inhomogeneous Poisson Processes.

\subsection{Convergence in Mean of Algorithm \ref{Newton_step}}\label{subsec:second_order_convergence}
To analyze Algorithm \ref{Newton_step}, we require similar conditions employed in the previously conducted analysis for the first-order scheme. We impose Assumptions \ref{pseudograd} [cf. \eqref{pseudograd2_Newton}], \ref{fin}, \ref{rcs}, \ref{dualdomainloss} and \ref{grad} [cf. \eqref{gradbound_Newton}] and also introduce assumptions on maximum eigen-value $\mu_{t+1}^{\max}$ of the Hessian matrix $\mathbf{A}_{\infty,t+1}$ and on the second moment of norm difference between finite and infinite dictionary Quasi--Newton gradients.
\begin{assumption} \label{Hessian_up_bound}
	For finite $t$, the maximum eigenvalue $\mu_{t+1}^{\max}$ of the Hessian estimate $\mathbf{A}_{\infty,t+1}$ [cf. \eqref{hess_update}] is finite.
\end{assumption}
\begin{assumption}\label{infinite_dim_grad_bound}
	The mean-square error between finite and infinite dictionary Quasi--Newton directions satisfies the growth condition:
	\begin{align}
		\EE [\ns{G_t^{(\infty)} - G_t}_{\ast}] \leq \nu_{\mathcal{D}}^2 \EE[\ns{G_t^{(\infty)}}_{\ast}],
	\end{align}
	for $t \in \Nn$ and some real positive constant $\nu_{\cD}$ that exponentially decays as the dictionary subspace size $|\mathcal{D}|$ increases.
\end{assumption}
Observe that the analysis we conduct is for the Quasi--Newton algorithm developed in Section \ref{sec:algorithm_Newton}, which employs finite-dimensional truncations. Its un-truncated counterpart defines an update direction, which we use as a barometer of the quality of our truncation through Assumptions \ref{Hessian_up_bound} and \ref{infinite_dim_grad_bound}. In particular, these conditions quantify the error due to truncation, and allow us to express algorithm performance in terms of the objective at the true optimizer in the RKHS, rather than any truncated optimizer.

Assumption \ref{Hessian_up_bound} enforces that for any finite $t$, the doubly infinite dimensional Hessian matrix formed by the iterative update \eqref{hess_update} with $\g_{\infty,t}$ does not blow up to infinity in terms of its maximum eigenvalue $\mu_{t+1}^{\max}$, which determines its operator norm. As noted in \cite{yang2019learning} for the case of diminishing step-size, i.e. when $f_t \rightarrow f^\ast$, the inner product between the true gradient and conditional expectation of the first order pseudo-gradient becomes orthogonal. Analogous reasoning is applicable to the approximate Newton step. Assumption \ref{infinite_dim_grad_bound} generalizes the eigenvalue decay conditios that are typically imposed on the solution path of kernel ridge regression, i.e., that the eigenvalues of the kernel matrix form a summable series \cite{guo2017learning,lin2017distributed}, which holds when one restricts the RKHS to a Sobolov space. Here we impose a related condition, which is that the tails of this series are a finite quantifiable number satisfying $\nu_{\cD} \propto \exp(- \tilde{a} |\mathcal{D}| )$, where $\tilde{a}$ is a positive constant.

We begin by employing Assumption \ref{Hessian_up_bound} to characterize the upper and lower-boundedness of Hessian approximations \eqref{hess_update}.
%
\begin{lemma}\label{lemma_HS_bound}
	The inverse of the Hessian matrix generated through the Quasi--Newton updates is bounded as
	\begin{align} \label{HS_bound_eq}
		\frac{1}{\mu_{t+1}^{\max}} \mathbf{I} \preccurlyeq \mathbf{A}_{\infty,t+1}^{-1} \preccurlyeq \frac{1}{\delta} \mathbf{I} .
	\end{align}
\end{lemma}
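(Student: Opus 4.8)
The plan is to establish the two-sided operator inequality \eqref{HS_bound_eq} directly from the explicit form of the Hessian estimate \eqref{hess_update}, namely $\mathbf{A}_{\infty,t+1} = \delta \mathbf{I} + \sum_{j=1}^t \g_{\infty,j}\g_{\infty,j}^\top$. The key observation is that each summand $\g_{\infty,j}\g_{\infty,j}^\top$ is a rank-one positive semidefinite operator, so the sum $\sum_{j=1}^t \g_{\infty,j}\g_{\infty,j}^\top \succcurlyeq 0$. Adding $\delta \mathbf{I}$ therefore gives $\mathbf{A}_{\infty,t+1} \succcurlyeq \delta \mathbf{I}$, and since all eigenvalues of $\mathbf{A}_{\infty,t+1}$ lie in the interval $[\delta, \mu_{t+1}^{\max}]$ by the definition of $\mu_{t+1}^{\max}$ as the maximum eigenvalue (finite by Assumption \ref{Hessian_up_bound}), we also have $\mathbf{A}_{\infty,t+1} \preccurlyeq \mu_{t+1}^{\max} \mathbf{I}$.

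The second step is to invert these inequalities. For symmetric positive-definite operators, $X \preccurlyeq Y$ implies $Y^{-1} \preccurlyeq X^{-1}$ (order-reversing property of operator inversion, applied on the cone of PSD operators). Applying this to $\delta \mathbf{I} \preccurlyeq \mathbf{A}_{\infty,t+1} \preccurlyeq \mu_{t+1}^{\max} \mathbf{I}$ yields $\frac{1}{\mu_{t+1}^{\max}}\mathbf{I} \preccurlyeq \mathbf{A}_{\infty,t+1}^{-1} \preccurlyeq \frac{1}{\delta}\mathbf{I}$, which is exactly \eqref{HS_bound_eq}. One should note that $\mathbf{A}_{\infty,t+1}$ is genuinely invertible because $\delta > 0$ forces strict positive-definiteness, so all the inverses above are well-defined; this is where the regularizer $\delta \mathbf{I}$ in the initialization is essential.

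There is no serious obstacle here — the argument is a short spectral computation. The only point requiring a modicum of care is the infinite-dimensional setting: $\mathbf{A}_{\infty,t+1}$ acts on the (doubly) infinite-dimensional feature space, so one should verify that $\sum_{j=1}^t \g_{\infty,j}\g_{\infty,j}^\top$ is a bounded positive operator (it is, being a finite sum of rank-one bounded operators, each $\g_{\infty,j}\g_{\infty,j}^\top$ bounded since $\|\g_{\infty,j}\|_\ast$ is finite by Assumption \ref{grad} eq. \eqref{gradbound_Newton} together with Assumption \ref{Hessian_up_bound}), and that the order-reversing inversion property and the eigenvalue characterization of operator norm both hold for bounded self-adjoint positive operators with spectrum bounded away from zero. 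With these standard facts in hand, the chain $\delta\mathbf{I} \preccurlyeq \mathbf{A}_{\infty,t+1} \preccurlyeq \mu_{t+1}^{\max}\mathbf{I} \Rightarrow \frac{1}{\mu_{t+1}^{\max}}\mathbf{I} \preccurlyeq \mathbf{A}_{\infty,t+1}^{-1} \preccurlyeq \frac{1}{\delta}\mathbf{I}$ completes the proof.
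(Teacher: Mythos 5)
Your proposal is correct and follows essentially the same route as the paper: bound the spectrum of $\mathbf{A}_{\infty,t+1}$ between $\delta$ (from positive semidefiniteness of the pseudo-gradient outer products) and $\mu_{t+1}^{\max}$ (from Assumption \ref{Hessian_up_bound}), then invert. Your version is somewhat more careful about invertibility and the order-reversing step in the infinite-dimensional setting, but the substance is identical.
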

Proof of Lemma \ref{lemma_HS_bound} is given in Appendix \ref{lemma_HS_bound_proof}. We now focus on the convergence analysis of Algorithm \ref{Newton_step}. For brevity, here also we use $\Ga_t := \EE{\ns{\nabla R(f_t)}_\ast}$ as detailed in \eqref{deltat}.

\begin{theorem}\label{thm_Newton}
	{\bf (i)} Under Assumptions \ref{pseudograd}-\ref{grad}, \ref{Hessian_up_bound} and \ref{infinite_dim_grad_bound} and for finite $t$, upon selecting a \emph{constant step-size} $\eta < \min \left(\frac{1}{2 \lambda q_3}, \frac{q_3}{q_4} \right)$ and regularizer $\delta>\frac{c^\prime \nu_{\cD} \mu_{t+1}^{\max}}{D}$, the average sub-optimality satisfies 
	\begin{align} \label{Newton_const_step_conv}
		\EE[R(f_{t+1}) - R(f^\ast)] \leq (1-\rho)^t \EE[R(f_t) - R(f^\ast)] + \frac{{b^\prime}^2 L_1 (1 + \nu_{\cD}^2)}{\rho \delta^2} \eta^2  + \frac{{b^\prime}^2 \nu_{\cD}^2 \omega_2}{2 \rho \delta^2} \eta ,
	\end{align}
	where $\rho_t = 2\lambda q_3 (\eta_t - \frac{q_4}{q_3} \eta_t^2)$ with $q_3 = \frac{D}{\mu_{t+1}^{\max}} - \frac{D{c^\prime}^2 \nu_{\cD}^2 \omega_2}{2 \delta^2} - \frac{1}{2 \omega_2}$ and $q_4 = \frac{D{c^\prime}^2}{\delta^2} L_1 (1 + \nu_{\cD}^2)$. The positive constants $D$, $\lambda$, $L_1$, ${b^\prime}^2$, ${c^\prime}^2$, $\mu_{t+1}^{\max}$ and $\nu_{\cD}^2$ are defined in Assumptions \ref{pseudograd}, \ref{rcs}, \ref{dualdomainloss}, \ref{grad}, \ref{Hessian_up_bound} and \ref{infinite_dim_grad_bound} respectively, and the choice of Young's inequality constant $\omega_2$ is given in \eqref{omega_2_select}.\\
	{\bf (ii)} Under the same conditions, if we select a \emph{diminishing step-size} $\eta_t = \min\left( \frac{q_3}{2 q_4}, \frac{(2t+1)}{\lambda q_3 (t+1)^2} \right)$, the average sub-optimality converges as 
	\begin{align} \label{Newton_dimi_step_conv}
		\EE[R(f_{t+1}) - R(f^\ast)] \leq \frac{4 L_1 {b^\prime}^2 (1+\nu_{\cD}^2)}{\delta^2 \lambda^2 q_3^2 (t+1)} + \frac{{b^\prime}^2 \nu_{\cD}^2 \omega_2}{2 \lambda \delta^2 q_3} .
	\end{align}
\end{theorem}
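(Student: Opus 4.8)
The plan is to follow the template of the proof of Theorem~\ref{thm_sync}, with the Quasi--Newton direction $G_t = \g_{t}^\top\mathbf{A}_{t+1}^{-1}\k_{\cD}(\cdot)$ of \eqref{fn_up} in place of the projected pseudo-gradient and with the finite/infinite dictionary discrepancy $G_t - G_t^{(\infty)}$ [cf.\ \eqref{grad_notation}] playing the role that the compression budget $\epsilon$ played there. First I would apply the $L_1$-smoothness of $R_\psi$ (Assumption~\ref{dualdomainloss}) at the consecutive dual iterates $z_{t+1},z_t$, substitute $z_{t+1}-z_t=-\eta_t G_t$ and $R_\psi(\nabla\psi(f_t))=R(f_t)$, and split $G_t = G_t^{(\infty)} + (G_t - G_t^{(\infty)})$ to reach the approximate descent relation
\begin{align}\label{Newton_descent_sketch}
R(f_{t+1}) \le R(f_t) - \eta_t\langle\nabla R_\psi(z_t), G_t^{(\infty)}\rangle + \eta_t\langle\nabla R_\psi(z_t), G_t^{(\infty)} - G_t\rangle + \tfrac{L_1\eta_t^2}{2}\|G_t\|_\ast^2 ,
\end{align}
whose right-hand side again decomposes into the previous objective, a product-moment term, a truncation-error term, and a second-moment term.

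I would then estimate the last three terms. For the product-moment term I use the lower eigenvalue bound $\tfrac{1}{\mu_{t+1}^{\max}}\mathbf{I}\preccurlyeq\mathbf{A}_{\infty,t+1}^{-1}$ of Lemma~\ref{lemma_HS_bound} to pass from the preconditioned direction back to the bare pseudo-gradient: since $\langle\nabla R_\psi(z_t),\EE[\g_{\infty,t}^\top\k_{\cD_{\infty}}(\cdot)\,|\,\cF_t]\rangle\ge0$ by Assumption~\ref{pseudograd} and the eigenvalues of $\mathbf{A}_{\infty,t+1}$ are at most $\mu_{t+1}^{\max}$ (as in the reduction \eqref{pseudograd_2nd_order1_Newton}), one obtains $\EE\langle\nabla R_\psi(z_t),\EE[G_t^{(\infty)}\,|\,\cF_t]\rangle\ge\tfrac{1}{\mu_{t+1}^{\max}}P_t$ with $P_t := \EE\langle\nabla R_\psi(z_t),\EE[\g_{\infty,t}^\top\k_{\cD_{\infty}}(\cdot)\,|\,\cF_t]\rangle$. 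The truncation-error term I handle with Young's inequality with parameter $\omega_2$ and then Assumption~\ref{infinite_dim_grad_bound}, giving $\eta_t\EE\langle\nabla R_\psi(z_t), G_t^{(\infty)} - G_t\rangle \le \tfrac{\eta_t}{2\omega_2}\Ga_t + \tfrac{\eta_t\omega_2}{2}\nu_{\cD}^2\,\EE\|G_t^{(\infty)}\|_\ast^2$, where I use the identity $\nabla R_\psi(z_t)=\nabla R(f_t)$ so that $\EE\|\nabla R_\psi(z_t)\|_\ast^2=\Ga_t$ [cf.\ \eqref{deltat}]. For the second-moment term, $\|G_t\|_\ast^2 \le 2\|G_t^{(\infty)}\|_\ast^2 + 2\|G_t - G_t^{(\infty)}\|_\ast^2$ together with Assumption~\ref{infinite_dim_grad_bound} gives $\EE\|G_t\|_\ast^2 \le 2(1+\nu_{\cD}^2)\EE\|G_t^{(\infty)}\|_\ast^2$. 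Finally, the upper eigenvalue bound $\mathbf{A}_{\infty,t+1}^{-1}\preccurlyeq\tfrac{1}{\delta}\mathbf{I}$ of Lemma~\ref{lemma_HS_bound} combined with Assumption~\ref{grad} \eqref{gradbound_Newton} yields $\EE\|G_t^{(\infty)}\|_\ast^2 \le \tfrac{1}{\delta^2}\big({b^\prime}^2 + {c^\prime}^2 P_t\big)$, which is the source of the $\delta^2$ in the denominators of \eqref{Newton_const_step_conv}--\eqref{Newton_dimi_step_conv}.

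Substituting these four estimates into \eqref{Newton_descent_sketch} and collecting the coefficient of $P_t$ produces, after invoking the product-moment lower bound $P_t \ge D\,\Ga_t$ from \eqref{pseudograd2_Newton}, the decrement $-\eta_t(q_3-q_4\eta_t)\Ga_t$ with $q_3,q_4$ exactly as stated in the theorem; it is here that the regularizer bound $\delta > {c^\prime}\nu_{\cD}\mu_{t+1}^{\max}/D$ and a suitable choice of the Young's-inequality constant, say
\begin{align}\label{omega_2_select}
\omega_2 = \frac{\mu_{t+1}^{\max}}{D} ,
\end{align}
enter: they make $q_3>0$ and keep the bracketed coefficient of $P_t$ nonnegative, so the inequality $P_t\ge D\Ga_t$ may legitimately be applied. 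Subtracting $R(f^\ast)$, using the {P--\L} inequality (Assumption~\ref{rcs}) on $\Ga_t$, and writing $\rho_t = 2\lambda q_3(\eta_t - \tfrac{q_4}{q_3}\eta_t^2)$ gives the one-step contraction
\begin{align}\label{Newton_onestep_sketch}
\EE[R(f_{t+1})-R(f^\ast)] \le (1-\rho_t)\,\EE[R(f_t)-R(f^\ast)] + \Big(\tfrac{\eta_t\omega_2\nu_{\cD}^2}{2} + L_1\eta_t^2(1+\nu_{\cD}^2)\Big)\tfrac{{b^\prime}^2}{\delta^2} .
\end{align}
For part~(i) I fix $\eta_t=\eta$; requiring $0\le\rho\le1$ forces $\eta<\min(\tfrac{1}{2\lambda q_3},\tfrac{q_3}{q_4})$ exactly as in Theorem~\ref{thm_sync}, and unrolling \eqref{Newton_onestep_sketch} with $\sum_{i\ge0}(1-\rho)^i=1/\rho$ yields \eqref{Newton_const_step_conv}. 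For part~(ii), with $\eta_t=\min\big(\tfrac{q_3}{2q_4},\tfrac{2t+1}{\lambda q_3(t+1)^2}\big)$ the factor $1-\tfrac{q_4}{q_3}\eta_t\ge\tfrac12$ gives $\rho_t\ge\lambda q_3\eta_t\ge\tfrac{2t+1}{(t+1)^2}$, hence $1-\rho_t\le\tfrac{t^2}{(t+1)^2}$; multiplying \eqref{Newton_onestep_sketch} by $(t+1)^2$ telescopes the recursion, the $\eta_t^2$ error summing to the $\O(1/(t+1))$ term of \eqref{Newton_dimi_step_conv} and the linear-in-$\eta_t$ error contributing the persistent bias $\tfrac{{b^\prime}^2\nu_{\cD}^2\omega_2}{2\lambda\delta^2 q_3}$, i.e.\ the fixed point of the corresponding scalar recursion.

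The step I expect to be the main obstacle is the first one of the second paragraph: rigorously carrying the eigenvalue bounds of Lemma~\ref{lemma_HS_bound} through the conditional inner product $\langle\nabla R_\psi(z_t),\EE[G_t^{(\infty)}\,|\,\cF_t]\rangle$ and through $\EE\|G_t^{(\infty)}\|_\ast^2$, since $\mathbf{A}_{\infty,t+1}$ sits inside the conditional expectation and is correlated with the stochastic pseudo-gradient. This step has no counterpart in the first-order proof and is precisely what necessitates the eigenvalue-decay hypothesis (Assumption~\ref{infinite_dim_grad_bound}) and the lower bound on $\delta$; everything downstream is a bookkeeping variation on the proof of Theorem~\ref{thm_sync}.
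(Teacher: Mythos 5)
Your proposal follows essentially the same route as the paper's proof: smoothness of $R_\psi$ at the dual iterates, the split of $G_t$ into $G_t^{(\infty)}$ plus the truncation error handled by Young's inequality with $\omega_2$ and Assumption~\ref{infinite_dim_grad_bound}, the two eigenvalue bounds of Lemma~\ref{lemma_HS_bound} applied to the product-moment and second-moment terms, Assumption~\ref{grad} to produce the $\delta^{-2}$ factors, then \eqref{pseudograd2_Newton}, the {P--\L} condition, and the same unrolling/telescoping for the two step-size regimes. The only cosmetic difference is that you pin $\omega_2$ to the single value $\mu_{t+1}^{\max}/D$ whereas the paper derives the admissible interval \eqref{omega_2_select} from the quadratic $q_3>0$; both serve the same purpose of making the coefficient of the product-moment term a valid decrement.
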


\begin{proof}
	The proof follows analogous logic to that which yields Theorem \ref{thm_sync}. Begin by employing Assumption \ref{dualdomainloss} to write
	\begin{align}
		R_\psi(z_{t+1}) \!-\! R_\psi(z_t) \!-\! \langle \nabla R_\psi(z_t),z_{t+1} \!-\! z_t \rangle \!\leq\! \frac{L_1}{2} \!\ns{z_{t+1} \!-\! z_t}_{\ast} .
	\end{align}
	Now, substitute \eqref{fn_up} in the definition of $z_{t+1}$ associated with the Quasi--Newton update and add and subtract the infinite dimensional second order gradient $G_t^{(\infty)}$ [cf \eqref{grad_notation}] with $z_{t+1}$ term to write
	\begin{align} \label{Nw_main1}
		R_\psi(z_{t+1}) - R_\psi(z_t) \leq - \eta_t \langle \nabla R_\psi(z_t), G_t^{(\infty)} \rangle + L_1 \eta_t^2 \ns{G_t^{(\infty)}}_\ast + \eta_t \langle \nabla R_\psi(z_t),  G_t^{(\infty)} - G_t \rangle +  L_1 \eta_t^2 \ns{G_t^{(\infty)} - G_t}_\ast ,
	\end{align}
	where we have used the inequality $(a+b)^2 \leq 2a^2 + 2b^2$ to split the last term. We first focus on the expectation of the third term on the right hand side of \eqref{Nw_main1} which can be bounded using Young's inequality of constant $\omega_2$ as
	\begin{align}\label{peter_paul_ineq}
			\eta_t \EE \langle \nabla R_\psi(z_t), G_t^{(\infty)} \!-\! G_t \rangle \!\leq\! \frac{\eta_t}{2 \omega_2} \Ga_t \!+\! \frac{\eta_t \omega_2}{2} \EE \ns{G_t^{(\infty)} \!-\! G_t}_\ast ,
	\end{align}
	where $\Ga_t$ is given in \eqref{deltat} and noting that $\nabla R_\psi(z_t) = \nabla R(f_t)$. The last terms of \eqref{peter_paul_ineq} and \eqref{Nw_main1} can be grouped together and use of assumption \ref{infinite_dim_grad_bound} further simplifies expectation of \eqref{Nw_main1} as
	\begin{align}
		\EE [R_\psi(z_{t+1}) - R_\psi(z_t)] \leq - \eta_t \EE \langle \nabla R_\psi(z_t), \EE[G_t^{(\infty)} | \cF_t] \rangle  + \frac{\eta_t}{2 \omega_2} \Ga_t + \left(\frac{\eta_t \nu_{\cD}^2 \omega_2}{2}+L_1 (1+\nu_{\cD}^2) \eta_t^2 \right) \EE \ns{G_t^{(\infty)}}_\ast . \label{simplified1}
	\end{align} 
	Next focusing on the third term on the right hand side of \eqref{simplified1}, the term $\ns{G_t^{(\infty)}}_\ast$ [cf \eqref{grad_notation}] can be factorized as
	\begin{align} \label{Nw_term2_split}
		\!\!\!\ns{G_t^{(\infty)}}_\ast = \k_{\cD_\infty}(\cdot)^\top \mathbf{A}_{\infty,t+1}^{-1} \g_{\infty,t} \g_{\infty,t}^\top \mathbf{A}_{\infty,t+1}^{-1} \k_{\cD_\infty}(\cdot) .
	\end{align}
	Using the definition $\g_{\infty,t} \g_{\infty,t}^\top = [\g_{\infty,t}^\prime]^2 \k_{\cD_{\infty}}(\x_t) \k_{\cD_{\infty}}(\x_t)^\top$, which is non-negative, along with using the upper-bound on the Hessian inverse estimate $\mathbf{A}_{t+1}^{-1}$ in Lemma \ref{lemma_HS_bound}, we may simplify the right-hand side of \eqref{Nw_term2_split} as $\|G_t^{(\infty)}\|_{\ast}^2 \leq \frac{1}{\delta^2} \k_{\cD_{\infty}}(\cdot)^{\top} \g_{\infty,t} \g_{\infty,t}^\top \k_{\cD_{\infty}}(\cdot) = \frac{1}{\delta^2} \ns{\g_{\infty,t}^\top \k_{\cD_\infty}(\cdot)}_\ast$.
	%
	%
	Using Assumption \ref{grad}, its expected value can be bounded as
	\begin{align} \label{Nw_term2_split3}
		\!\!\EE\!\ns{G_t^{(\infty)}}_\ast \!\leq \frac{{b^\prime}^2}{\delta^2} \!+\! \frac{{c^\prime}^2}{\delta^2} \EE [\ip{\nabla R_\psi(z_t\!)}{\EE[\g_{\infty,t}^\top \k_{\cD_\infty}\!(\cdot\!) | \cF_t]}] .\!
	\end{align}
	Now, let us return to the first-term on the right hand side of \eqref{simplified1}. Apply Lemma \ref{lemma_HS_bound} to replace $\mathbf{A}_{t+1}^{-1}$ by its upper-bound in terms of identity and $\mu_{t+1}^{\max}$:
	\begin{align}\label{Nw_term1_upbound}
		-\eta_t \EE \langle \nabla R_\psi(z_t), \EE[G_t^{(\infty)} | \cF_t] \rangle \leq - \frac{\eta_t}{\mu_{t+1}^{\max}} \EE \langle \nabla R_\psi(z_t), \EE [\g_{\infty,t}^\top \k_{\cD_\infty}(\cdot) | \cF_t] \rangle .
	\end{align}
	So now, via \eqref{Nw_term2_split3} and \eqref{Nw_term1_upbound}, the expected value of the objective difference across time in \eqref{simplified1} is bounded as
	\begin{align}
		\EE [R_\psi(z_{t+1}) - R_\psi(z_t)] \leq & \frac{\eta_t}{2 \omega_2} \Ga_t + \frac{{b^\prime}^2}{\delta^2} \left[ \frac{\eta_t \nu_{\cD}^2 \omega_2}{2} + L_1 (1 + \nu_{\cD}^2) \eta_t^2 \right] \nonumber\\
		& - \left[\frac{\eta_t}{\mu_{t+1}^{\max}} - \frac{{c^\prime}^2}{\delta^2} \left(\frac{\eta_t \nu_{\cD}^2 \omega_2}{2} + L_1 (1 + \nu_{\cD}^2) \eta_t^2 \right) \right] \times \EE \langle \nabla R_\psi(z_t), \EE [\g_{\infty,t}^\top \k_{\cD_\infty}(\cdot) | \cF_t] \rangle \label{Nw_main2}
	\end{align}
	The third term on the right-hand side of \eqref{Nw_main2} can be bounded via the pseudo-gradient inequality [cf.  \eqref{pseudograd2_Newton}] of Assumption \ref{pseudograd}, after which we gather like terms. Then, by substituting $\nabla R_{\psi}(z_t) = \nabla R(f_t)$ and subtracting $R(f^\ast)$ on both sides of \eqref{Nw_main2}, and using \eqref{deltat}, we may write
	\begin{align} \label{Nw_main3}
		\EE[R(f_{t+1}) - R(f^\ast)] \leq \EE[R(f_t) - R(f^\ast)] - (q_3 \eta_t - q_4 \eta_t^2) \Ga_t + \frac{{b^\prime}^2}{\delta^2} \left[\frac{\eta_t \nu_{\cD}^2 \omega_2}{2} + L_1 (1 + \nu_{\cD}^2) \eta_t^2 \right] ,
	\end{align}
	where $q_3 = \frac{D}{\mu_{t+1}^{\max}} - \frac{D{c^\prime}^2 \nu_{\cD}^2 \omega_2}{2 \delta^2} - \frac{1}{2 \omega_2}$ and $q_4 = \frac{D{c^\prime}^2}{\delta^2} L_1 (1 + \nu_{\cD}^2)$. The quantity $q_3$ has to be positive, which yields a quadratic in terms of $\omega_2$. Via the quadratic formula, we have:
	\begin{align} \label{omega_2_select}
		\frac{\frac{D}{\mu_{t+1}^{\max}} \!-\! \sqrt{\frac{D^2}{(\mu_{t+1}^{\max})^2} \!-\! \frac{{c^\prime}^2 \nu_{\cD}^2}{\delta^2}}}{\frac{{c^\prime}^2 \nu_{\cD}^2}{\delta^2}} < \omega_2 < \frac{\frac{D}{\mu_{t+1}^{\max}} \!+\! \sqrt{\frac{D^2}{(\mu_{t+1}^{\max})^2} \!-\! \frac{{c^\prime}^2 \nu_{\cD}^2}{\delta^2}}}{\frac{{c^\prime}^2 \nu_{\cD}^2}{\delta^2}} .
	\end{align}
	The lower bound on $\omega_2$ is always positive and from the discriminant term we get the regularizer selection as $\delta \geq \frac{c^\prime \nu_{\cD} \mu_{t+1}^{\max}}{D}$. Assumption \ref{rcs} applied to the second term of the right-hand side of \eqref{Nw_main3} followed by grouping terms yields
	\begin{align} \label{Nw_main4}
		\EE[R(f_{t+1}) &- R(f^\ast)] \leq (1-\rho_t) \EE[R(f_t) - R(f^\ast)] + \frac{{b^\prime}^2 L_1 (1 + \nu_{\cD}^2)}{\delta^2} \eta_t^2  + \frac{{b^\prime}^2 \nu_{\cD}^2 \omega_2}{2 \delta^2} \eta_t ,
	\end{align}
	where $\rho_t = 2\lambda q_3 ( \eta_t - \frac{q_4}{q_3} \eta_t^2)$ with $q_3$ and $q_4$ defined after \eqref{Nw_main3}.\\
	\noindent \textbf{Constant step-size:} Set the step-size as constant $\eta_t = \eta$ and hence $\rho_t$ is time-invariant: $\rho_t = \rho$. To ensure \eqref{Nw_main4} defines a contraction up to an error bound same condition as \eqref{rho_condition} shall be satisfied for $\rho$. Following the similar analysis for Algorithm \ref{SPFPMD_algorithm} step-size selection criterion in \eqref{basicbound_sync3} - \eqref{eta2ndcondn_sync}, here we get the condition on step-size as 
	\begin{align}\label{eta3rdcondn_sync}
		\eta < \min \left(\frac{1}{2 \lambda q_3}, \frac{q_3}{q_4} \right) .
	\end{align}
	Note that the maximum eigen value $\mu_{t+1}^{\max}$ (inside $q_3$) of $\mathbf{A}_{\infty,t+1}$, obtained by addition of rank-1 positive outer products of pseudo-gradient vectors $\tilde{g}_{\infty,t}$, is the maximum eigenvalue of Hessian estimates up to the present time $t$, and increases slowly as the iterate progresses. Hence our convergence results pertain to finite $t$.
	Returning to \eqref{Nw_main4} with the step-size selection \eqref{eta3rdcondn_sync}, we may simplify via recursive substitution of the first term on the right-hand side of \eqref{Nw_main4} yielding \eqref{Newton_const_step_conv}. We know that $\rho = \O (\eta)$. Hence the mean distance from optima depends on $\O(\eta) + \O(\frac{\nu_{\cD}^2 \omega_2}{\delta^2})$, where the last term with fixed bias came as a result of approximating the infinite dimensional dictionary in terms of finite dimension $\cD$.

	\noindent \textbf{Diminishing step-size:} For this case, we choose $\eta_t = \min\left( \frac{q_3}{2 q_4}, \frac{(2t+1)}{\lambda q_3 (t+1)^2} \right)$. For such a step-size choice, the factor $\eta_t - \frac{q_4}{q_3} \eta_t^2$ in $\rho_t$ of \eqref{Nw_main4} can be lower bounded by
	\begin{align}
		\eta_t - \frac{q_4}{q_3} \eta_t^2 \geq \eta_t - \frac{q_4}{q_3} \times \frac{q_3}{2 q_4} \eta_t = \frac{\eta_t}{2} . \label{lb}
	\end{align}
	Hence the lower bound of \eqref{lb} allows us to lower bound $\rho_t$ with $\lambda q_3 \eta_t$. Using this lower bound on $\rho_t$ finally in \eqref{Nw_main4} and using the fact that $\eta_t = \frac{(2t+1)}{\lambda q_3 (t+1)^2}$ for large $t$, we get finally
	\begin{align} \label{dimi_step}
		\EE[R(f_{t+1}) - R(f^\ast)] \leq \frac{t^2}{(t+1)^2} \EE[R(f_t) - R(f^\ast)] + \frac{L_1 {b^\prime}^2 (1+\nu_{\cD}^2) (2t+1)^2}{\delta^2 \lambda^2 q_3^2 (t+1)^4} + \frac{{b^\prime}^2 \nu_{\cD}^2 \omega_2 (2t+1)}{2 \lambda \delta^2 q_3 (t+1)^2} .
	\end{align}
	Multiplying both sides of \eqref{dimi_step} with $(t+1)^2$ and taking $\beta(t) = t^2 \EE[R(f_t) - R(f^\ast)]$ and using the fact that $\frac{2t+1}{t+1} < 2$ for all $t$, \eqref{dimi_step} simplifies as
	\begin{align}
		\beta(t+1) - \beta(t) \leq \frac{4 L_1 {b^\prime}^2 (1+\nu_{\cD}^2)}{\delta^2 \lambda^2 q_3^2} + \frac{{b^\prime}^2 \nu_{\cD}^2 \omega_2 (2t+1)}{2 \lambda \delta^2 q_3} . \label{dimi_step3}
	\end{align}
	Doing a telescopic sum of \eqref{dimi_step3} from $t=0$ to $t$ and noting that $\beta(0)=0$ yields
	\begin{align}
		(t+1)^2 \EE[R(f_{t+1}) - R(f^\ast)] \leq \frac{4 L_1 {b^\prime}^2 (1+\nu_{\cD}^2) (t+1)}{\delta^2 \lambda^2 q_3^2} + \frac{{b^\prime}^2 \nu_{\cD}^2 \omega_2 (t+1)^2}{2 \lambda \delta^2 q_3} . \label{dimi_step4}
	\end{align}
	Dividing both sides of \eqref{dimi_step4} by $(t+1)^2$ yields \eqref{Newton_dimi_step_conv}.
\end{proof}

The convergence result of \eqref{Newton_const_step_conv} regarding Algorithm \ref{Newton_step} is similar to its first-order counterpart, but deals with the additional technicalities associated with the second-order information incorporated into the pseudo-gradients as preconditioning. Moreover, due to the use of a fixed subspace size $|\mathcal{D}|$, there is no need to discuss projection-induced errors in update directions. In addition, for Algorithm \ref{Newton_step}, we achieve a diminishing rate of convergence of order $\O(\frac{1+\nu_{\cD}^2}{\delta^2 q_3^2 (t+1)})$ plus an additional bias $\O(\frac{\nu_{\cD}^2 \omega_2}{\delta^2 q_3})$ due to approximating the subspace with finite dictionary $\cD$, for diminishing step-size selection as given in \eqref{Newton_dimi_step_conv}. We acknowledge that faster local rate analyses of Quasi--Newton methods exist (such as superlinear \cite{rodomanov2021new} and quadratic \cite{NoceWrig06}[Ch. 6]), but our priority was global rate analysis and analyzing the evolution in the RKHS. Thus, we defer sharpening the rate analysis to future work.


%

	\section{Experiments}\label{sec:experiments}
	We shift to experimental validation of Algorithms \ref{SPFPMD_algorithm}, \ref{Newton_step} and \ref{SPPPOT_Newton_step} as compared with state of the art second order offline and first order online benchmarks for normalized intensity estimation in inhomogeneous Poisson Point Processes (PPP, Example \ref{eg:poisson}). Throughout, we select as our pseudo-gradient the kernel embedding -- see Secs. \ref{sec:algorithm} and \ref{sec:algorithm_Newton}.
	
	We employ KOMP in SPPPOT with both constant (same as \cite{koppel2019parsimonious}) and adaptive budget selection. Constant budgeting requires trial and error to yield a stabilizing model order, whereas for adaptive budget, we tune $\alpha$ defined by budget specification $\epsilon=\alpha\eta$ at time $t$ as 
	\begin{align*}
		\alpha(t+1)\!=\!\alpha(t)\!\left[1\!+\!\max \left( \min \left( (M_t
		\!-\! d_{mo}\!)\!\times\! 0.1\%,0.1 \!\right)\!, \!-\!0.1 \!\right) \right] ,
	\end{align*}
	where $\alpha(t+1)$ has an enforced upper and lower bound to ensure it remains nonzero even when near the target dictionary size $d_{mo}$. The intuition is that this constant increases when the model order $M_t$ starts increasing from $d_{mo}$ so that we allow a bit higher error tolerance budget $\epsilon_{t+1} = \alpha(t+1) \eta$, that results in reduction of $M_{t+1}$. The contrary is true when $M_t<d_{mo}$. 
%

%

\begin{table*}[t]\centering
	%
	\begin{tabular}{|P{1.5cm}|P{0.6cm}|P{0.32cm}|P{0.7cm}|P{2.8cm}|P{2cm}|P{2.6cm}|P{0.2cm}|P{0.4cm}|P{0.7cm}|P{1.3cm}|}
		\hline
		Algorithm & Epochs & a & Kernel B.W. & $\eta$ & Mini-Batch Size & Budget & $\delta$ & $\abs{\cU}$ & Model order & Time\\
		\hline
		\multicolumn{11}{|c|}{ \vspace{-2mm}} \\
		\multicolumn{11}{|c|}{\bf 1-D Gaussian toy example} \\
		\hline
		BFGS & 1 & 0.15 & 0.0065 & \textendash & \textendash & \textendash & \textendash & 100 & 10211 & $\sim$ 8 hrs \\
		\hline
		PMD & 8 & \textendash & 0.0065 & 0.05 & 30 & \textendash & \textendash & 100 & 100 & $\sim$ 5 mins\\
		\hline
		POLK & 8 & \textendash & 0.0065 & 0.006 & 30 & $\epsilon = 3.5 \times 10^{-8}$ & \textendash & 100 & 179 & $\sim$ 5 mins\\
		\hline
		Algorithm \ref{SPFPMD_algorithm} ($\epsilon$ fixed)& 8 & \textendash & 0.0065 & 0.012 & 30 & $\epsilon = 6.6 \times 10^{-6}$ & \textendash & 100 & 100 & $\sim$ 5 mins\\
		\hline
		Algorithm \ref{SPFPMD_algorithm} ($\epsilon_t$ adaptive)& 8 & \textendash & 0.0065 & 0.012 & 30 & $\alpha(0) = 2 \times 10^{-6}$, $d_{mo} = 105$ & \textendash & 100 & 105 & $\sim$ 5 mins\\
		\hline
		Algorithm \ref{Newton_step}& 8 & \textendash & 0.0065 & 1.25 & 1 & \textendash & 1 & 100 & 100 & $\sim$ 20 mins\\
		\hline
		Algorithm \ref{Newton_step}& 8 & \textendash & 0.0065 & 1 & 1 & \textendash & 1 & 50 & 50 & $\sim$ 20 mins\\
		\hline
		Algorithm \ref{SPPPOT_Newton_step}& 8 & \textendash & 0.0065 & SPPPOT: 0.1, Quasi--Newton: 1.8182 & SPPPOT: 30, Quasi--Newton: 1 & $\alpha(0)=1.5 \times 10^{-6}$, $d_{mo}=90$ & 1 & 50 & 90 & $\sim$ 20 mins\\
		\hline
		\multicolumn{11}{|c|}{ \vspace{-2mm}} \\
		\multicolumn{11}{|c|}{\bf Stephen Curry Data} \\
		\hline
		BFGS & 1 & 1 & 0.0025 & \textendash & \textendash & \textendash & \textendash & 100 & 8298 & $\sim$ 8 hrs\\
		\hline
		PMD & 10 & \textendash & 0.0025 & 0.1 & 30 & \textendash & \textendash & 100 & 100 & $\sim$ 5 mins \\
		\hline
		Algorithm \ref{SPFPMD_algorithm}& 10 & \textendash & 0.0025 & 0.03 & 30 & $\epsilon = 10^{-5}$ & \textendash & 100 & 100 & $\sim$ 5 mins\\
		\hline
		Algorithm \ref{Newton_step}& 10 & \textendash & 0.0025 & 0.2381 & 1 & \textendash & 1 & 100 & 100 & $\sim$ 20 mins\\
		\hline
		Algorithm \ref{Newton_step}& 10 & \textendash & 0.0025 & 0.2381 & 1 & \textendash & 1 & 50 & 50 & $\sim$ 20 mins\\
		\hline
		Algorithm \ref{SPPPOT_Newton_step}& 10 & \textendash & 0.0025 & SPPPOT: 0.03, Quasi--Newton: 0.1408 & SPPPOT: 30, Quasi--Newton: 1 & $\epsilon = 10^{-5}$ & 1 & 100 & 100 & $\sim$ 20 mins\\
		\hline
		\multicolumn{11}{|c|}{ \vspace{-2mm}} \\
		\multicolumn{11}{|c|}{\bf Chicago Crime Spatial Data} \\
		\hline
		BFGS & 1 & 1 & 0.01 & \textendash & \textendash & \textendash & \textendash & 441 & 10000 & $\sim$ 12 hrs\\
		\hline
		PMD & 4 & \textendash & 0.01 & 0.8 & 30 & \textendash & \textendash & 441 & 441 & $\sim$ 5 mins\\
		\hline
		Algorithm \ref{SPFPMD_algorithm}& 4 & \textendash & 0.01 & 0.15 & 30 & $\epsilon = 3 \times 10^{-4}$ & \textendash & 441 & 441 & $\sim$ 5 mins\\
		\hline
		Algorithm \ref{Newton_step}& 4 & \textendash & 0.01 & 6.6667 & 1 & \textendash & 1 & 441 & 441 & $\sim$ 20 mins\\
		\hline
		Algorithm \ref{SPPPOT_Newton_step}& 4 & \textendash & 0.01 & SPPPOT: 0.4, Quasi--Newton: 5 & SPPPOT: 30, Quasi--Newton: 1 & $\epsilon = 3 \times 10^{-4}$ & 1 & 441 & 441 & $\sim$ 20 mins\\
		\hline
	\end{tabular}
	\caption{Parameters selected for the various experiments.}
	\label{tab:hyperparameters}\vspace{-3mm}
\end{table*}
%

For PPP pdf estimation (Example \ref{eg:poisson}) of function $f$, the negative log-likelihood \eqref{obj_func_poisson3} (identical to \cite[Eq. (3.1)]{flaxman2017poisson}) defines the loss, whose minimization over time may be executed via Algorithms \ref{SPFPMD_algorithm}, \ref{Newton_step} and \ref{SPPPOT_Newton_step}. For comparison, batch offline BFGS (Quasi--Newton) minimization method \cite{flaxman2017poisson} and also online Pseudo Mirror Descent (PMD) \cite[Eq. (8)]{yang2019learning} are implemented, where the later uses only grid points with no dictionary adaptation. We also note that the PMD loss function \cite[Eq. (8)]{yang2019learning} requires the optimal intensity $f^\ast$ term, which is unknown for the real world data. BFGS is included in 'fminunc' package of MATLAB with intensity given by $f(\cdot) = a f^\prime(\cdot)^2$ where $a$ is a positive scalar used for tuning. $f(\cdot)$ is the actual intensity function expressed via $f^\prime(\cdot)^2$ to preserve positivity. We also compare with the online benchmark of POLK \cite{koppel2019parsimonious}, i.e., functional stochastic gradient method with the RKHS norm as the Bregman divergence, combined with KOMP in the primal RKHS, not the dual.

For all online schemes that use mini-batching, dividing the cumulative gradient by the mini-batch size is needed to remove scale ambiguity of integral approximations in the pseudo-gradient \eqref{PPP_grad}. The function learnt in BFGS is scaled by the number of data points. Thus, the function learnt for all the schemes is the normalized intensity. To obtain the intensity, the density must be multiplied by the sample size. For all the experiments we use Gaussian kernel, whose bandwidth is chosen using via Silverman's rule of thumb \cite{silverman1986density} for synthetic 1-D Gaussian toy example, and by trial and error for the real world data sets. The rest of the system parameters for different experiments are chosen by trial and error -- see Table \ref{tab:hyperparameters}.


{\bf \noindent 1-D Gaussian toy example.}
%
We consider 10211 training and 1001 test data points generated from a non-homogeneous PPP model having Gaussian density $f(x) = \frac{10}{\sqrt{2 \pi}} \exp\left( 50 (x-0.5)^2 \right)$. All data points $x\in(0,1)$ are univariate and have restricted range.

\noindent {\it Results.} The normalized intensity estimate obtained through different algorithms and their corresponding model order, log of RMSE and the test loss evaluation with respect to the number of samples processed are given in Figs. \ref{pdf_gaussian_toy}, \ref{model_order_gaussian_toy}, \ref{log_RMSE_gaussian_toy} and \ref{loss_gaussian_toy} respectively. From the intensity plot in Fig. \ref{pdf_gaussian_toy}, we observe that BFGS, PMD, Algorithms \ref{SPFPMD_algorithm}, \ref{Newton_step} and \ref{SPPPOT_Newton_step} yield densities near the ground truth, whereas POLK, with high test loss: $2.0148$ and test RMSE: $5.8317$, yields density estimates that become negative at a few places. This underscores the importance of using the KL-Divergence in Algorithms \ref{SPFPMD_algorithm}, \ref{Newton_step} and \ref{SPPPOT_Newton_step} to preserve positivity. Algorithm \ref{SPPPOT_Newton_step} with model order $90$ ($|\mathcal{U}|=50$ uniform grid points and $40$ Poisson points) outperforms alternatives including Algorithm \ref{Newton_step} since application of SPPPOT in Algorithm \ref{SPPPOT_Newton_step} allows it to discern the optimal subspace for the Quasi--Newton updates. Overall, then, SPPPOT yields a favorable accuracy/complexity tradeoff among first order algorithms whereas Algorithms \ref{Newton_step} and \ref{SPPPOT_Newton_step} achieve superior performance.

\noindent {\bf Stephen Curry Data.}
We experiment with the NBA dataset of Stephen Curry which has shot distances from the basket up to $40$ feet normalized as samples $x \in \mathbb{R}$. These samples are split into  $8298/1000$ points for training/testing.


\noindent {\it Results.} The training loss, the estimated normalized intensity, and the model order are given in Figs. \ref{loss_Curry}, \ref{intensity_Curry}, and \ref{model_order_Curry}, respectively. The runtime difference between BFGS and the other online methods (given in Table \ref{tab:hyperparameters}) is reflected in the model complexity difference in \ref{model_order_Curry}. Among first order algorithms, SPPPOT outperforms the batch solver after a few training epochs (Fig. \ref{loss_Curry}), and yields an intensity much closer to the offline baseline BFGS compared to PMD. Algorithms \ref{Newton_step} and \ref{SPPPOT_Newton_step} outperforms the existing state of the art schemes and the offline BFGS by a healthy margin.


\noindent {\bf Chicago Crime Spatial Data.}
Chicago 2018 crime \href{https://data.cityofchicago.org/Public-Safety/Crimes-2018/3i3m-jwuy}{dataset} has 2D features (X-Y Coordinates) determining the crime location in the city. $10,000$ training points and $2000$ test points are randomly selected and normalized to lie within a unit square.


\noindent {\it Results.} Fig. \ref{Histogram_Chicago} shows the histogram of the $10,000$ training data (normalized) with $30 \times 30$ bins in the square grid. The histogram scale is not to be confused with the learned density since histogram is dependent on the number of data points and the size of each bin whereas the later is not. Fig. \ref{loss_Chicago} is negative at someplaces since density aka pdf is used in the negative log-likelihood. Observe that the precision and accuracy of Algorithms \ref{Newton_step} and \ref{SPPPOT_Newton_step} in terms of test loss evaluation (Fig. \ref{loss_Chicago}) and determination of the peaks/troughs of the normalized intensity with respect to the ground truth (Figs. \ref{Quasi_Newton_intensity} and \ref{SPPPOT_then_Newton_intensity}) helps it achieve the state of the art.
%
%
The run time complexity (given in Table \ref{tab:hyperparameters}) is justified from Fig. \ref{Model_order_Chicago}.

\begin{figure*}[t]\centering
	\begin{subfigure}{\columnwidth}
		\hspace{1.5cm}\includegraphics[scale = 0.7, trim=0mm 6.9mm 0mm 6.3mm, clip=true]
		{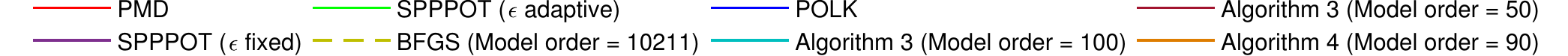}
	\end{subfigure}
	\begin{subfigure}{.245\columnwidth}
		\includegraphics[width=1.1\linewidth, height = 0.75\linewidth]
		{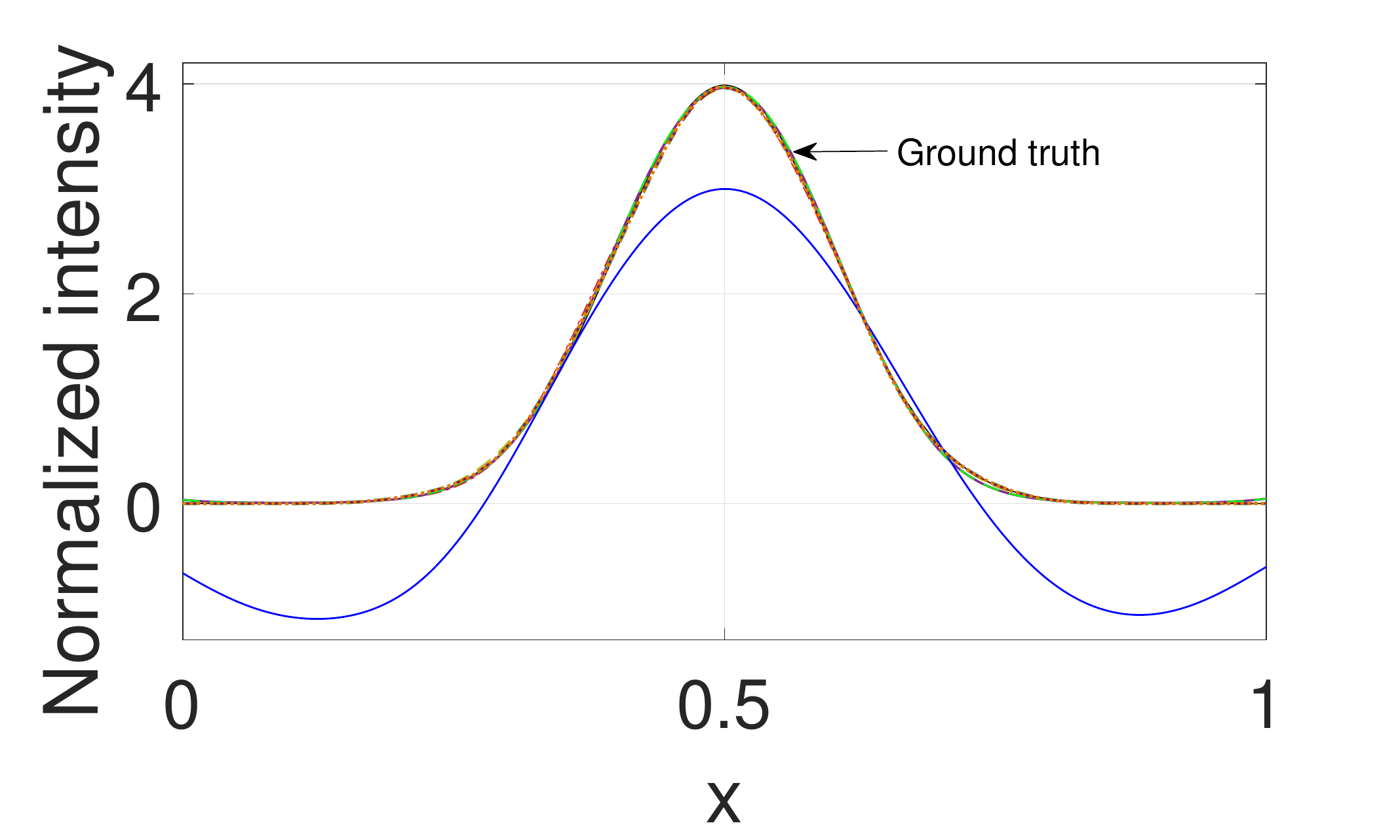}
		\caption{Fitted Normalized intensity}
		\label{pdf_gaussian_toy}
	\end{subfigure}
	\begin{subfigure}{.245\columnwidth}
		\includegraphics[width=1.1\linewidth,height = 0.75\linewidth]
		{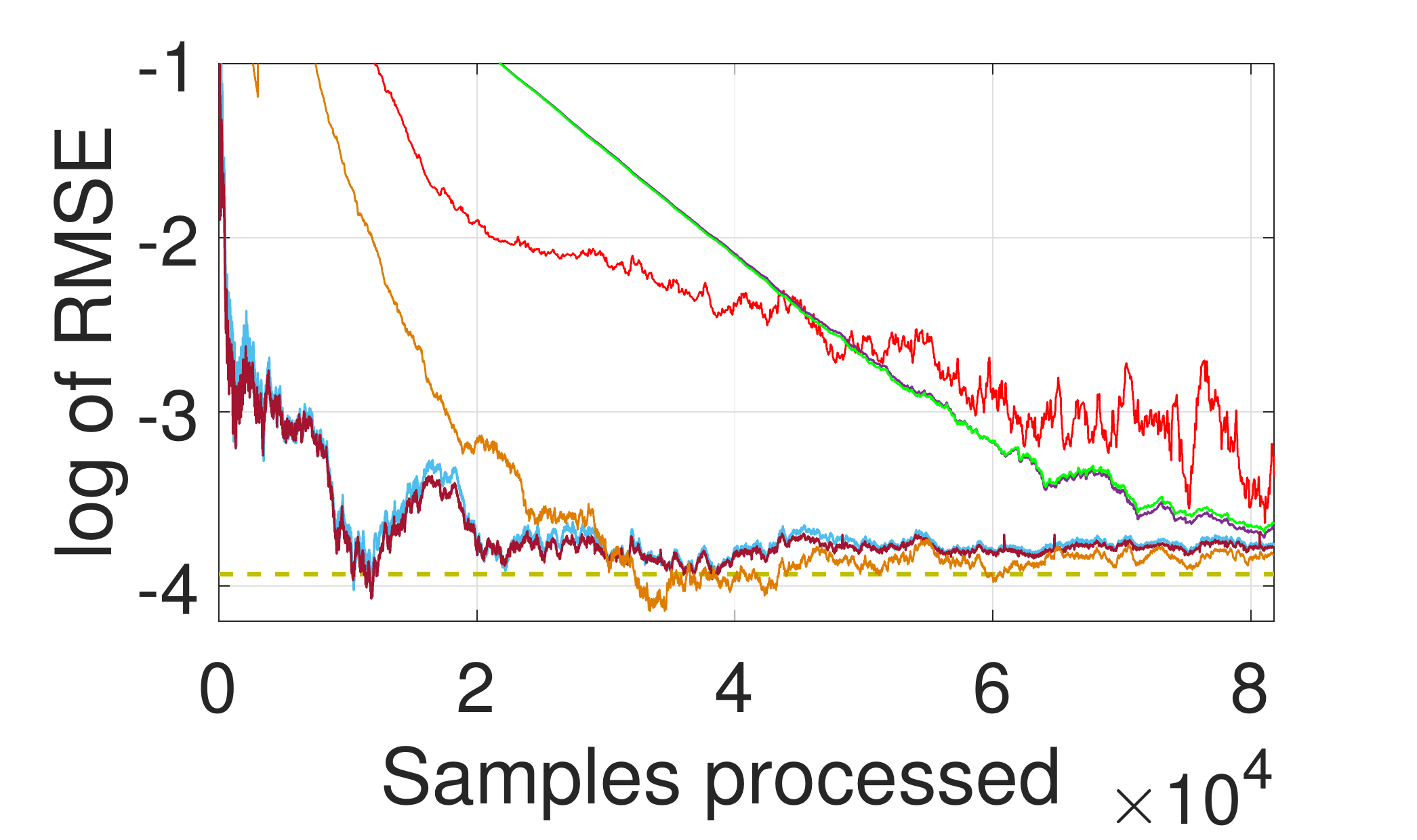}
		\caption{log of RMSE vs. samples}
		\label{log_RMSE_gaussian_toy}
	\end{subfigure}
	\begin{subfigure}{.245\columnwidth}
		\includegraphics[width=1.1\linewidth,height = 0.75\linewidth]
		{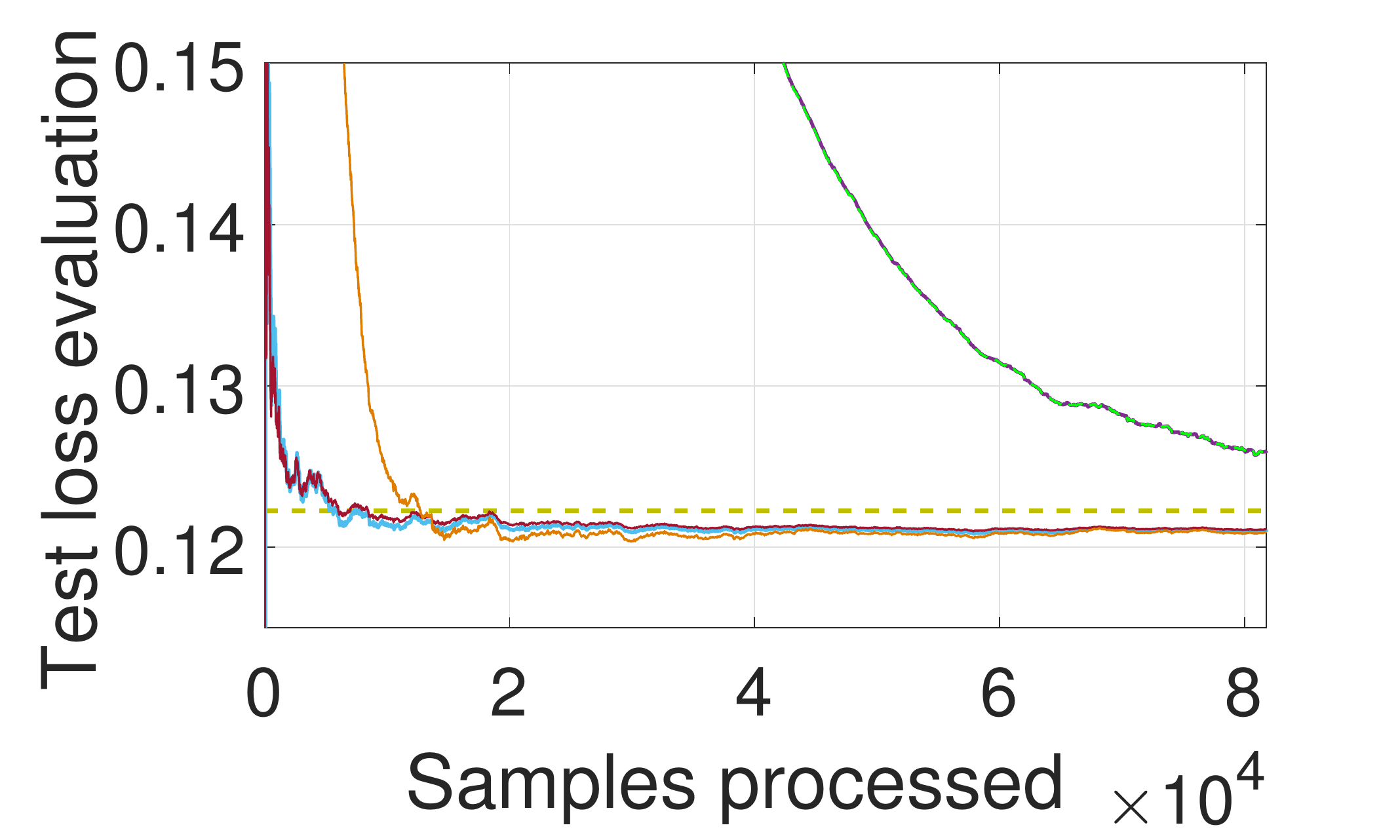}
		\caption{Test Loss vs. samples}
		\label{loss_gaussian_toy}
	\end{subfigure}
	\begin{subfigure}{.245\columnwidth}
		\includegraphics[width=1.1\linewidth,height = 0.75\linewidth]
		{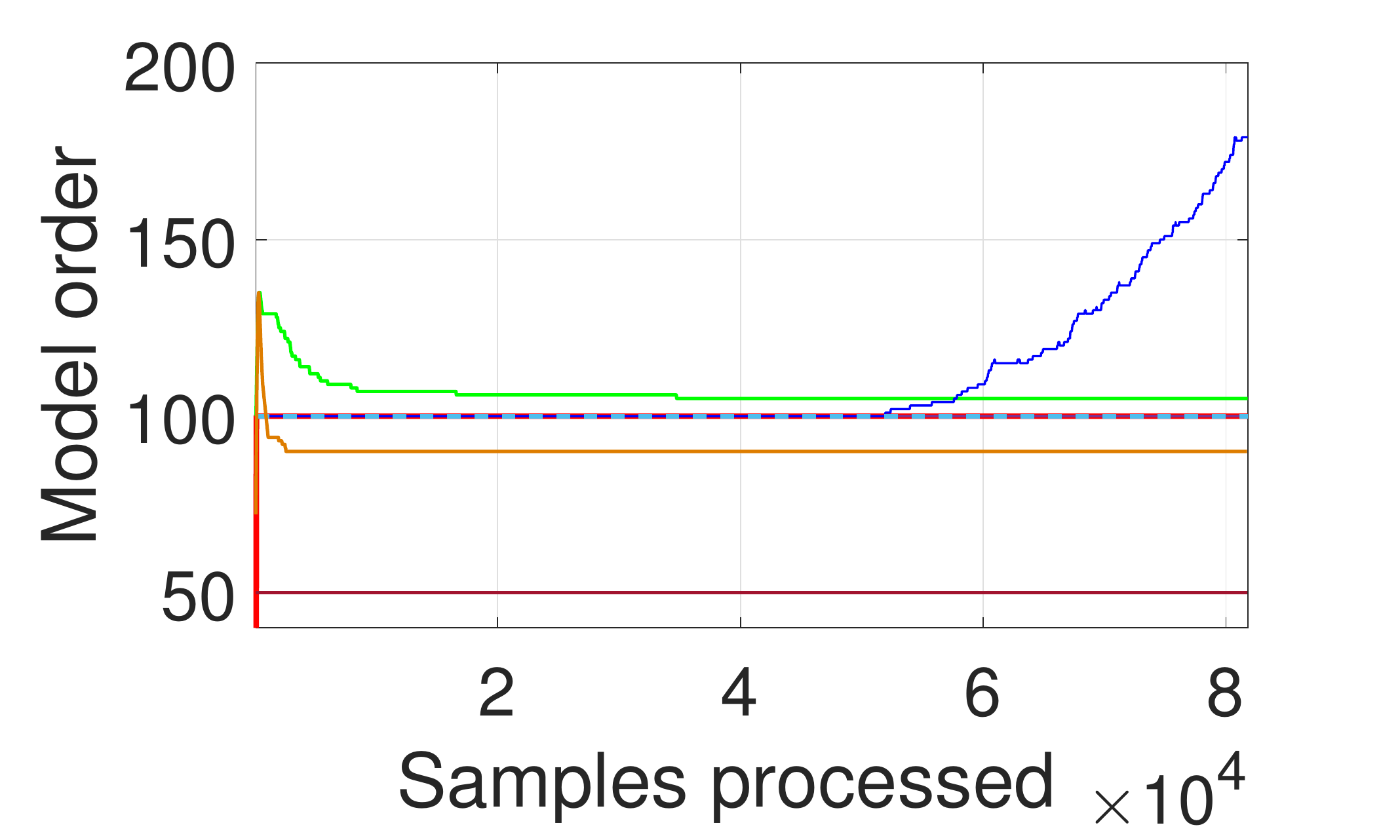}
		\caption{Model Complexity vs. samples}
		\label{model_order_gaussian_toy}
	\end{subfigure}%
	\caption{We showed the performance of Algorithms \ref{SPFPMD_algorithm}, \ref{Newton_step} and \ref{SPPPOT_Newton_step} in comparison to PMD and offline BFGS algorithm on the 1-D synthetic Gaussian toy example. POLK is also implemented with constant budget to emphasize that it violates the range constraint. Online second order Algorithms \ref{Newton_step} and \ref{SPPPOT_Newton_step} achieves the state of the art.}
	\label{fig:Gaussian_intensity_label} \end{figure*}
%

\begin{figure*}[t]\centering
	\begin{subfigure}{\columnwidth}
		\hspace{2.8cm}\includegraphics[scale = 0.73, trim=0mm 5mm 0mm 7mm, clip=true]
		{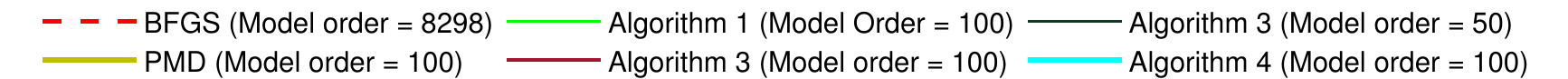}
	\end{subfigure}
	\begin{subfigure}{.33\columnwidth}
		\includegraphics[width=1.1\linewidth, height = 0.75\linewidth]
		{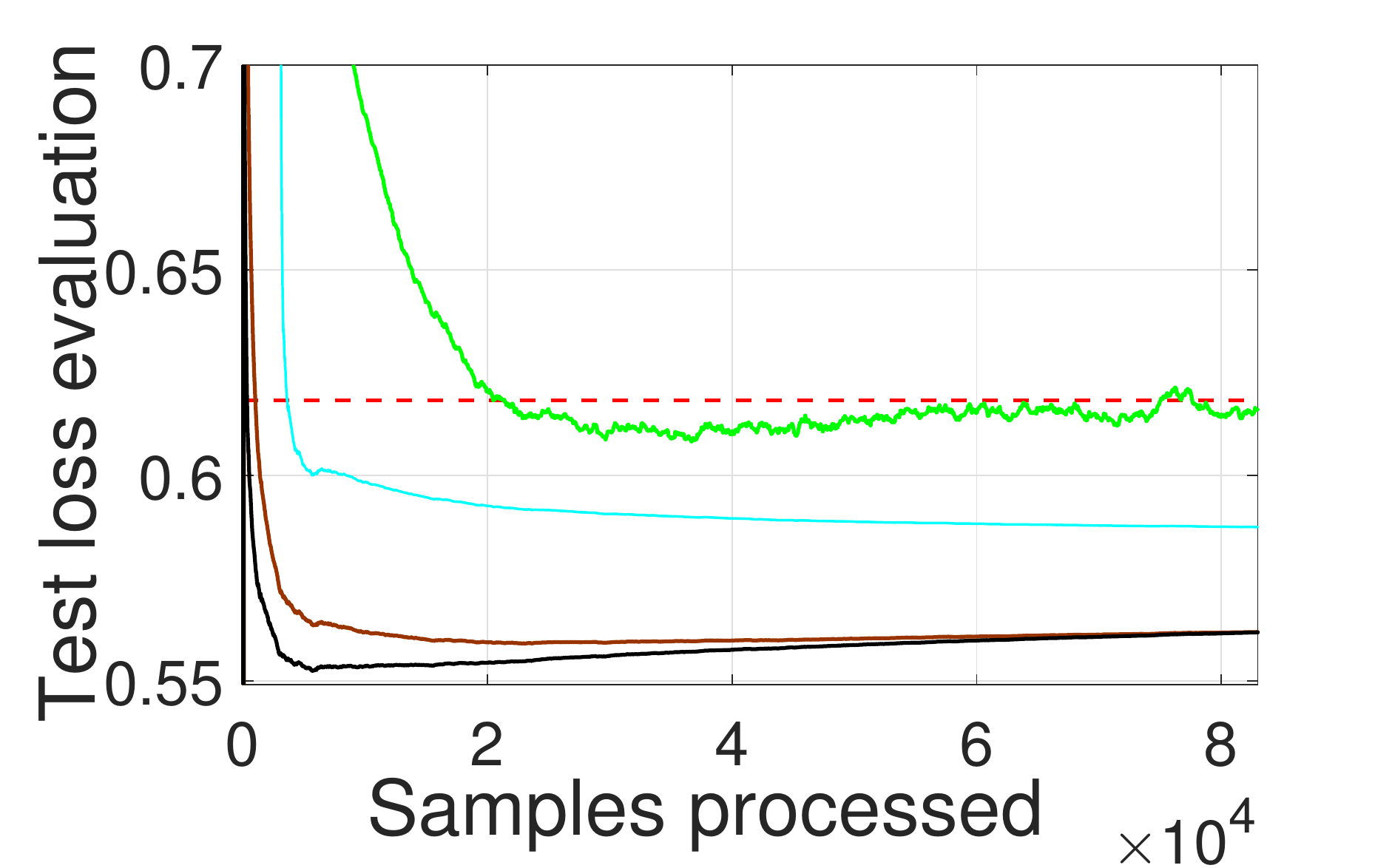}
		\caption{Test Loss vs. samples processed}
		\label{loss_Curry}
	\end{subfigure}
	\begin{subfigure}{.325\columnwidth}
		\includegraphics[width=1.05\linewidth,height = 0.75\linewidth]
		{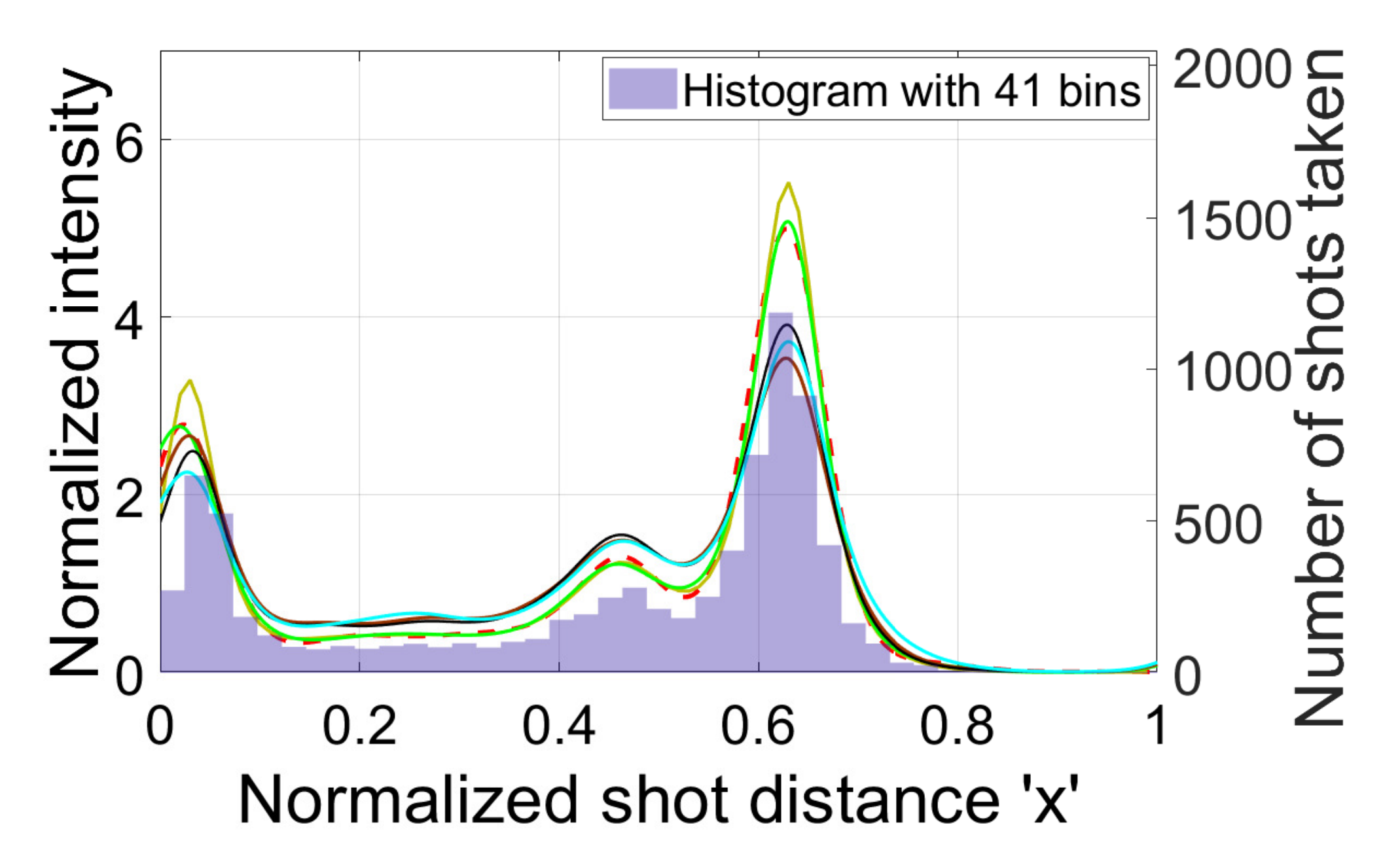}
		\caption{Fitted normalized Poisson Intensity}
		\label{intensity_Curry}
	\end{subfigure}
	\begin{subfigure}{.33\columnwidth}
		\includegraphics[width=1.05\linewidth,height = 0.75\linewidth]
		{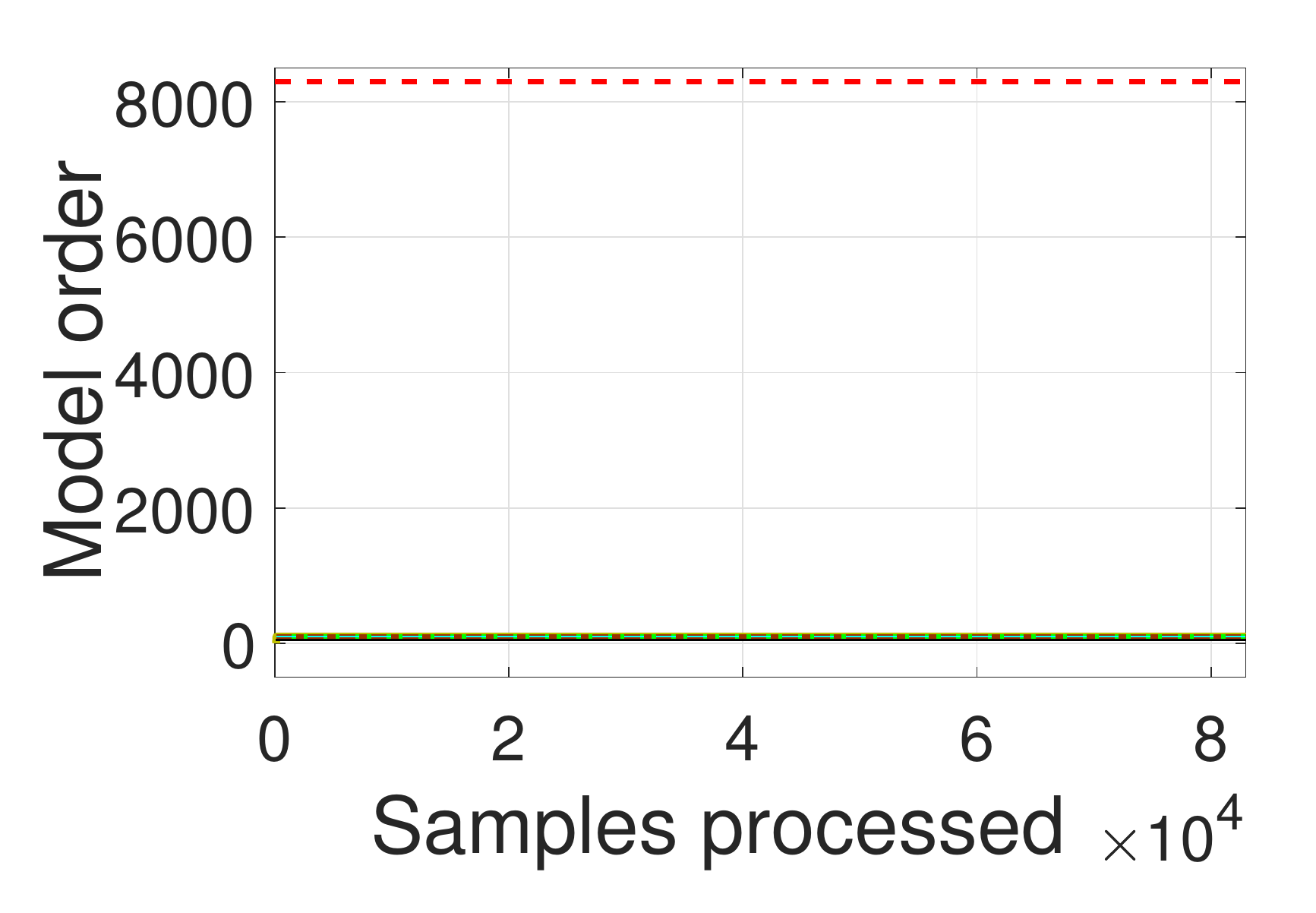}
		\caption{Model Complexity vs. samples processed}
		\label{model_order_Curry}
	\end{subfigure}%
	\caption{We compare our Algorithms \ref{SPFPMD_algorithm}, \ref{Newton_step} and \ref{SPPPOT_Newton_step} with BFGS solver for an offline MLE problem \cite{flaxman2017poisson} as well as with online Pseudo-mirror descent (PMD) \cite{yang2019learning}, i.e., which do not incorporate complexity-reducing projections on the NBA dataset of Stephen Curry. Online second order Algorithms \ref{Newton_step} and \ref{SPPPOT_Newton_step} is much more efficient and achieves the state of the art.}
	\label{fig:label} \end{figure*}


\begin{figure*}[t]\centering
	\begin{subfigure}{\columnwidth}
		\hspace{0cm}\includegraphics[scale = 0.69, trim=0mm 4mm 0mm 13.2mm, clip=true]
		{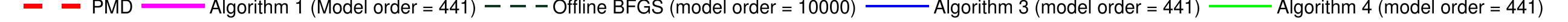}
	\end{subfigure}
	\begin{subfigure}{.245\columnwidth}
		\includegraphics[width=1.08\linewidth, height = 0.75\linewidth]
		{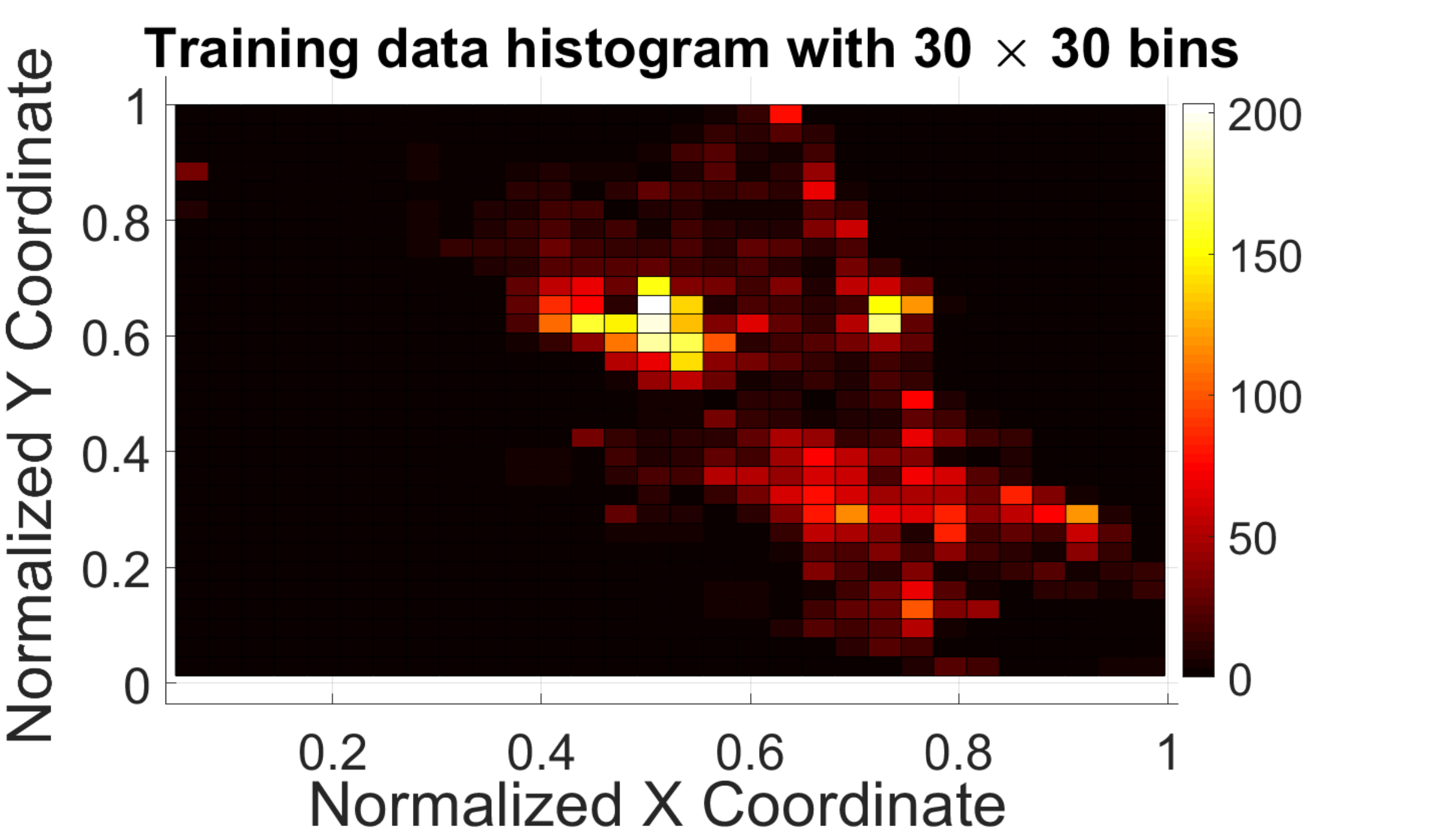}
		\caption{Histogram of training data}
		\label{Histogram_Chicago}
	\end{subfigure}
	\begin{subfigure}{.245\columnwidth}
		\includegraphics[width=1.08\linewidth,height = 0.75\linewidth]
		{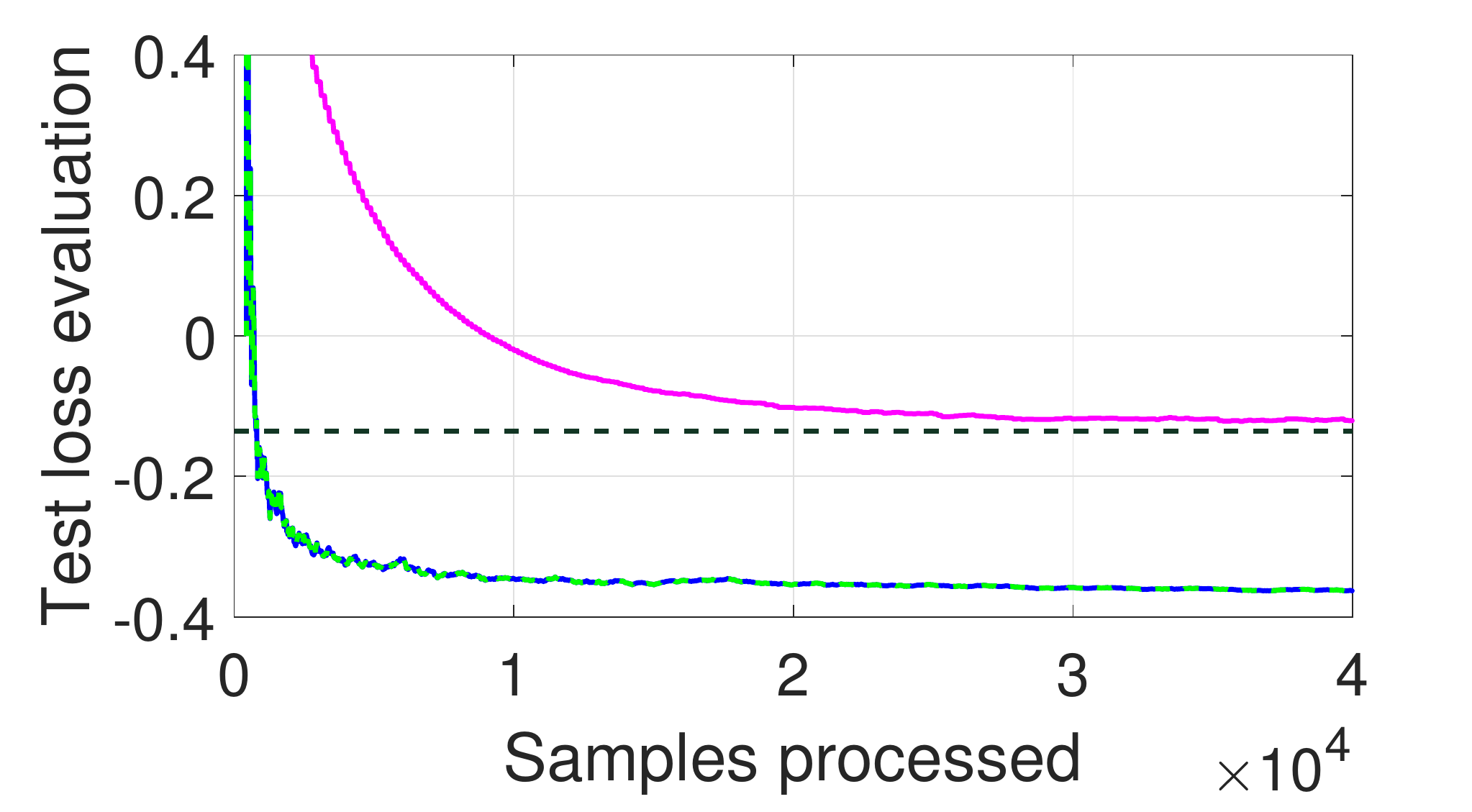}
		\caption{Test Loss vs. samples}
		\label{loss_Chicago}
	\end{subfigure}
	\begin{subfigure}{.245\columnwidth}
		\includegraphics[width=1.1\linewidth,height = 0.75\linewidth]
		{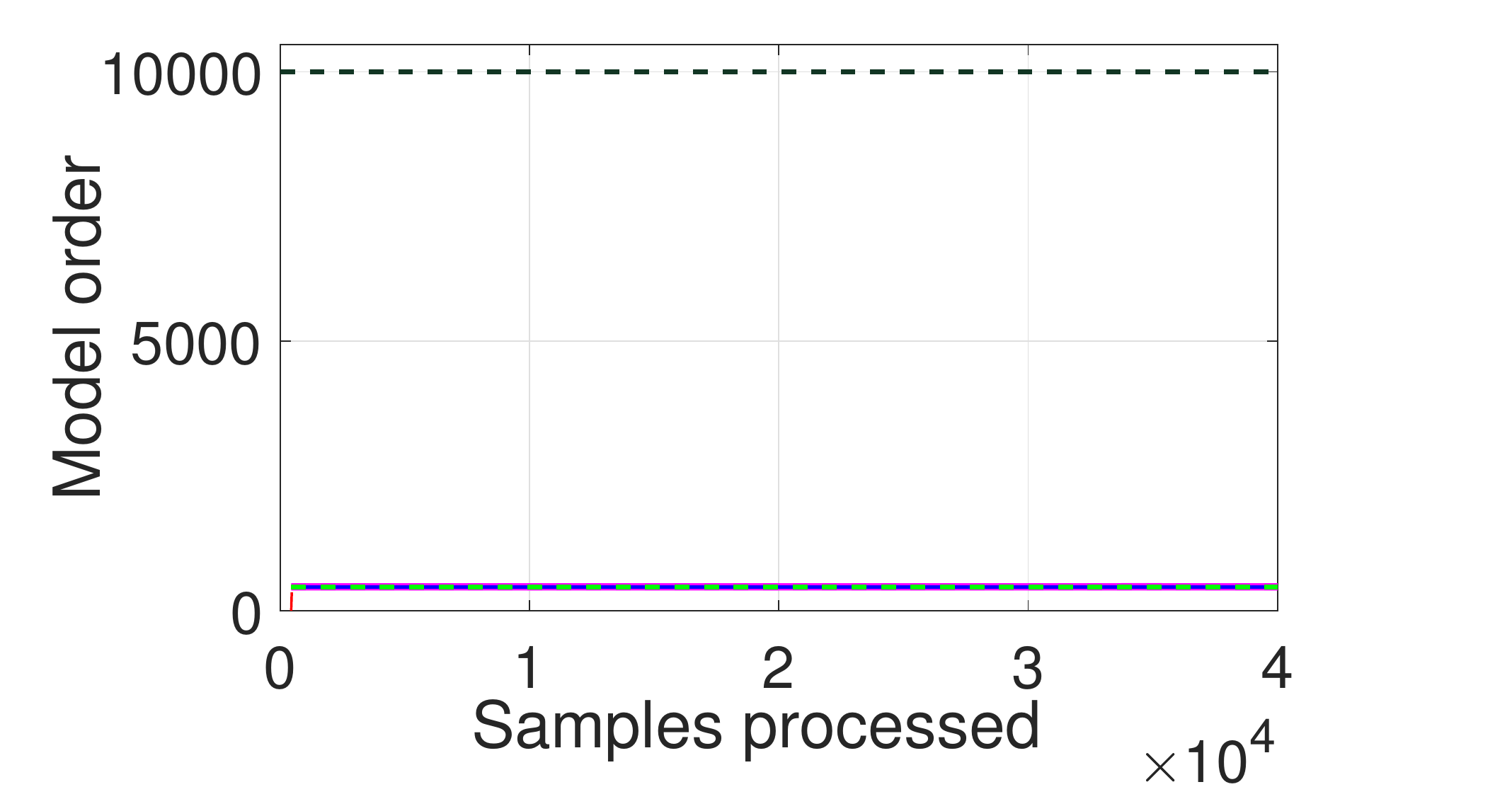}
		\caption{Model Complexity vs. samples}
		\label{Model_order_Chicago}
	\end{subfigure}
	\begin{subfigure}{.24\columnwidth}
		\includegraphics[width=1\linewidth,height = 0.75\linewidth]
		{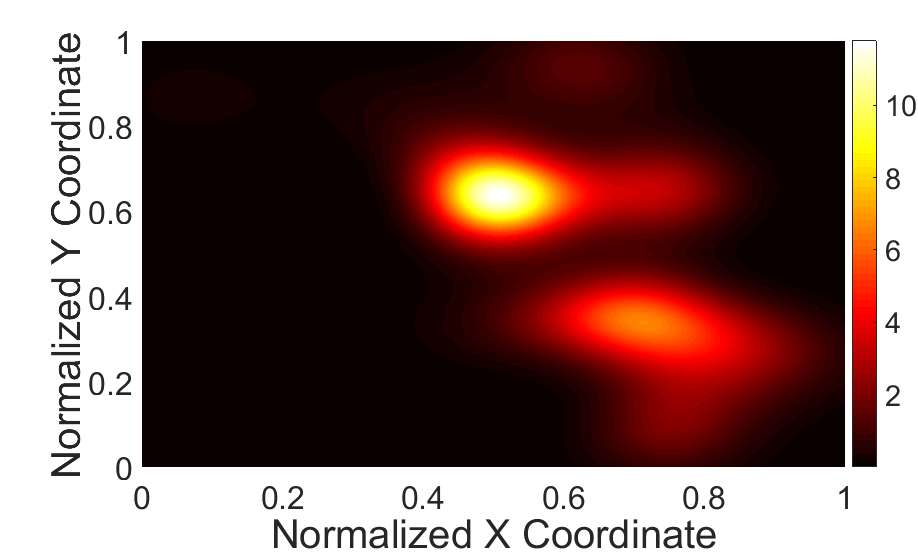}
		\caption{Pdf using offline BFGS}
		\label{Offline_BFGS_intensity}
	\end{subfigure}
	\begin{subfigure}{.24\columnwidth}
		\includegraphics[width=1.02\linewidth,height = 0.75\linewidth]
		{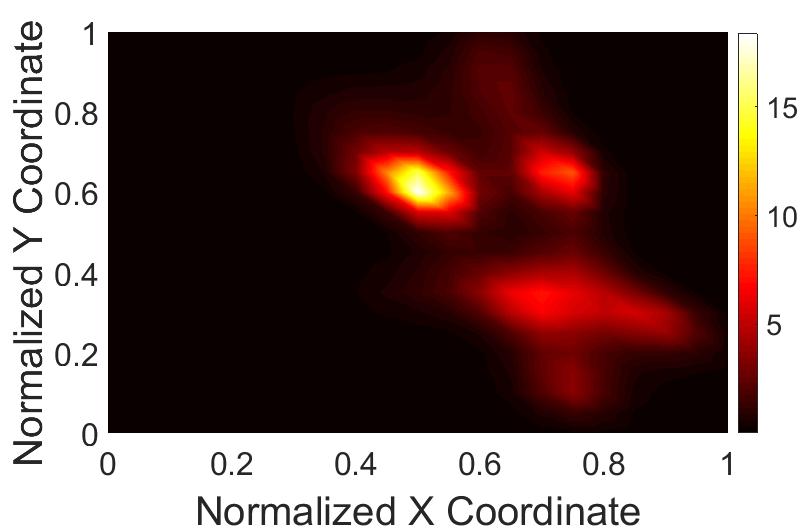}
		\caption{Pdf using PMD}
		\label{PMD_intensity}
	\end{subfigure}
	\begin{subfigure}{.24\columnwidth}
		\includegraphics[width=1.02\linewidth,height = 0.75\linewidth]
		{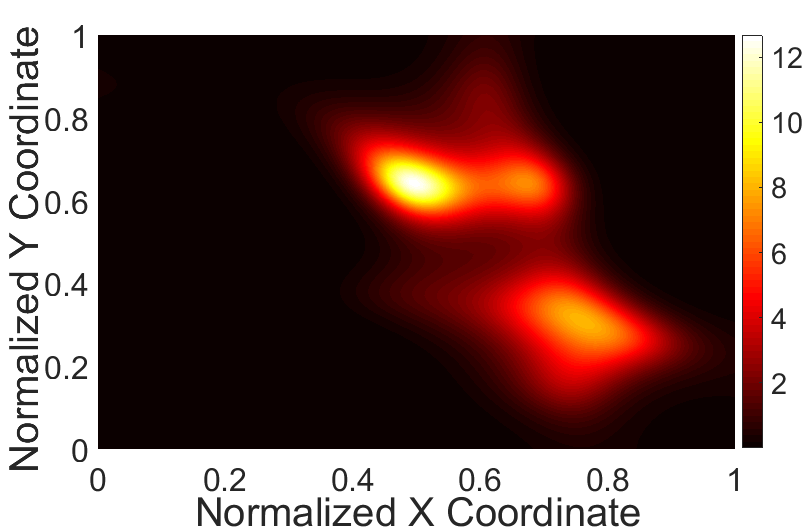}
		\caption{Pdf using Algorithm \ref{SPFPMD_algorithm}}
		\label{SPPPOT_const_budget_intensity}
	\end{subfigure}
	\begin{subfigure}{.24\columnwidth}
		\includegraphics[width=1.02\linewidth,height = 0.75\linewidth]
		{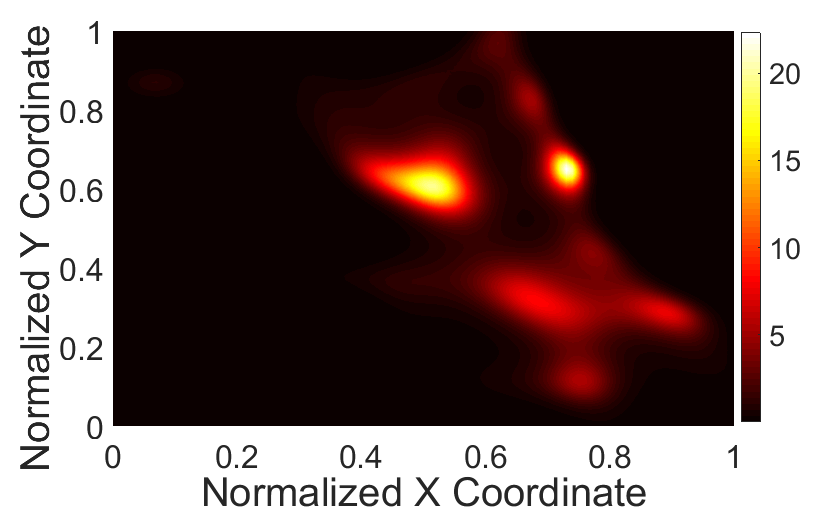}
		\caption{Pdf using Algorithm \ref{Newton_step}}
		\label{Quasi_Newton_intensity}
	\end{subfigure}
	\begin{subfigure}{.24\columnwidth}
		\includegraphics[width=1.02\linewidth,height = 0.75\linewidth]
		{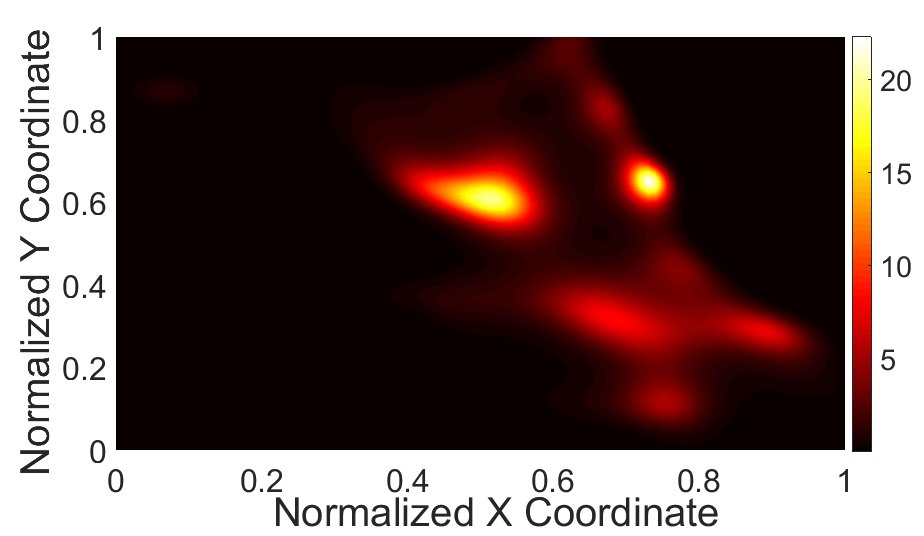}
		\caption{Pdf using Algorithm \ref{SPPPOT_Newton_step}}
		\label{SPPPOT_then_Newton_intensity}
	\end{subfigure}
	\caption{We compare the performance of our first order Algorithm \ref{SPFPMD_algorithm}, second order Algorithm \ref{Newton_step} and their combination Algorithm \ref{SPPPOT_Newton_step} with PMD and offline BFGS on Chicago crime data set of 2018. Second order Algorithms \ref{Newton_step} and \ref{SPPPOT_Newton_step} beats the state of the art algorithms in normalized intensity estimation.}
	\label{fig:Chicago_intensity_label} \end{figure*}


%

	\section{Conclusion}
	
	We studied expected value minimization when the decision variable belongs to a Reproducing Kernel Hilbert Space (RKHS) whose target domain is required to be nonnegative. We developed efficient algorithms using first and second order gradient information. For the first order algorithm, we put forth a variant of stochastic mirror descent that employs (i) \emph{pseudo-gradients} and (ii) projections. Compressive projections are executed via KOMP, which mitigate the complexity issues of RKHS parameterizations. For the second order algorithm, we fixed our underlying subspace and used second order \emph{pseudo-gradients} to make a descent in the weights of the auxiliary function. We established accuracy/complexity tradeoffs between sub-optimality convergence in mean and bounds on the model complexity under standard assumptions. Experiments demonstrated competitive performance with the state of the art for Poisson intensity estimation. Future directions include scaling the intensity estimation approach to higher dimensions through convolutional kernels, and employing the resulting intensity estimates for event triggers in queueing systems.
	%
	%
%

\appendices
	\section{Proof of Lemma \ref{qbound}}\label{lemma1_proof}
\begin{proof}
This result mirrors the proof of \cite{koppel2019parsimonious}[Prop. 8] but instead studies the directional error of pseudo-gradients with respect to the dual sequence $\{z_t\}$ \eqref{FPMD2}.
First, consider the dual-norm difference between $\gh_t$ [cf. \eqref{gh}] and $g_t$ [cf. \eqref{FPMD2}], to get
%
%
\begin{align}
	\norm{\gh_t-g_t}_\ast 
	&= \| \frac{1}{\eta}(z_t-z_{t+1}) - g_t \|_\ast \nonumber \\
	&= \frac{1}{\eta} \norm{z_{t+1}-\tz_{t+1}}_\ast \leq \frac{\epsilon}{\eta} ,
\end{align}
where we have grouped terms and used the definition of $\tilde{z}_{t+1}$ in \eqref{FPMD2}, together with the KOMP stopping criterion $\norm{z_{t+1}-\tz_{t+1}}_\ast \leq \epsilon$.
To obtain \eqref{projected_grad_bound}, consider $\EE\ns{\gh_t}_\ast$, and add and subtract $g_t$ inside the expectation to write 
\begin{align}
	&\EE\ns{\gh_t}_\ast \leq 2\EE\ns{g_t}_\ast + 2\EE\ns{\gh_t-g_t}_\ast \\ 
	&\qquad\leq 2({b^2} + c^2 \EE [\ip{\nabla R(f_t)}{\EE[g_t | \cF_t]}]) + 2\EE\ns{\gh_t-g_t}_\ast \nonumber ,
\end{align}
where we have applied $(a+b)^2 \leq 2a^2 + 2b^2$ to the result of this subtraction, followed by Assumption \ref{grad}.
\end{proof}


\section{Proof of Corollary \ref{KOMP_model_order_thm}} \label{model_conv_proof}
\begin{proof}
	Begin by recalling the definition of the function approximation error of KOMP (Algorithm \ref{kompalgo}) 
	$$\gamma_j := \norm{ \tz- \sum\limits_{\x_n \in \cD \setminus \{\x_j\}}w_n \kappa(\x_n,.) }_\ast$$ 
	used to determine the compression stopping criteria. $M_t$ is the model order of $z_t$ and the model order of $\tz_{t+1}$ is $M_t+1$ for addition of single data point. Then, $\gamma_{M_t+1}$ may be written as 
	\begin{align}
		\gamma_{M_t+1} &= \norm {\tz_{t+1} - \sum\limits_{\x_n \in \cD_{t+1} \setminus \{\x_{M_t+1}\}}w_n \kappa(\x_n,.) }_\ast . 
	\end{align}
	Substitute in the definition $\tz_{t+1}=z_t - \eta g_t$ [cf. \eqref{FPMD2}], and note that dictionary elements are deleted if $\min \limits_{j=1,\dots,M_{t+1}} \gamma_j \leq \epsilon$. We note that $\gamma_j$ for ${j=1,\dots,M_{t+1}}$ is lower-bounded by $\gamma_{M_t + 1}$, so it suffices to consider $\gamma_{M_t + 1}$.
	Since the pseudo-gradient admits an expansion via the chain rule as $g_t =  g_t^\prime \kappa(\x_t,\cdot)$ (Assumption \ref{C_lip}), one may reduce this condition to the set distance between the dual of the subspace defined by the current dictionary and the kernel evaluated at the latest sample $\mathbf{x}_t$ by applying similar logic to  \cite[Appendix D.1 Eq. (70)-(75) and Lemma 10]{koppel2019parsimonious}, that is, the following holds	
	\begin{align}\label{KOMP_dist_criteria}
		\text{dist}\left(\kappa(\mathbf{x}_{t},\cdot),\mathcal{H}_{\cD_t}^\ast \right) \leq \frac{\epsilon}{\eta |g_t^\prime|} = \frac{\alpha}{|g_t^\prime|} .
	\end{align}
	There are two possibilities: either a new point is added or it is not. If it is not, then $M_{t+1} \leq M_{t}+1$ and the model order does not change. Let's consider when this condition is violated, such that the model order increases, i.e., $M_{t+1}=M_t+1$:	
	\begin{align}\label{violation}
		\text{dist}\left(\kappa(\mathbf{x}_{t},\cdot),\mathcal{H}_{\cD_t}^\ast \right) > \frac{\epsilon}{\eta |g_t^\prime|} = \frac{\alpha}{|g_t^\prime|} .
	\end{align}
	Assumption \ref{C_lip} implies $\frac{1}{|g_t^\prime|} \geq \frac{1}{C}$, which allows us to lower bound \eqref{violation} with $\frac{\alpha}{C}$, where $\alpha = \frac{\epsilon}{\eta}$. KOMP stopping criteria is violated when distinct dictionary points $\mathbf{d}_n$ and $\mathbf{d}_j$ for $j,n \in \{1,..., M_t\}$ satisfy the condition $\norm{\phi(\mathbf{d}_n)-\phi(\mathbf{d}_j)}_\ast > \frac{\epsilon}{\eta C} = \frac{\alpha}{C}$. By Assumption \ref{feature}, the number of Euclidean balls of radius $\frac{\epsilon}{\eta C}$ required to cover the feature kernelized space $\phi(\x)=\kappa(\x,\cdot)$ is always finite by the continuity of the kernel. Therefore, there exists a finite $t$ such that \eqref{KOMP_dist_criteria} holds which implies that equation $\min \limits_{j=1,\dots,M_{t+1}} \gamma_j \leq \epsilon$ is always valid, meaning the model order does not grow. More specifically, there exists a  finite maximum model order $M^\infty<\infty$ such that $M_t<M^\infty$, holds for all $t$.
	
	Now if the kernel taken is Lipschitz continuous on the set $\cX$ which is compact by Assumption \ref{feature}, then similar to \cite[Proposition 2.2]{engel2004kernel}, the maximum model order $M^\infty$ is bounded as \eqref{max_model_order_possible}, which yields an upper-bound on $M^\infty$  for all $t$.
\end{proof}


\section{Proof of Lemma \ref{lemma_HS_bound}}\label{lemma_HS_bound_proof}
\begin{proof}
	The Hessian matrix $\mathbf{A}_{\infty,t+1}$ has finite maximum eigenvalue $\mu_{t+1}^{\max}$ via Assumption \ref{Hessian_up_bound}, which implies $\frac{1}{\mu_{t+1}^{\max}}$ is the minimum eigenvalue of $\mathbf{A}_{\infty,t+1}^{-1}$. Thus we may write the lower bound of \eqref{HS_bound_eq}. 
	%
	%
	The upper-bound employs the update \eqref{hess_update} with $\g_{\infty,t}$ recursively backwards in time as \vspace{-2mm}
	\begin{align}
		\mathbf{A}_{\infty,t+1}^{-1} = \left(\delta \mathbf{I} + \sum_{j=1}^t \g_{\infty,j} {\g_{\infty,j}}^\top \right)^{-1} \preccurlyeq \frac{1}{\delta} \mathbf{I} .
	\end{align}
\end{proof}
%


\section{Functional Pseudo-Dual Averaging}\label{app:func_dual_averaging}

Dual averaging (see \cite{3}) is closely related to mirror descent. Here, we expand upon these connections. Specifically, dual averaging with pseudo-gradients in the functional RKHS setting takes the form
	\begin{align}
	\th_{t+1}&=h_t+g_t \label{dav1}\\ 
	f_{t+1} &= \argmin \limits_{f \in \cH} \Big(\langle h_{t+1}, f \rangle _\cH + \frac{1}{\eta} \psi (f) \Big) \nonumber\\
	& = \nabla \psi^{\star}(-\eta h_{t+1}) \label{dav2}
	\end{align}
	and then one may apply KOMP on $\th_{t+1}$ with budget parameter $\epsilon$ in order to ensure an efficient RKHS parameterization in terms of weights and dictionary elements. Here, $\psi^\star$ denotes the conjugate of $\psi$.  In order for \eqref{dav2} to be valid, one requires $\nabla\psi^\star:\cH^\ast \rightarrow \cH$ so that $f_{t+1} \in\cH$. For the stochastic case (a special case of pseudo gradient), the update can be written recursively as 
	\begin{align}
	\th_{t+1} = h_t + \nabla r_t(\nabla \psi^{\star}(-\eta h_t)) = h_t - \eta \g_t
	\end{align}
	where $\g_t := -\frac{1}{\eta}\nabla r_t(\nabla \psi^{\star}(-\eta h_t))$. To see the link between the \eqref{dav1}-\eqref{dav2} and Algorithm \ref{SPFPMD_algorithm}, multiply \eqref{dav1} by $-\eta$ and use the second update equation to obtain
	\begin{align}
	\nabla \psi(\tf_{t+1}) = \nabla \psi(f_t) - \eta g_t
	\end{align}
	where $\tf_{t+1} := \nabla \psi^\star(-\eta \th_{t+1})$. This means that under an additional hypothesis that the gradient of the Fenchel conjugate of $\psi$ does not take the functions $h_t$ outside the RKHS, update expressions for dual averaging in updating $h_t$ exactly coincides with $z_t$ in \eqref{FPMD2}. This link provides alternative potential ways of approaching the convergence theory of Algorithm \ref{SPFPMD_algorithm}, which are the subject of future work.


\section{Extensions of SPPPOT to supervised learning}\label{app:SPPPOT_KLR}

The first order algorithm SPPPOT can be easily extended to high dimensional supervised learning problems owing to its data adaptive dictionary and can give excellent performances compared to the current state of the art techniques. Let us first elucidate here the problem of supervised learning and then shall demonstrate the example of multi-class kernel logistic regression as an use case.

\smallskip
{\noindent \bf Supervised Learning.}	For supervised learning, target variables $y_t \in \cY$ are also received corresponding to each $\x_t$, where $y_t \in \cY$ denote the labels or real values, i.e., $\cY=\{1,\dots,C\}$ in the case of classification and $\cY\subset \Rn$ in the case of regression. The examples $(\x_t,y_t)$ are realizations from an unknown distribution $\mathbb{P}(\x,y)$. The goal is to fit a predictive model $f:\cX \rightarrow \cY$ that belongs to a particular hypothesized class of functions $\cH$, i.e., to form estimates of the target of the form $\hat{y}_t=f(\x_t)$. The merit of a given estimator $f$ is quantified by a convex loss $\ell: \mathcal{H}\times \cX \rightarrow \cY$ which is small when $f(\x)$ and $y$ are close. We seek to minimize the statistical loss $R(f) := \Ex{\ell(f(\x),y)}$ in expectation over the data. Example include the squared loss $(y-f(\x))^2$, the hinge loss $\max\{0,1-yf(\x)\}$, and the logistic loss derived from minimizing the class-conditional misclassification probability -- see \cite{murphy}. For this case also, we solve the same optimization problem as \eqref{ff}, where the instantaneous cost $r_t(f):=\ell(f(\x_t),y_t)$.

Next we present the demonstrative example of multi-class Kernel Logistic Regression (KLR) as a representative of supervised learning.

\begin{example}\textit{Kernel Logistic regression for multi-class classification: }\label{eg:logistic}
	In this case, the target domain $\cY=\{1,\dots,C\}$ is the set of classes, and the loss is defined by the negative log-likelihood of a generalized odds ratio. Specifically, define a vector-function associated with a one-hot encoding, i.e., the target domain is $\{0,1\}^C$, such that each $\x$ is assigned to a binary vector of length $C$. Then, one hypothesizes that the probability of the predicted label $y$ to lie in the class $c$ is given by the logit (softmax) function $\PP(y=c | \x) := \frac{\exp(f_c(\x))}{\sum_{\tilde{c}} f_{\tilde{c}}(\x)}$, which gives rise to the negative log-likelihood: \cite{murphy}
	\begin{align}
		\ell(\textbf{f},\x_t,y_t) &= -\log (\PP(y=y_t | \x_t)) \\
		& = \log\left(\sum_{\tilde{c}} f_{\tilde{c}}(\x_t)\right) - f_{y_t} (\x_t) \label{logistic_regression_loss}
	\end{align}
	where $\textbf{f}$ denotes vector of functions $f_{\tilde{c}}$ where $\tilde{c}$ denotes each of the class labels, and the classification of a point $\x$ is defined by the maximum a posteriori assignment: $\hat{c}=\argmax_{c\in\{1,\dots,C\}} f_c (\x)$. Observe that the space of binary sequences $\{0,1\}^C$ is nonnegative, meaning that minimizing \eqref{logistic_regression_loss} in expectation over $\mathbb{P}(\x,y)$ is an instance of \eqref{ff}.
\end{example}

\smallskip
\noindent {\bf First order gradient calculation for KLR. } Consider the setting of multi-class classification using a probabilistic generalization of the odds ratio which defines the loss \eqref{logistic_regression_loss}. We differentiate it with respect to each $f_{\tilde{c}}$ to obtain the functional stochastic gradient for each class where $\tilde{c} \in \{1,\ldots,C \}$. The gradient expression takes the form:
	\begin{align} \label{lr_grad}
	&g_t=\begin{cases}
	\frac{\exp\left(f_{c'}(\x_t)\right)}{\sum_{\tilde{c}} \exp\left(f_{\tilde{c}}(\x_t)\right)} \kappa(\x_t,\cdot) & c' \neq y_t \\
	\left(\frac{\exp\left(f_{y_t}(\x_t)\right)}{\sum_{\tilde{c}} \exp\left(f_{\tilde{c}}(\x_t)\right)} - 1\right) \kappa(\x_t,\cdot) & c' = y_t	
	\end{cases}
	\end{align}
	The parametric updates for the entire function vector $\textbf{f}$ are carried out via \eqref{lr_grad}, which operates upon the basis of a stream of samples $\{\x_t, y_t\}$ in tandem with subspace projections. Here also pseudo-gradient property holds since simple stochastic gradients are used.
	
In Appendix \ref{app:KLR_simulations}, we will numerically experiment on multi-class MNIST handwritten dataset using the KLR formulation. There, we will also show the merit of SPPPOT by using the Bregman divergence as KL-divergence compared to its squared RKHS norm difference variant that is nothing but POLK \cite{koppel2019parsimonious}.


\section{First-order Kernel logistic regression on multi-class MNIST}\label{app:KLR_simulations}

	
	\begin{figure*}[t]\centering
		\begin{subfigure}{.33\columnwidth}
			\includegraphics[width=1.1\linewidth, height = 0.75\linewidth]
			{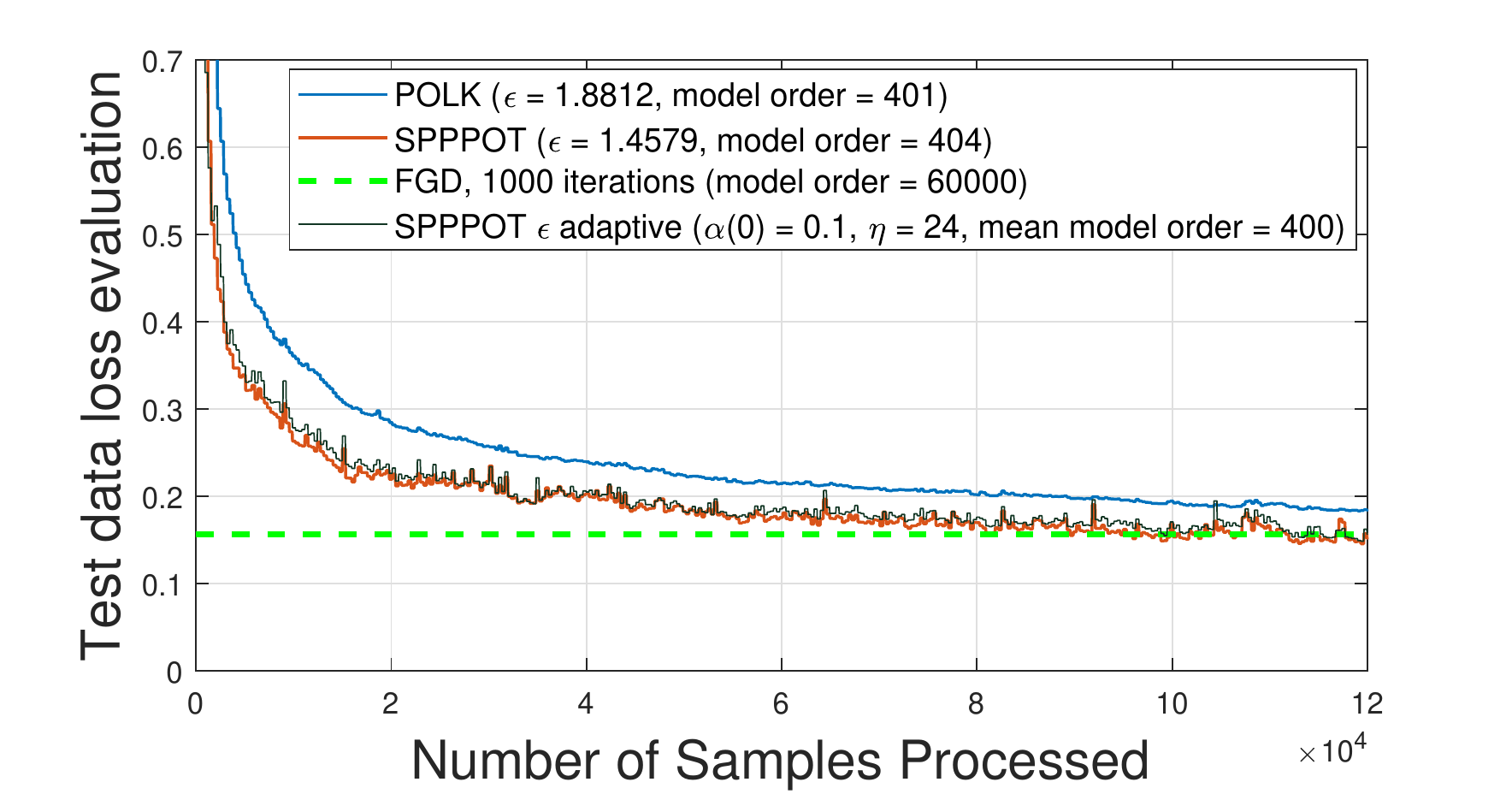}
			\caption{Test Loss vs. samples processed}
			\label{lr_loss_fig}
		\end{subfigure}
		\begin{subfigure}{.32\columnwidth}
			\includegraphics[width=1.1\linewidth,height = 0.75\linewidth]
			{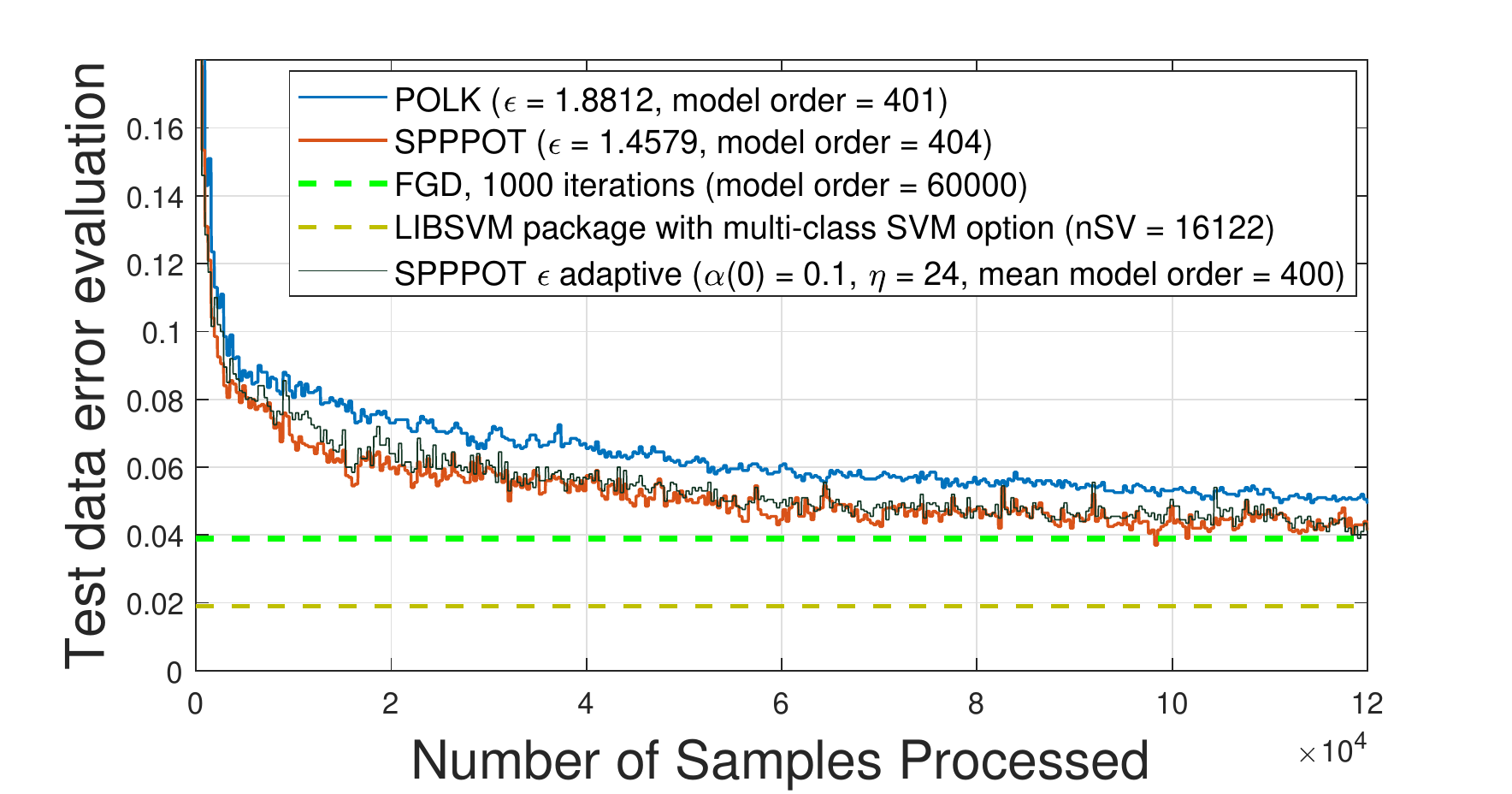}
			\caption{Test Error vs. samples processed}
			\label{lr_error}
		\end{subfigure}
		\begin{subfigure}{.33\columnwidth}
			\includegraphics[width=1.1\linewidth,height = 0.75\linewidth]
			{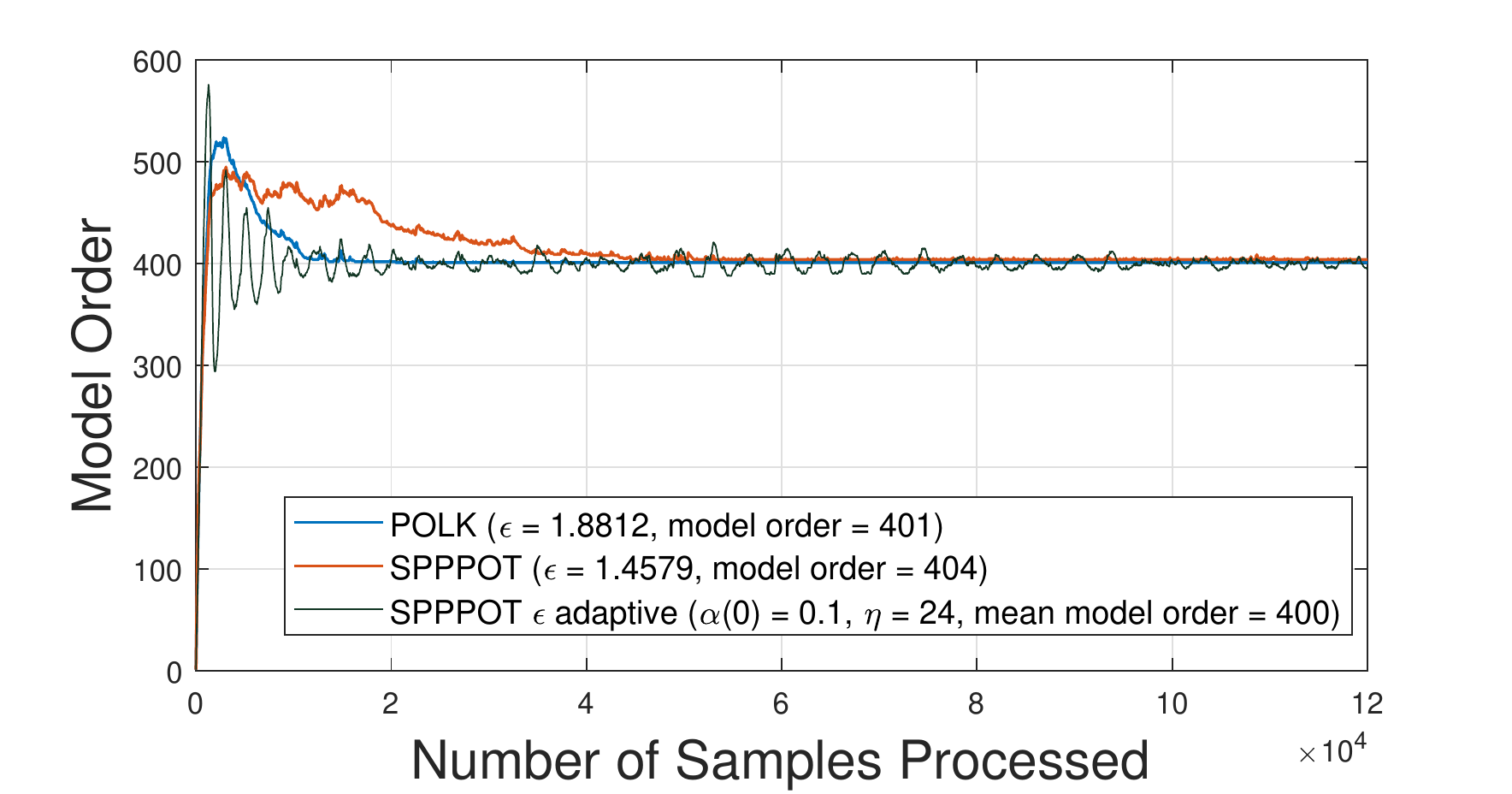}
			\caption{Model Complexity vs. samples processed}
			\label{lr_model_order}
		\end{subfigure}%
		\caption{We compared SPPPOT algorithm (both constant and adaptive budgets) with POLK \cite{koppel2019parsimonious} and offline FGD over MNIST dataset for solving KLR. SPPPOT obtains  competitive accuracy with LIBSVM package, although it is computable online.}
		\label{fig:MNIST_label}
		\end{figure*}
	

We consider the problem of Kernel Logistic Regression (KLR) (Example \ref{eg:logistic}), whose loss is \eqref{logistic_regression_loss} and stochastic gradient takes the form \eqref{lr_grad}. For SPPPOT, we have taken KL divergence as the Bregman divergence and considered both constant and adaptive parsimony constants. We have compared SPPPOT with the online algorithm POLK \cite{koppel2019parsimonious} (without the regularizer term) and with the offline batch algorithm Functional Gradient Descent (FGD) without any dictionary compression. We ran FGD for 1000 iterations. We have also compared with the multi-class kernel SVM (KSVM) solver in LIBSVM package \cite{CC01a}, which is also a batch algorithm. Even though it uses the KSVM algorithm, not KLR and the loss function is also different, still it is a classification algorithm and is used as a baseline to plot the classification error at the end which is shown as a dotted baseline in Fig. \ref{lr_error}.

	\smallskip
	\noindent {\it Dataset.} We consider algorithm performance for the problem of classifying MNIST handwritten digits \cite{lecun1998mnist}, which consists of 60000 data points for training and 10000 data points for testing. Out of 10000 training points, we randomly sampled 2000 points for our training purpose. Each data point is an image of a handwritten digit, and the goal is to label each images into the corresponding digit. We simply read the images into a $784$-dimensional feature vector and scaled the pixel values ranging from 0 to 255 to the intensities in $[0,1]$.

	\smallskip
	\noindent {\it Parameter selection.}
	We select the Gaussian kernel with bandwidth selected by cross-validation over $3.33\%$ of the training data (randomly chosen $2000$ points) and the KOMP budget is kept as zero during cross-validation for both POLK and SPPPOT, which yielded bandwidth is taken as $16$ (standard deviation $4$) for all experiments. A single epoch of training data is fed to both FGD and LIBSVM (C-SVC option) whereas $2$ epochs are being run for the online algorithms POLK and SPPPOT (both constant and adaptive budget). The constant learning rate is taken by trial and error to be $24$ for FGD, POLK and SPPPOT for both budget combinations. The KOMP budget is taken as $\epsilon = 1.4579$ for SPPPOT with constant budget and $\epsilon = 1.8812$ for POLK, which yield converged model orders, respectively, of $401$ and $404$. Matching the model orders for both the experiments is hard and here adaptive budgeting becomes handy. For SPPPOT with adaptive budget, we selected the desired model order as $400$ and the constant $\alpha(0)=0.1$. It is seen in Fig. \ref{lr_model_order} that the mean model order is $400$ and the model order fluctuations with respect to the samples processed are all bounded and eventually converge. The  storage requirement is finite and comparable to POLK with model order $401$.

	\smallskip
	\noindent {\it Results.}
	The test loss, error and the model order plots with respect to samples processed has been presented in Figs. \ref{lr_loss_fig}, \ref{lr_error} and \ref{lr_model_order} respectively. Observe that SPPPOT with both constant and adaptive budgets outperforms POLK and obtains performance competitive with the offline benchmarks FGD and LIBSVM (C-SVC) whose parameterization complexity is $60000$ and $16122$, respectively, which are not depicted in the same figure due to scale. SPPPOT, after the loss and error settles down, performs comparably to FGD with 1000 iterates, whereas for POLK, there is still a considerable gap. FGD took about $12$ hours to compute a single epoch with $1000$ iterations. As seen in Fig. \ref{lr_error}, KVSM has better accuracy than the asymptotic performance of KLR algorithms. Still we can say the test error for SPPPOT is within $2\%$ of the LIBSVM solver and yields a more efficient parameterization.


\section{Almost sure convergence of SPPPOT for stochastic gradients}\label{app:as_proof}

Stochastic gradients are special case of pseudo-gradients which are available whenever the instantaneous cost is evaluable in closed form as in, for instance,  kernel logistic regression [cf. \eqref{lr_grad}]. Note that stochastic gradients satisfy the stochasticity rule, i.e. $\EE[g_t | \cF_t] = \nabla R(f_t)$. To prove the almost sure convergence of RKHS-valued stochastic mirror descent with projections, only Assumption \ref{fin} and the next three conditions are required.

\begin{assumption}\label{dualdomainloss_stochastic}
	The function $R_\psi(\cdot)$ is $\lambda_1$-strongly convex.
\end{assumption}
\begin{assumption}\label{psi}
	The function $\psi$ is $1$-strongly convex. 
\end{assumption}
\begin{assumption}\label{grad_bound}
	The projected stochastic gradient has bounded second moment given the filtration $\mathcal{F}_t=\sigma(\{\x_i\}_{i=1}^{t-1})$.
	\begin{align}\label{eq_grad_bound}
		\Ex{\ns{\gh_t}_\ast \vert \cF_t} \leq G^2
	\end{align}
\end{assumption}

Assumption \ref{psi} is required to lower bound the norm difference between the auxiliary iterates from the optima, i.e. $\norm{z_t - z^\ast}_\ast$ by the the norm difference between the actual function iterates and the optima, i.e. $\norm{f_t - f^\ast}$. Note that Assumption \ref{dualdomainloss_stochastic} can be enforced explicitly by adding a regularizer term in the dual space with respect to the auxiliary function $z_t$. Assumptions \ref{dualdomainloss_stochastic} and \ref{grad_bound} are standard in the literature and is required to show the almost sure convergence theorem for the stochastic case.

\begin{theorem}\label{thm_stochastic_sync}
	Under Assumption \ref{fin} and Assumptions \ref{dualdomainloss_stochastic}-\ref{grad_bound}, upon running Algorithm \ref{SPFPMD_algorithm} with constant step-size $\eta$ and constant compression budget $\epsilon$, the iterates $f_t$ converges almost surely to a neighbourhood in limit infrimum as $t \rightarrow \infty$.
	\begin{align}
		\liminf_{t \rightarrow \infty} \norm{f_t - f^\ast} \leq \xi := \frac{\epsilon + \sqrt{\epsilon^2 + \eta^3 \lambda_1 G^2}}{2 \eta \lambda_1}
	\end{align}
	where $G$ is defined in \eqref{eq_grad_bound} as the bound on the conditional second-moment of the stochastic gradient in the dual norm and $\lambda_1$ is the strong-convexity parameter.
\end{theorem}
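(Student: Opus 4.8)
The plan is to run a stochastic quasi-Fej\'er (almost-supermartingale) argument on the dual-space distance to the optimizer $V_t := \norm{z_t - z^\ast}_\ast^2$, with $z^\ast := \nabla\psi(f^\ast)$, and then transfer the conclusion to $\norm{f_t - f^\ast}$ via $1$-strong convexity of $\psi$. From the projected dual recursion \eqref{function_dual_update}, $z_{t+1} = z_t - \eta\gh_t$, expanding the square gives
\[
V_{t+1} = V_t - 2\eta\,\langle \gh_t,\, z_t - z^\ast\rangle + \eta^2\norm{\gh_t}_\ast^2 .
\]
I would split $\gh_t = g_t + (\gh_t - g_t)$ and treat the three pieces in turn. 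Since here $g_t$ is an honest stochastic gradient, $\EE[g_t\mid\cF_t] = \nabla R(f_t) = \nabla R_\psi(z_t)$, and $z_t, z^\ast$ are $\cF_t$-measurable; using $\lambda_1$-strong convexity of $R_\psi$ (Assumption \ref{dualdomainloss_stochastic}) together with the stationarity $\nabla R_\psi(z^\ast) = \nabla R(f^\ast) = 0$ yields $\EE[\langle g_t, z_t - z^\ast\rangle\mid\cF_t] \ge \lambda_1 V_t$. The projection-error cross term is controlled by Cauchy--Schwarz in the dual norm and the KOMP bound $\norm{\gh_t - g_t}_\ast \le \epsilon/\eta$ from Lemma \ref{qbound}, so $-2\eta\langle \gh_t - g_t, z_t - z^\ast\rangle \le 2\epsilon\sqrt{V_t}$; the last term is bounded by Assumption \ref{grad_bound}, $\EE[\norm{\gh_t}_\ast^2\mid\cF_t] \le G^2$. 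Collecting terms,
\[
\EE[V_{t+1}\mid\cF_t] \;\le\; (1 - 2\eta\lambda_1)\,V_t \;+\; 2\epsilon\sqrt{V_t} \;+\; \eta^2 G^2 .
\]

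Reading off the absorbing ball: the right-hand side is $\le V_t$ exactly when $\sqrt{V_t}$ is at least the positive root of a quadratic in $\sqrt{V_t}$ of the form $2\eta\lambda_1 s^2 - 2\epsilon s - (\text{const}\cdot\eta^2 G^2) = 0$, which the quadratic formula identifies with the stated $\xi = \bigl(\epsilon + \sqrt{\epsilon^2 + \eta^3\lambda_1 G^2}\bigr)/(2\eta\lambda_1)$ (the exact numeric constant under the radical being fixed by the normalization conventions for $\lambda_1$ and $G$), and the conditional decrease is bounded away from zero once $\sqrt{V_t}\ge\xi+\delta$. I would then set $\tau := \inf\{t : \sqrt{V_t}\le\xi\}$, a stopping time for $(\cF_t)$, and observe that the stopped process $V_{t\wedge\tau}$ is a nonnegative supermartingale (on $\{\tau>t\}$ the displayed recursion gives $\EE[V_{t+1}\mid\cF_t]\le V_t$), hence converges almost surely. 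On $\{\tau<\infty\}$ we have $\liminf_t\sqrt{V_t}\le\sqrt{V_\tau}\le\xi$ immediately; on $\{\tau=\infty\}$ the a.s. limit cannot exceed $\xi^2$, for otherwise $\sqrt{V_t}$ would eventually stay above $\xi+\delta$ for some $\delta>0$, making the per-step conditional decrease uniformly positive and thereby forcing $\EE[V_t]$ negative --- impossible. Hence $\liminf_t\sqrt{V_t}\le\xi$ a.s. Assumption \ref{fin} guarantees $V_0 = \norm{\nabla\psi(f_0)-z^\ast}_\ast^2<\infty$, so the recursion and the supermartingale are well posed.

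Finally, $1$-strong convexity of $\psi$ (Assumption \ref{psi}) makes $\nabla\psi^\ast$ $1$-Lipschitz on $\cH^\ast$, so $\norm{f_t - f^\ast} = \norm{\nabla\psi^\ast(z_t) - \nabla\psi^\ast(z^\ast)} \le \norm{z_t - z^\ast}_\ast = \sqrt{V_t}$; passing to the limit infimum yields $\liminf_t\norm{f_t - f^\ast}\le\xi$ a.s. I expect the main obstacle to be the supermartingale/stopping-time step, since the drift of $V_t$ is negative only outside the radius-$\xi$ ball and so no global supermartingale-convergence theorem applies off the shelf; making the $\{\tau=\infty\}$ case rigorous requires a careful localization (or, equivalently, a Robbins--Siegmund statement with a summable perturbation absorbing the in-ball contribution), plus the usual measurability and integrability checks for the RKHS-valued iterates. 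A secondary subtlety is justifying $\nabla R_\psi(z^\ast)=0$, i.e. that the implicit range constraint is inactive at $f^\ast$ so that it is a genuine stationary point of $R$ over $\cH$ --- precisely the role of choosing a positivity-preserving Bregman divergence such as the KL divergence, in keeping with the rest of the paper.
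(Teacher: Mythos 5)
Your proposal is correct and follows essentially the same route as the paper's proof: expand $\ns{z_{t+1}-z^\ast}_\ast$ in the dual space, split off the projection error via Cauchy--Schwarz and the KOMP bound $\norm{\gh_t-g_t}_\ast\le\epsilon/\eta$, bound the quadratic term by $G^2$, use strong convexity of $R_\psi$ for the drift, run a stopped-supermartingale argument, and transfer to $\norm{f_t-f^\ast}$ via $1$-strong convexity of $\psi$. The only substantive difference is that you invoke strong monotonicity of $\nabla R_\psi$ at $z^\ast$ (coefficient $2\eta\lambda_1$) where the paper chains the gradient inequality with quadratic growth (coefficient $\eta\lambda_1$), which is why the stated radius has $\eta^3\lambda_1 G^2$ rather than your $2\eta^3\lambda_1 G^2$ under the radical --- a discrepancy you correctly flag as a normalization issue.
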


\begin{proof} The proof starts by taking the dual Hilbert norm difference between $z_{t+1}$ and $z^\ast = \nabla \psi(f^\ast)$, where $z^\ast$ is taken to be the transformed function of the optimum $f^\ast$ in the dual space, and then expanding $z_{t+1}$ same as \eqref{function_dual_update} as shown below
\begin{align}
	\ns{z_{t+1} - z^\ast}_\ast &= \ns{z_t - \eta \gh_t - z^\ast}_\ast \\
	&= \ns{z_t - z^\ast}_\ast -2 \eta \langle z_t - z^\ast, \gh_t  \rangle + \eta^2 \ns{\gh_t}_\ast \\
	& = \ns{z_t - z^\ast}_\ast -2 \eta \langle z_t - z^\ast, g_t  \rangle + 2 \eta \langle z_t - z^\ast, g_t - \gh_t  \rangle + \eta^2 \ns{\gh_t}_\ast \; , \label{eq_bound1}
\end{align}
where the last equation is obtained by addition and subtraction of the actual stochastic gradient $g_t$. The third term on the right hand side of \eqref{eq_bound1} can be upper bounded using Cauchy-Schwartz inequality and then application of \eqref{approxerr} on it yields the bound
\begin{align}
	\ns{z_{t+1} - z^\ast}_\ast \leq &\ns{z_t - z^\ast}_\ast -2 \eta \langle z_t - z^\ast, g_t  \rangle + 2 \epsilon \norm{z_t - z^\ast}_\ast + \eta^2 \ns{\gh_t}_\ast \; .
\end{align}
So now taking conditional expectation given the past sigma algebra $\cF_t$ and using Assumption \ref{grad_bound}, we get
\begin{align}
	\Ex{\ns{z_{t+1} - z^\ast}_\ast \vert \cF_t} &\leq \ns{z_t - z^\ast}_\ast -2 \eta \langle z_t - z^\ast, \nabla R(f_t)  \rangle + 2 \epsilon \norm{z_t - z^\ast}_\ast + \eta^2 G^2 \; . \label{eq_bound2}
\end{align}
We assumed $R_\psi(\cdot)$ to be the risk function which takes the auxiliary function $z_t$ as an argument such that $R_\psi(z_t) = R_\psi(\nabla \psi(f_t)) = R(f_t)$, where $R(\cdot)$ is our actual risk function. Hence, using this property and convexity of $R_\psi$ and then Assumption \ref{dualdomainloss_stochastic}, the second term on the right hand side of \eqref{eq_bound2} can be simplified as
\begin{align}
	\langle z_t - z^\ast, \nabla R(f_t)  \rangle &= \langle z_t - z^\ast, \nabla R_\psi(z_t)  \rangle \\
	& \geq R_\psi(z_t) - R_\psi(z^\ast) \\
	& \geq \frac{\lambda_1}{2} \ns{z_t - z^\ast}_\ast \; . \label{eq_bound2_2nd_term}
\end{align}
So now using \eqref{eq_bound2_2nd_term} in \eqref{eq_bound2}, the inequality further simplifies to
\begin{align}
	\Ex{\ns{z_{t+1} - z^\ast}_\ast \vert \cF_t} &\leq (1-\eta \lambda_1) \ns{z_t - z^\ast}_\ast + 2 \epsilon \norm{z_t - z^\ast}_\ast + \eta^2 G^2 \; . \label{eq_main_bound}
\end{align}
A stopping stochastic process can be constructed using inequality \eqref{eq_main_bound}, that tracks when $\ns{z_t - z^\ast}_\ast$ reaches the sub-optimal threshold. When the threshold is achieved, the function converges to the neighbourhood of the optimum. In order to establish this fact, we construct a stochastic process $\delta_t$ such that the supermartingale condition is satisfied, i.e. $\Ex{\delta_{t+1} \vert \cF_t} \leq \delta_t$. We would like to find out the threshold from \eqref{eq_main_bound} by finding out the roots for $\norm{z_t-z^\ast}$ when the following holds true
\begin{align}
	&\Ex{\ns{z_{t+1} - z^\ast}_\ast \vert \cF_t} \leq \ns{z_t-z^\ast}_\ast \label{supermartingale_function_inequality} \\
	&(1-\eta \lambda_1) \ns{z_t - z^\ast}_\ast + 2 \epsilon \norm{z_t - z^\ast}_\ast + \eta^2 G^2 \leq \ns{z_t-z^\ast}_\ast \; . \label{required_inequality} 
\end{align}
\eqref{required_inequality} simplifies to the following quadratic inequality
\begin{align}
	-\eta \lambda_1 \ns{z_t - z^\ast}_\ast + 2 \epsilon \norm{z_t - z^\ast}_\ast + \eta^2 G^2 \leq 0 \; . \label{inequality_using_supermartingale}
\end{align}
The roots of the polynomial are
\begin{align}
	\norm{z_t-z^\ast}_\ast &= \frac{-2\epsilon \pm \sqrt{4\epsilon^2 + 4 (\eta \lambda_1)(\eta^2 G^2)}}{-4 \eta \lambda_1} \\
	&= \frac{\epsilon \mp \sqrt{\epsilon^2 + \eta^3 \lambda_1 G^2}}{2 \eta \lambda_1} \; .
\end{align}
The inequality \eqref{inequality_using_supermartingale} opens downwards and since $\norm{z_t-z^\ast}_\ast$ is always positive, hence only positive root will be taken. So the radius of the convergence can be represented as
\begin{align}\label{radius_of_conv}
	\xi := \frac{\epsilon + \sqrt{\epsilon^2 + \eta^3 \lambda_1 G^2}}{2 \eta \lambda_1} \; .
\end{align}
So using \eqref{radius_of_conv}, let's now define the stopping stochastic process $\delta_t$ as
\begin{align}
	\delta_t = \norm{z_t - z^\ast}_\ast \II \Big\{ \min_{u \leq t} -\eta \lambda_1 &\ns{z_u - z^\ast}_\ast + 2 \epsilon \norm{z_u - z^\ast}_\ast + \eta^2 G^2 > \xi \Big\} \; ,
\end{align}
where $\II$ denotes the indicator function. Note $\delta_t \geq 0$ since both the norm and the indicator are non-negative. So when the indicator condition $\min_{u \leq t} -\eta \lambda_1 \ns{z_u - z^\ast}_\ast + 2 \epsilon \norm{z_u - z^\ast}_\ast + \eta^2 G^2 > \xi$ is satisfied, we may compute the square root of \eqref{supermartingale_function_inequality} yielding
\begin{align} \label{sup_martingale}
	\Ex{\delta_{t+1} \vert \cF_t} \leq \delta_t .
\end{align}
Now taking the contrary condition, i.e., $\min_{u \leq t} -\eta \lambda_1 \ns{z_u - z^\ast}_\ast + 2 \epsilon \norm{z_u - z^\ast}_\ast + \eta^2 G^2 \leq \xi$, makes the indicator to be $0$ for the subsequent iterates, since min is used inside the indicator. So, in either case, \eqref{sup_martingale} holds, which implies that $\delta_t \xrightarrow{a.s.} 0$ as $t \rightarrow \infty$. This further implies the fact that either $\lim_{t \rightarrow \infty} \norm{z_t - z^\ast}_\ast - \xi = 0$ or the indicator goes to null for large $t$ almost surely, i.e. $\lim_{t \rightarrow \infty} \II \Big\{ \min_{u \leq t} -\eta \lambda_1 \ns{z_u - z^\ast}_\ast + 2 \epsilon \norm{z_u - z^\ast}_\ast + \eta^2 G^2 > \xi \Big\} = 0$. Hence any of these conditions terminating to null translates to our convergence in the neighbourhood
\begin{align}\label{final_condn_in_z}
	\liminf_{t \rightarrow \infty} \norm{z_t - z^\ast}_\ast \leq \xi \; .
\end{align}
Now using \ref{psi}, we can write
\begin{align}\label{transform_z_to_f}
	\norm{z_t - z^\ast}_\ast = \norm{\nabla \psi(f_t) - \nabla \psi(f^\ast)}_\ast \geq \norm{f_t - f^\ast} \; .
\end{align}
Using \eqref{transform_z_to_f} in \eqref{final_condn_in_z} concludes the proof of theorem \ref{thm_stochastic_sync}.
\end{proof}

Note that the almost sure convergence established for stochastic gradients in theorem \ref{thm_stochastic_sync} is a much stronger bound than that of the pseudo-gradient convergence result given in theorem \ref{thm_sync}. Hence when stochastic gradient computation is feasible, as in Kernel logistic regression for multi-class classification, on running SPPPOT we can almost surely reach to the neighborhood of the optima.


	\footnotesize
	\bibliographystyle{IEEEtran} 
	\bibliography{IEEEabrv,reference}
	
\end{document}